\def\eg{\emph{e.g.}}
\def\ie{\emph{i.e.}}
\def\wrt{\emph{w.r.t.}}
\def\etal{\emph{et al.}}
\def\AP{Appendix}
\newtheorem{defn}{Definition}
\newtheorem{proposition}{Proposition}
\newcommand{\TODO}[1]{\textcolor{blue}{}\textcolor{blue}{\emph{#1}}}
\newcommand{\Term}[1]{\textit{#1}}
\newcommand{\Revise}[1]{\textcolor{black}{}\textcolor{black}{#1}}
\newcommand{\D}[1]{\frac{\partial \mathcal{L}}{\partial #1 }}
\newcommand{\DT}[1]{\frac{\partial \mathcal{L}^{'}}{\partial #1 }}
\newcommand{\x}[1]{\mathbf{x}_{#1}} % the input 
\newcommand{\h}[1]{\mathbf{h}_{#1}} % the representation
\newcommand{\z}[1]{\mathbf{z}_{#1}} % the projection
\newcommand{\X}[1]{\mathbf{X}_{#1}} 
\newcommand{\MH}[1]{\mathbf{H}_{#1}} % the representation
\newcommand{\Z}[1]{\mathbf{Z}_{#1}} % the projection
\newcommand{\NZ}[1]{\widehat{\mathbf{Z}}_{#1}} %normalized output
\newcommand{\CM}[1]{\Sigma_{#1}} % whitening matrix
\newcommand{\WM}[1]{\Phi_{#1}} % whitening matrix
\title{An Investigation into Whitening Loss for Self-supervised Learning} % Replace with your title
\author{
	Xi Weng$^{*1}$, Lei Huang\thanks{equal contribution  ~~ $^{\textrm{\Letter}}$corresponding author (huangleiAI@buaa.edu.cn). This work was partially done while Lei Huang was a visiting scholar at Mohamed bin Zayed University of Artificial Intelligence, UAE.
	 }~$^{~\textrm{\Letter}~1,2}$, Lei Zhao$^{*1}$,\\ \textbf{Rao Muhammad Anwer$^{2}$, Salman Khan$^{2}$, Fahad Shahbaz Khan$^{2}$}\\
 $^{1}$SKLSDE, Institute of Artificial Intelligence,  Beihang University, Beijing, China\\
 $^{2}$Mohamed bin Zayed University of Artificial Intelligence, UAE\\
%	\texttt{huangleiAI@buaa.edu.cn}
}
\begin{document}
\maketitle

\begin{abstract}
A desirable objective in self-supervised learning (SSL) is to avoid feature collapse.  Whitening loss guarantees collapse avoidance by minimizing the distance between embeddings of positive pairs under the conditioning that the embeddings from different views are whitened. In this paper, we propose a framework with an informative indicator to analyze whitening loss, which provides a clue to demystify several interesting phenomena as well as a pivoting point connecting to other SSL methods. We reveal that batch whitening (BW) based methods do not impose whitening constraints on the embedding, but they only require the embedding to be full-rank. This full-rank constraint is also sufficient to avoid dimensional collapse. Based on our analysis, we propose channel whitening with random group partition (CW-RGP), which exploits the advantages of BW-based methods in preventing collapse and avoids their disadvantages requiring large batch size.  Experimental results on ImageNet classification and COCO object detection reveal that the proposed CW-RGP possesses a promising potential for learning good representations. The code is available at \textcolor[rgb]{0.33,0.33,1.00}{https://github.com/winci-ai/CW-RGP}.

\end{abstract}
% is a popular mechanism to ensure the representation do not collapse during SSL.
% In this paper, we observe several interesting  phenomena of whitening loss via an extensive theoretical and empirical study.
\section{Introduction}
\label{sec_intro}
\vspace{-0.1in}
%\TODO{summary the important of Batch Normalization. and Whtineing's impotant intuitively and empirically   such that the input is normally distributed}

Self-supervised learning (SSL) has made significant progress over the last several years~\cite{2019_NIPS_Bachman,2020_CVPR_He,2020_ICML_Chen,2020_NIPS_Grill,2021_CVPR_Chen}, almost reaching the performance of supervised baselines on many downstream tasks~\cite{2021_arxiv_Yixin,2021_arxiv_Ashish,2022_CVPR_Ranasinghe}.
Several recent approaches rely on a joint embedding architecture in which \Revise{a dual pair of networks} are trained to produce similar embeddings for different views of the same image~\cite{2021_CVPR_Chen}. Such methods aim to learn representations that are invariant to transformation of the same input. One main challenge with the joint embedding architectures is how to prevent a \emph{collapse} of representation, in which the two branches ignore the inputs and produce identical and constant output representations~\cite{2020_ICML_Chen,2021_CVPR_Chen}. 

One line of work uses contrastive learning methods that attract different views from the same image (positive pairs) while pull apart different images (negative pairs), which can prevent constant outputs from the solution space~\cite{2018_CVPR_Wu}. While the concept is simple, these methods need large batch size to obtain a good performance~\cite{2020_CVPR_He,2020_ICML_Chen,2021_ICML_Saunshi}. Another line of work tries to directly match the positive targets without introducing negative pairs.  A seminal approach, BYOL~\cite{2020_NIPS_Grill}, shows that an extra predictor and momentum is essential for representation learning.  SimSiam~\cite{2021_CVPR_Chen} further generalizes \cite{2020_NIPS_Grill} by empirically showing that stop-gradient is essential for preventing trivial solutions. Recent works generalize the collapse problem into \emph{dimensional collapse}~\cite{2021_ICCV_Hua,2022_ICLR_Jing}\footnote{This collapse  is also referred to \Term{informational collapse} in~\cite{2022_ICLR_Adrien}.} where  the embedding vectors only span a lower-dimensional subspace and would be highly correlated. Therefore, the embedding vector dimensions would vary together and contain redundant information.
To prevent the \Term{dimensional collapse}, whitening loss is proposed by only minimizing the distance between embeddings of positive pairs under the condition that embeddings from different views are whitened~\cite{2021_ICML_Ermolov,2021_ICCV_Hua}.
%The feature decorrelation/whitening loss methods are proposed  to prevent such kind of collapse~\cite{2021_ICML_Ermolov,2021_ICCV_Hua,2021_NIPS_Zbontar,2022_ICLR_Adrien}.
A typical way is using batch whitening (BW) and imposing the loss on the whitened output~\cite{2021_ICML_Ermolov,2021_ICCV_Hua}, which obtains promising results.

Although whitening loss has theoretical guarantee in avoiding collapse, we experimentally observe that this guarantee depends on which kind of whitening transformation~\cite{2018_AS_Kessy} is used in practice (see Section~\ref{sec:investigation_whitening} for details). This interesting observation challenges the motivations of whitening loss for SSL.
Besides, the motivation of whitening loss is that the whitening operation can remove the correlation among axes~\cite{2021_ICCV_Hua} and a whitened representation ensures the examples scattered in a spherical distribution~\cite{2021_ICML_Ermolov}. Based on this argument, one can use the whitened output as the representation for downstream tasks, but it is not used in practice. To this end, this paper investigates whitening loss and tries to demystify these interesting observations. Our contributions are as follows: 
\begin{itemize}
	\item We decompose the symmetric formulation of whitening loss into two asymmetric losses, where each asymmetric loss requires an online network to match a whitened target. This mechanism  provides a pivoting point connecting to other methods, and a way to understand why certain whitening transformation fails to avoid \Term{dimensional collapse}.
	\item Our analysis shows that BW based methods do not impose whitening constraints on the embedding, but they only require the embedding to be full-rank. This full-rank constraint is also sufficient to avoid \Term{dimensional collapse}.
	\item We propose channel whitening with random group partition (CW-RGP), which exploits the advantages of BW-based method in preventing collapse and avoids their disadvantages requiring large batch size. Experimental results on ImageNet classification and COCO object detection show that CW-RGP has promising potential in learning good representation. 
	
\end{itemize}

%\vspace{-0.1in}

%\vspace{-0.06in}
\section{Related Work}
\label{sec_relatedWork}
\vspace{-0.1in}

%\TODO{contrastive learning}
%\paragraph{Contrastive Learning.} 
A desirable objective in self-supervised learning is to avoid feature collapse.
 \vspace{-0.15in}
\paragraph{Contrastive learning} prevents collapse by  attracting positive samples closer, and spreading negative samples apart~\cite{2018_CVPR_Wu,2019_CVPR_Ye}.
%The main idea of contrastive learning is to attract positive samples closer, and spread negative samples apart~\cite{2018_CVPR_Wu,2019_CVPR_Ye}.  
In these methods, negative samples play an important role and need to be well designed~\cite{2018_arxiv_Oord,2019_NIPS_Bachman,2020_ICML_Henaff}. One typical mechanism is building a memory bank with a momentum encoder to provide consistent negative samples, proposed in MoCos~\cite{2020_CVPR_He}, yielding promising results \cite{2020_CVPR_He,2020_arxiv_Chen,2021_ICCV_Chen,2022_ICLR_Li}. Other works include  SimCLR~\cite{2020_ICML_Chen} addresses that more negative samples in a batch with strong data augmentations perform better. 
Contrastive methods require large batch sizes or memory banks, which tends to be costly, promoting the questions whether negative pairs is necessary. 

\vspace{-0.15in}
\paragraph{Non-contrastive methods}
 aim to accomplish SSL without introducing negative pairs explicitly~\cite{2018_ECCV_Caron,2020_NIPS_Caron,2021_ICLR_Junnan,2020_NIPS_Grill,2021_CVPR_Chen}. One typical way to avoid representational collapse is the introduction
of asymmetric network architecture. BYOL~\cite{2020_NIPS_Grill} appends a predictor after the online network and introduce momentum into the target network.
% \TODO{cite} shows that BYOL is able to achieve competitive performance even without batch statistics.
 SimSiam~\cite{2021_CVPR_Chen} further simplifies BYOL by removing the momentum mechanism, and shows that stop-gradient to target network serves as
an alternative approximation to the momentum encoder. Other progress includes an asymmetric pipeline with a self-distillation loss for Vision
Transformers~\cite{2021_ICCV_Caron}. It remains not clear how the asymmetric network avoids collapse without negative pairs, leaving the debates on batch normalization (BN)~\cite{2020_TR_Fetterman,2020_arxiv_Tian,2020_arxiv_Pierre} and stop-gradient~\cite{2021_CVPR_Chen,2022_ICLR_Zhang}, even though preliminary works have attempted to analyze the training dynamics theoretical with certain assumptions~\cite{2021_ICML_Tian} and build a connection between asymmetric network with contrastive learning methods~\cite{2022_CVPR_Tao}. Our work provides a pivoting point connecting asymmetric network to profound whitening loss in avoiding collapse.  

\vspace{-0.15in}
\paragraph{Whitening loss} 
 has theoretical guarantee in avoiding collapse by minimizing the distance of positive pairs under the conditioning that the embeddings from different views are whitened~\cite{2021_NIPS_Zbontar,2021_ICML_Ermolov,2021_ICCV_Hua,2022_ICLR_Adrien}. One way to obtain whitened output is imposing a whitening penalty as regularization on embedding---the so-called soft whitening, which is proposed in Barlow Twins~\cite{2021_NIPS_Zbontar}, VICReg~\cite{2022_ICLR_Adrien} and CCA-SSG~\cite{2021_NIPS_Hengrui}.  Another way is using batch whitening (BW)~\cite{2018_CVPR_Huang}---the so-called hard whitening,  which is used in W-MSE~\cite{2021_ICML_Ermolov} and Shuffled-DBN~\cite{2021_ICCV_Hua}. We propose a different hard whitening method---channel whitening (CW) that has the same function that ensures all the singular values of transformed output being one for avoiding collapse. But CW is more numerical stable and works better when batch size is small, compared to BW. 
\Revise{Furthermore, our CW with random group partition (CW-RGP) can effectively control the extent of constraint on embedding and obtain better performance in practice.
We note that a recent work ICL~\cite{2022_ICLR_Zhang2} proposes to decorrelate instances, like CW but having several significant differences in technical details. ICL uses "stop-gradient" for the whitening matrix, while CW requires back-propagation through the whitening transformation. Besides,
ICL uses extra pre-conditioning on the covariance and whitening matrices, which is essential for the numerical stability, while  CW does not use extra pre-conditioning and can work well since it encourages the embedding  to be full-rank.}
%ICL uses the invariance loss of Barlow Twins while CW uses the common MSE/normalized MSE. 
% encourages the diagonal elements of covariance matrix to be one

 %encourages the diagonal elements of covariance matrix to be one
%3) ICL uses the invariance loss of Barlow Twins which encourages the diagonal elements of covariance matrix to be one (like BN)
%3) ICL uses the invariance loss of Barlow Twins which encourages the diagonal elements of covariance matrix to be one (like BN), while CW uses the common MSE which only minimizes the distances between different views without extra constraints.

 %However, Compared to this concurrent work, we highlight that our CW with random group partition (CW-RGP) can effectively control the extent of constraint on embedding and obtain better performance than CW only in practice. Furthermore, our analyses clarify several interesting phenomena of whitening loss and reveal that hard whitening methods do not impose whitening constraints on embedding, but only require the embedding to be full-rank.

\section{Exploring Whitening Loss for SSL}
\begin{wrapfigure}[9]{r}{7.8cm}
	\vspace{-0.40in}
	\centering
	\includegraphics[width=7.8cm]{./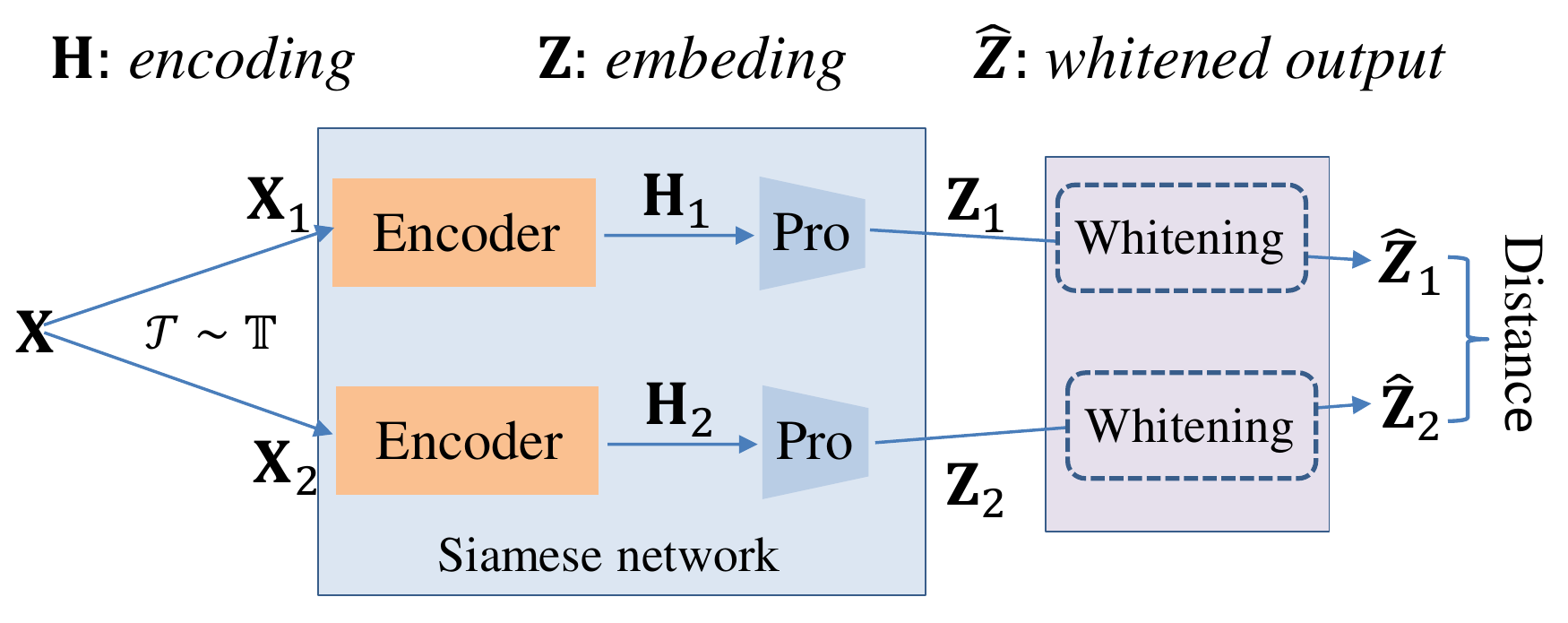}
	\vspace{-0.3in}
	\caption{The basic notations for SSL used in this paper. }
	\label{fig:architecture}
	%\vspace{-0.15in}
\end{wrapfigure}
%\subsection{Notation and Preliminaries}
	\vspace{-0.05in}
\subsection{Preliminaries}
\vspace{-0.05in}
\label{sec_method}
%We consider the self-supervised learning methods using siamese architecture in which two networks with shared weights are trained to produce similar embeddings for different views of the same image (Figure ** ). 
Let $\x{}$ denote the input sampled uniformly from a set of images $\mathbb{D}$, and $\mathbb{T}$  denote the set of data transformations available for augmentation.  We consider the Siamese network $f_{\theta}(\cdot)$ parameterized by $\theta$. It takes as input two randomly augmented views, $\x{1} = \mathcal{T}_1(\x{}) $ and $\x{2}=\mathcal{T}_2(\x{})$, where $\mathcal{T}_{1,2} \in \mathbb{T}$. The network $f_{\theta}(\cdot)$ is trained with an objective function  that minimizes the distance between  embeddings obtained from different views of the same image:
	\begin{eqnarray}
		\label{eqn:objective}
		\mathcal{L}(\x{},\theta)= \mathbb{E}_{\x{} \sim \mathbb{D},~\mathcal{T}_{1,2} \sim \mathbb{T}}~~ \ell \big(f_{\theta} (\mathcal{T}_1(\x{})), f_{\theta}(\mathcal{T}_2(\x{}))\big).
	\end{eqnarray}
where $\ell(\cdot, \cdot)$ is a loss function. 
In particular, the Siamese network usually consists of an encoder $E_{\theta_e}(\cdot)$ and a projector $G_{\theta_g}(\cdot)$.  Their outputs $\h{}=E_{\theta_e}(\mathcal{T}(\x{}))$ and $\z{}=G_{\theta_g}(\h{})$ are referred to as \Term{encoding} and \Term{embedding}, respectively. We summarize the notations and use the corresponding capital letters denoting mini-batch data in Figure~\ref{fig:architecture}.  Under this notation, we have $f_{\theta}(\cdot) = G_{\theta_g}(E_{\theta_e}(\cdot))$ with learnable parameters $\theta=\{\theta_e, \theta_g\}$. The \Term{encoding} $\h{}$ is usually used as representation for evaluation by either training a linear classifier~\cite{2020_CVPR_He} %(linear evaluation protocol~\cite{2020_CVPR_He}) 
or transferring to downstream tasks. This is due to that $\h{}$ is shown to obtain significantly better performance than the \Term{embedding} $\z{}$~\cite{2020_ICML_Chen,2021_CVPR_Chen}.

The mean square error (MSE) of   $L_2-$normalized vectors is usually used as the loss function~\cite{2021_CVPR_Chen}:
	\begin{eqnarray}
	\label{eqn:loss}
	 \ell(\z{1}, \z{2}) = \|\frac{\z{1}}{\|\z{1} \|_2}- \frac{\z{2}}{\|\z{2} \|_2}  \|_2^2,
\end{eqnarray} 
where $\| \cdot \|_2$ denotes the $L_2$ norm. This loss is also equivalent to the negative cosine similarity, up to a scale of $\frac{1}{2}$ and an optimization irrelevant constant. 

\vspace{-0.1in}
\paragraph{Collapse and Whitening Loss.} While minimizing Eqn.~\ref{eqn:objective}, a trivial solution known as \Term{collapse} could occur such that $f_{\theta} (\x{}) \equiv \mathbf{c},~\forall \x{} \in \mathbb{D}$.
 The state of \Term{collapse} will provide no gradients for learning and offer no information for  discrimination. Moreover, a weaker collapse condition called \Term{dimensional collapse} can be easily arrived, for which the projected features collapse into a low-dimensional manifold. As illustrated in~\cite{2021_ICCV_Hua}, dimensional collapse is associated with strong correlations between axes, which motivates the use of  whitening method in avoiding the dimensional collapse.  The general idea of  whitening loss~\cite{2021_ICML_Ermolov} is to minimize Eqn.~\ref{eqn:objective}, under the condition that  \Term{embeddings} from different views are whitened, which can be formulated as\footnote{The dual view formulation can be extended to $s$ different views, 
 %and the loss is used among different pairs of views, 
 as shown in~\cite{2021_ICML_Ermolov}.}: 
	{\setlength\abovedisplayskip{3pt}
	\setlength\belowdisplayskip{3pt}
\begin{eqnarray}
	\label{eqn:W-loss}
	\min_{\theta} \mathcal{L}(\x{}; \theta) = \mathbb{E}_{\x{} \sim \mathbb{D}, ~\mathcal{T}_{1,2} \sim \mathbb{T}}~  \ell(\z{1}, \z{2}), \nonumber \\
	s.t.~ cov(\z{i}, \z{i}) = \mathbf{I},\, i\in \{1,2\}.
\end{eqnarray}
}
% we use the mini-batch input with size of $m$. Let $\X{} \in \mathbb{R}^{d_x \times m }$,  $\MH{} \in \mathbb{R}^{d_h \times m }$, $\Z{} \in \mathbb{R}^{d_z \times m }$ and  $\NZ{} \in \mathbb{R}^{d_z \times m}$  denote the mini-batch input, representation, embedding and whitened output, respectively.
\hspace{-0.05in}Whitening loss provides theoretical guarantee in avoiding (dimensional) collapse, since the embedding is whitened with all axes decorrelated~\cite{2021_ICML_Ermolov,2021_ICCV_Hua}.  While it is difficult to directly solve the problem of Eqn.~\ref{eqn:W-loss}, Ermolov \etal~\cite{2021_ICML_Ermolov} propose to whiten the mini-batch embedding $\Z{} \in \mathbb{R}^{d_z \times m }$ using batch whitening (BW)~\cite{2018_CVPR_Huang,2019_ICLR_Siaroin} and impose the loss on the whitened output $\NZ{} \in \mathbb{R}^{d_z \times m}$, given the mini-batch inputs $\X{}$ with size of $m$, as follows:
		{\setlength\abovedisplayskip{3pt}
	\setlength\belowdisplayskip{3pt}
\begin{eqnarray}
	\label{eqn:W-Batch-loss}
	\min_{\theta} \mathcal{L}(\X{}; \theta) = \mathbb{E}_{\X{} \sim \mathbb{D}, ~\mathcal{T}_{1,2} \sim \mathbb{T}}~   \| \NZ{1}-\NZ{2} \|^2_F  \nonumber \\
	with~ \NZ{i}=\WM{}(\Z{i}), \, i\in \{1,2\},
%	\min_{\theta} \mathcal{L}(\x; \theta) = \mathbb{E}_{x \sim P(x), i,j \sim T} \| \NZ{i}-\NZ{j} \|^2.\\
%	s.t. \NZ{i}=\phi(\Z{i}), i=1,...,T
\end{eqnarray}
}
\hspace{-0.05in}where $\Phi(\cdot)$ denotes the whitening transformation over mini-batch data. 
\vspace{-0.1in}
\paragraph{Whitening Transformations.}
There are an infinite number of possible whitening matrices, as shown in \cite{2018_AS_Kessy,2018_CVPR_Huang}, since any whitened data with a rotation is still whitened. For simplifying notation, we assume $\Z{}$ is centered by $\Z{}:=\Z{} (\mathbf{I}-\frac{1}{m} \mathbf{1} \mathbf{1}^T)$. Ermolov \etal~\cite{2021_ICML_Ermolov} propose W-MSE that uses Cholesky decomposition (CD) whitening: $\WM{CD} (\Z{})=\mathbf{L}^{-1} \Z{}$ in Eqn.~\ref{eqn:W-Batch-loss}, where $\mathbf{L}$ is a lower triangular matrix from the Cholesky decomposition, with $\mathbf{L} \mathbf{L}^T=\CM{}$. 
Here $\Sigma=\frac{1}{m} \Z{} \Z{}^T$ is the covariance matrix of the \Term{embedding}.  
%This kind of whitening works by recursively decorrelating the current dimension over the previous decorrelated ones. 
Hua~\etal~\cite{2021_ICCV_Hua} use zero-phase component analysis (ZCA) whitening~\cite{2018_CVPR_Huang} in Eqn.~\ref{eqn:W-Batch-loss}: $\WM{ZCA}=\mathbf{U} \Lambda^{-\frac{1}{2}} \mathbf{U}^T$, where  $\Lambda=\mbox{diag}(\lambda_1, \ldots,\lambda_{d_z})$ and $\mathbf{U}=[\mathbf{u}_1, ...,
\mathbf{u}_{d_z}]$ are the eigenvalues and associated eigenvectors of $\Sigma$, \ie, $ \mathbf{U}
\Lambda \mathbf{U}^T =\Sigma$. 
Another famous whitening is principal components analysis (PCA) whitening: $\WM{PCA}= \Lambda^{-\frac{1}{2}} \mathbf{U}^T$~\cite{2018_AS_Kessy,2018_CVPR_Huang}.

%\TODO{raise the problem}
\vspace{-0.06in}
\subsection{Empirical Investigation on Whitening Loss}
\label{sec:investigation_whitening}
\vspace{-0.1in}
In this section, we conduct experiments to investigate the effects of different whitening transformations $\WM{}(\cdot)$ used in Eqn.~\ref{eqn:W-Batch-loss} for SSL. Besides, we investigate the performances of different features (including \Term{encoding} $\MH{}$, \Term{embedding} $\Z{}$ and the whitened output $\NZ{}$) used as representation for evaluation. For illustration, we first define the \Term{rank} and \Term{stable-rank}~\cite{2018_book_Vershynin} of a matrix as follows:
\begin{defn} 
	Given a matrix $\mathbf{A} \in \mathbb{R}^{d \times m}, d \le m$,  we denote $\{\lambda_1, ..., \lambda_d\}$ the singular values of $\mathbf{A}$ in a descent order with convention $\lambda_1>0$. The \textbf{rank} of $\mathbf{A}$  is the number of its non-zero singular values, denoted as $Rank (\mathbf{A})= \sum_{i=1}^{d} \mathbb{I} (\lambda_i>0) $, where $\mathbb{I} (\cdot)$ is the indicator function. The \textbf{stable-rank} of $\mathbf{A}$ is denoted as $r(\mathbf{A})=\frac{\sum_{i=1}^{d}\lambda_i}{\lambda_1}$.   
\end{defn}
\vspace{-0.05in}
By definition, $Rank (\mathbf{A})$ can be a good indicator to evaluate the extent of \Term{dimensional collapse} of $\mathbf{A}$,  and $r(\mathbf{A})$ can be an indicator to evaluate the extent of whitening of $\mathbf{A}$.  
It can be demonstrated that $r(\mathbf{A}) \le Rank (\mathbf{A}) \le d$~\cite{2018_book_Vershynin}. Note that if $\mathbf{A}$ is fully whitened with covariance matrix $\mathbf{A} \mathbf{A}^T = m\mathbf{I}$, we have $r(\mathbf{A})=Rank(\mathbf{A}) =d$.  
We also define normalized rank as $\widehat{Rank}(\mathbf{A})=\frac{Rank(\mathbf{A})}{d}$ and normalized stable-rank as $\widehat{r}(\mathbf{A})=\frac{r(\mathbf{A})}{d}$, for comparing the extent of \Term{dimensional collapse} and whitening of matrices with different dimensions,  respectively.

\vspace{-0.1in}
 \paragraph{PCA Whitening Fails to Avoid Dimensional Collapse.}
% We conduct experiments on CIFAR-10 under the standard SSL setup .
  We compare the effects of ZCA, CD, PCA transformations for whitening loss, evaluated on CIFAR-10 using the standard setup for SSL (see Section~\ref{sec:Experiments} for details). Besides, we also provide the result of batch normalization (BN) that only performs standardization without decorrelating the axes, and the `Plain' method that imposes the loss directly on \Term{embedding}. 
 From Figure~\ref{fig:whitening_transformation}, we observe that naively training a Siamese network (`Plain') results in collapse both on the \Term{embedding} (Figure~\ref{fig:whitening_transformation}(c)) and \Term{encoding} (Figure~\ref{fig:whitening_transformation}(d)), which  significantly hampers the performance (Figure~\ref{fig:whitening_transformation}(a)), although its training loss becomes close to zero (Figure~\ref{fig:whitening_transformation}(b)). 
 We also observe that an extra BN imposed on the \Term{embedding} prevents collapse to a point. However, it suffers from the dimensional collapse where the rank of \Term{embedding} and \Term{encoding} are significantly low, which also hampers the performance. ZCA and CD whitening both maintain high rank of  \Term{embedding} and \Term{encoding} by decorrelating the axes, ensuring high linear evaluation accuracy. However, we note that PCA whitening shows significantly different behaviors: PCA whitening cannot decrease the loss and even cannot avoid the dimensional collapse, which also leads to significantly downgraded performance. 
 This interesting observation challenges the motivations of whitening loss for SSL.
 %This observation is interesting, since PCA whitening also ensure the whitened output in the formulation Eqn.\TODO{ref}, and it should prevent the collapse. 
 We defer the analyses and illustration in Section~\ref{sec:analyses_framework}.

\begin{figure}[tp]
	\vspace{-0.16in}
	\centering
	\hspace{-0.1in}	\subfigure[]{
		\begin{minipage}[c]{.23\linewidth}
			\centering
			\includegraphics[width=3.4cm]{./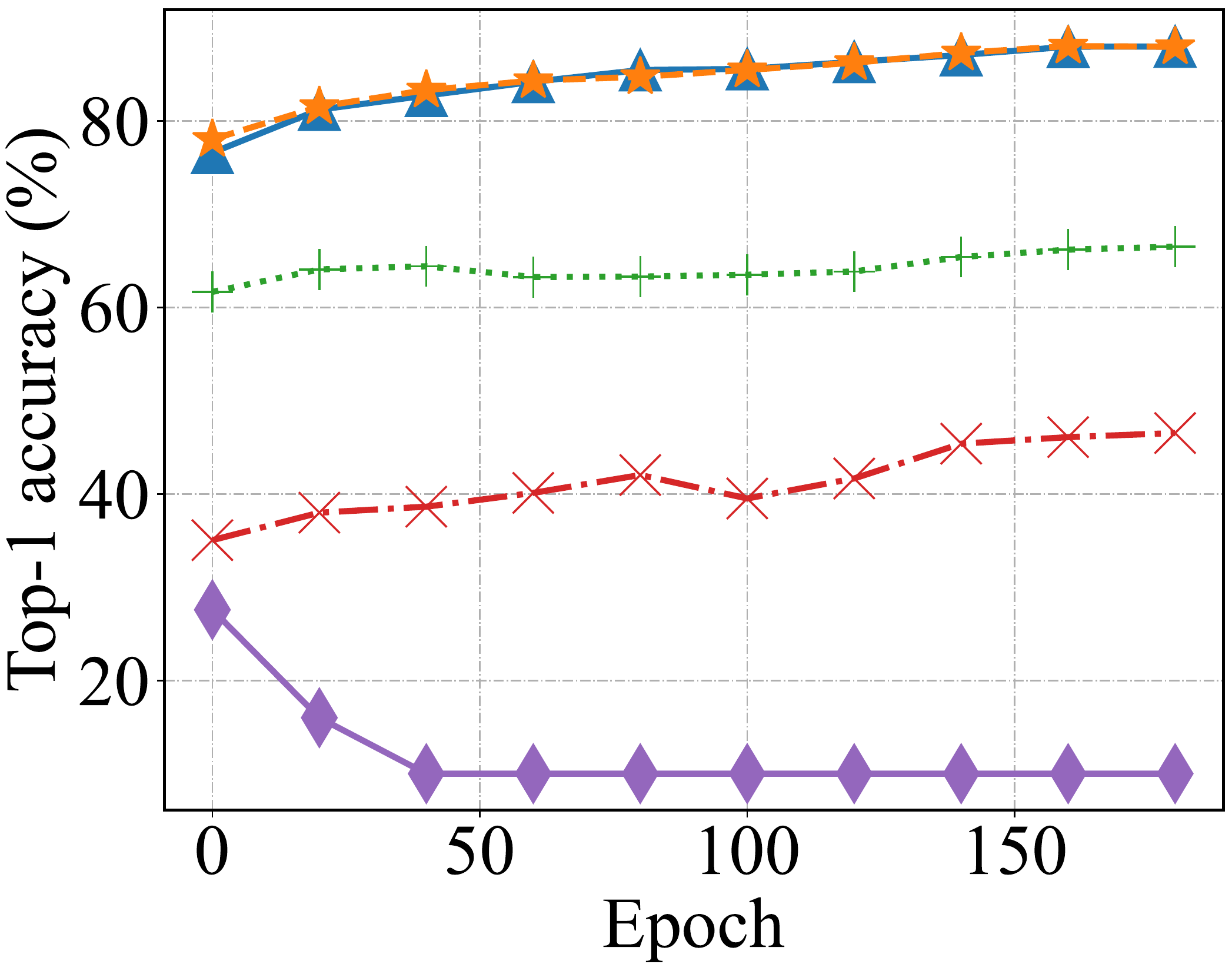}
		\end{minipage}
	}
	\hspace{0in}	\subfigure[]{
		\begin{minipage}[c]{.23\linewidth}
			\centering
			\includegraphics[width=3.4cm]{./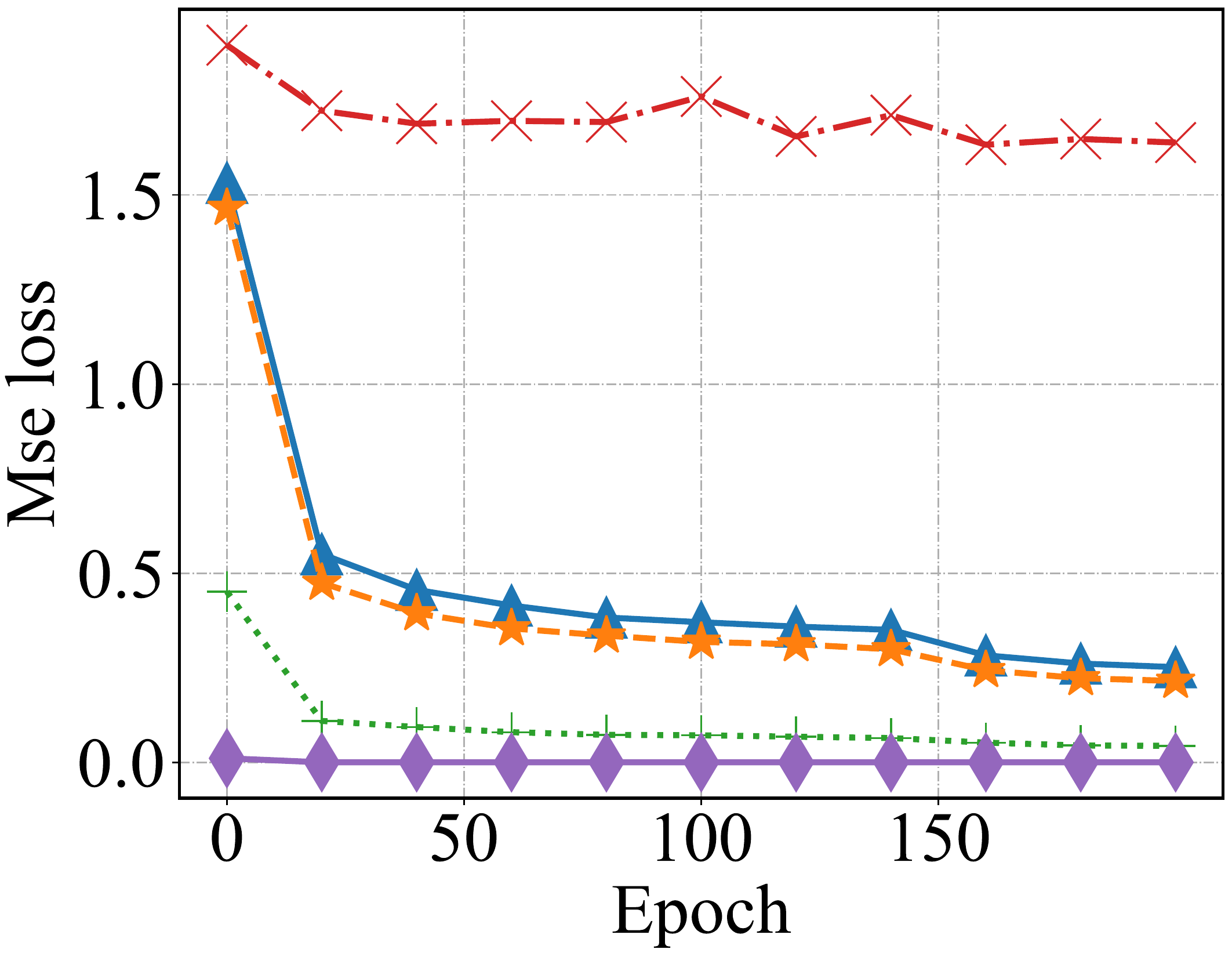}
		\end{minipage}
	}
	\hspace{0in}	\subfigure[]{
		\begin{minipage}[c]{.23\linewidth}
			\centering
			\includegraphics[width=3.4cm]{./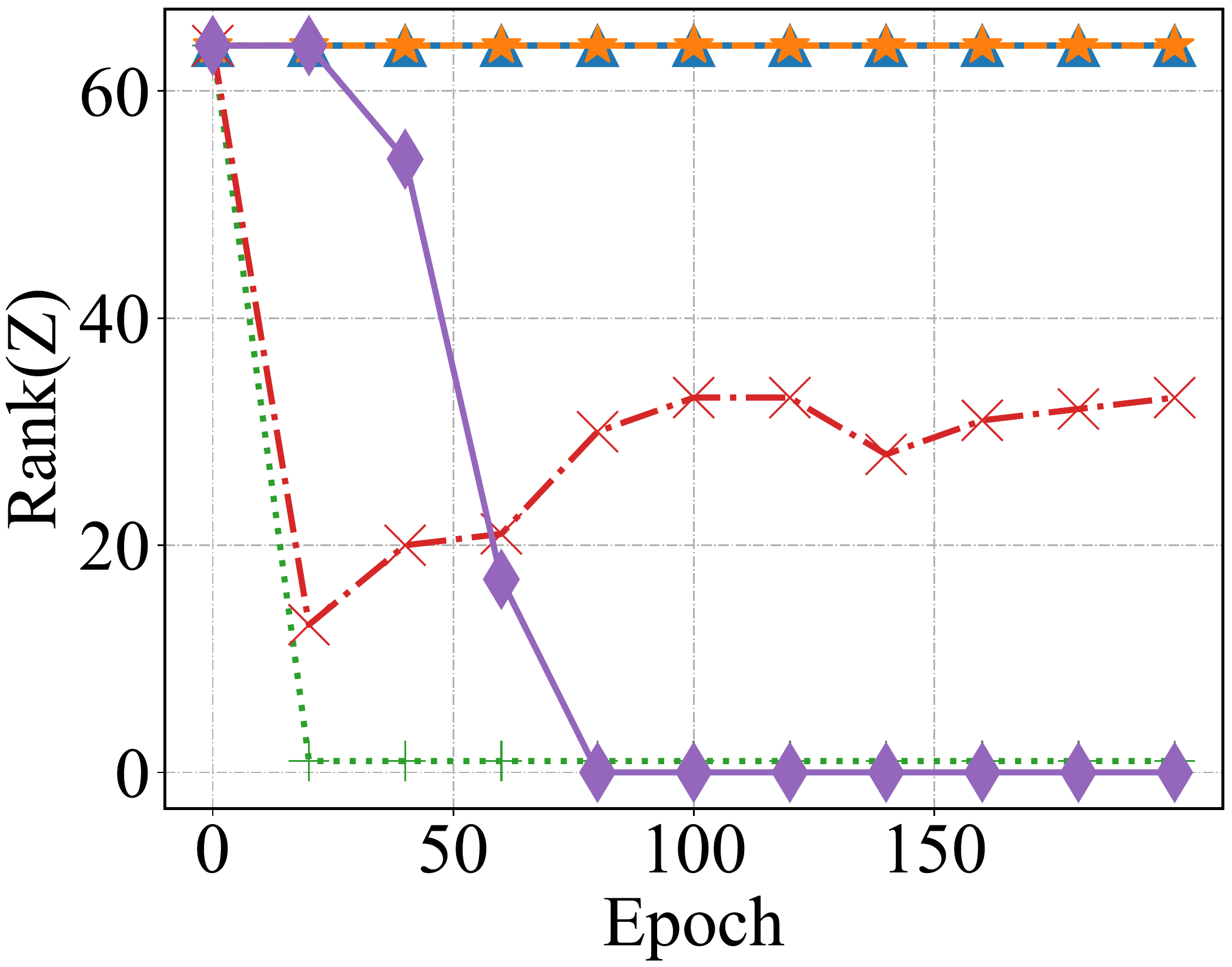}
		\end{minipage}
	}
	\hspace{-0in}	\subfigure[]{
		\begin{minipage}[c]{.23\linewidth}
			\centering
			\includegraphics[width=3.4cm]{./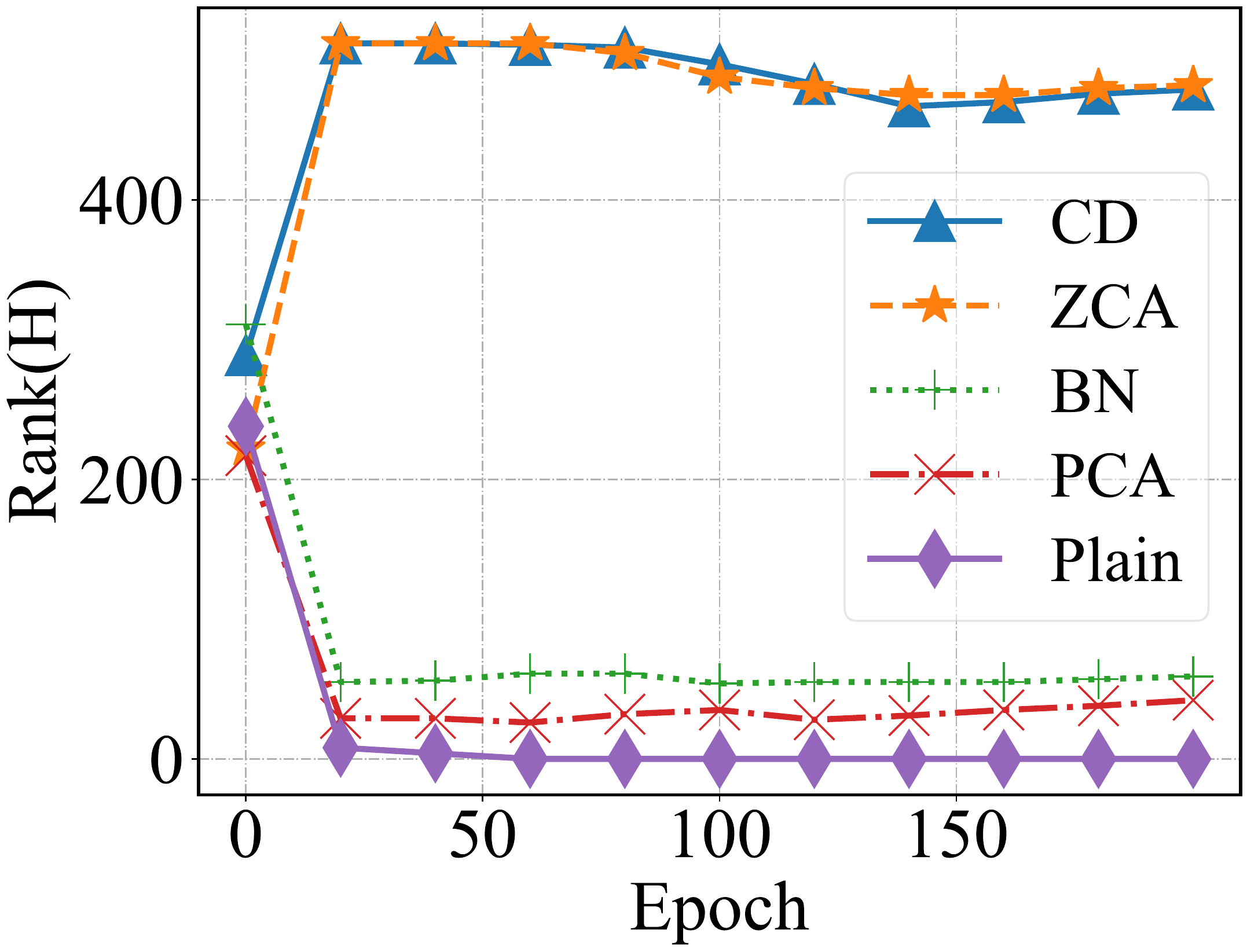}
		\end{minipage}
	}
	\vspace{-0.1in}
	\caption{Effects of different whitening transformations for SSL. We use the ResNet-18 as the encoder (dimension of representation is 512.), a two layer MLP with ReLU and BN appended as the projector (dimension of embedding is 64). The model is trained on CIFAR-10 for 200 epochs with batch size of 256 and standard data argumentation, using Adam optimizer~\cite{2014_CoRR_Kingma} (more details of experimental setup please see \TODO{\AP}~\ref{sec:analytical experiments}). We show (a) the linear evaluation accuracy; (b)  the training loss; (c) the rank of \Term{embedding};  (d) the rank of \Term{encoding}. 
		%	(we use the \TODO{mini-batch data})
	}
	\label{fig:whitening_transformation}
		\vspace{-0.2in}
\end{figure}

\begin{figure}[tp]
%	\vspace{-0.16in}
	\centering
	\hspace{-0.1in}	\subfigure[]{
		\begin{minipage}[c]{.32\linewidth}
			\centering
			\includegraphics[width=3.8cm]{./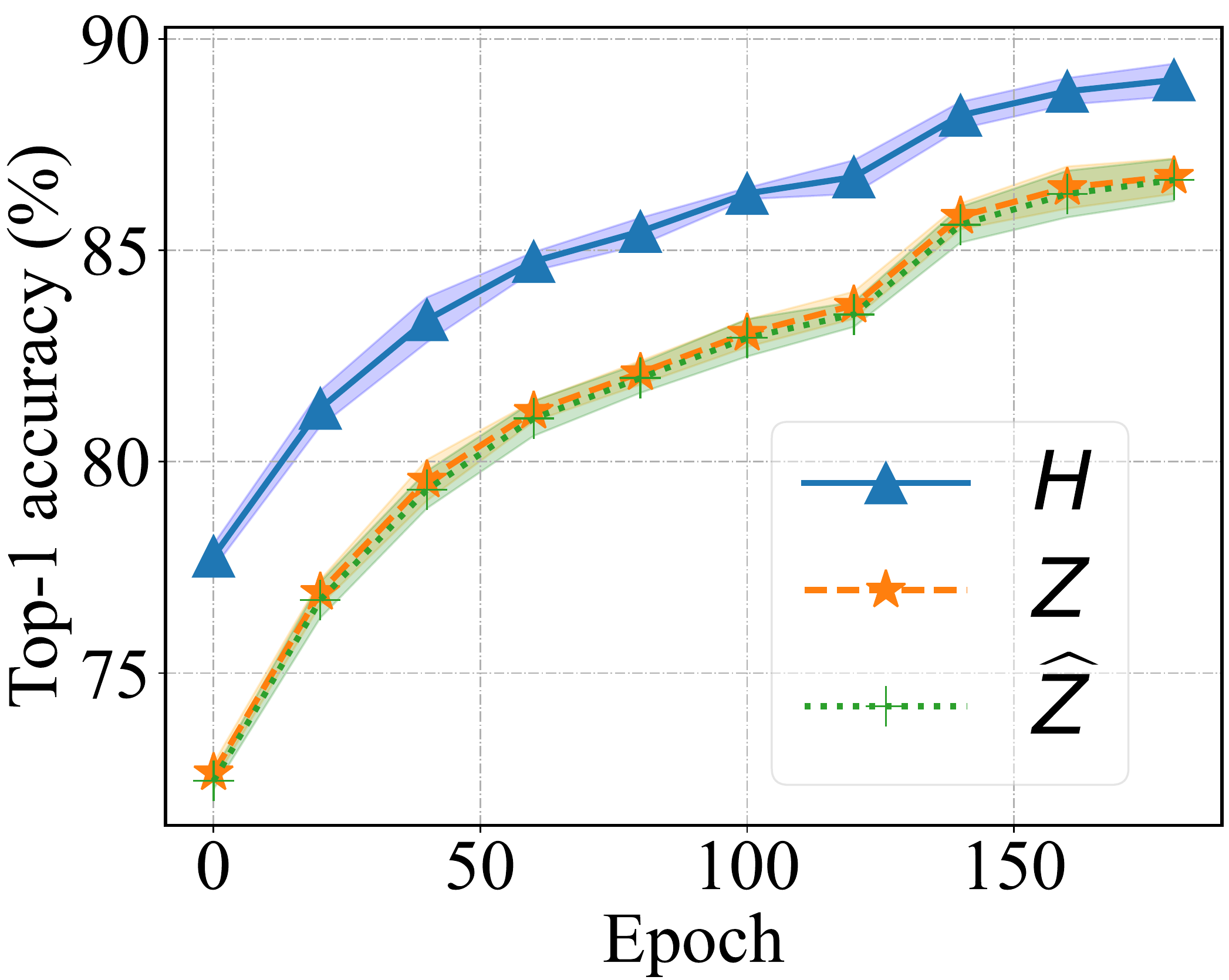}
		\end{minipage}
	}
	\hspace{-0.1in}	\subfigure[]{
		\begin{minipage}[c]{.32\linewidth}
			\centering
			\includegraphics[width=3.8cm]{./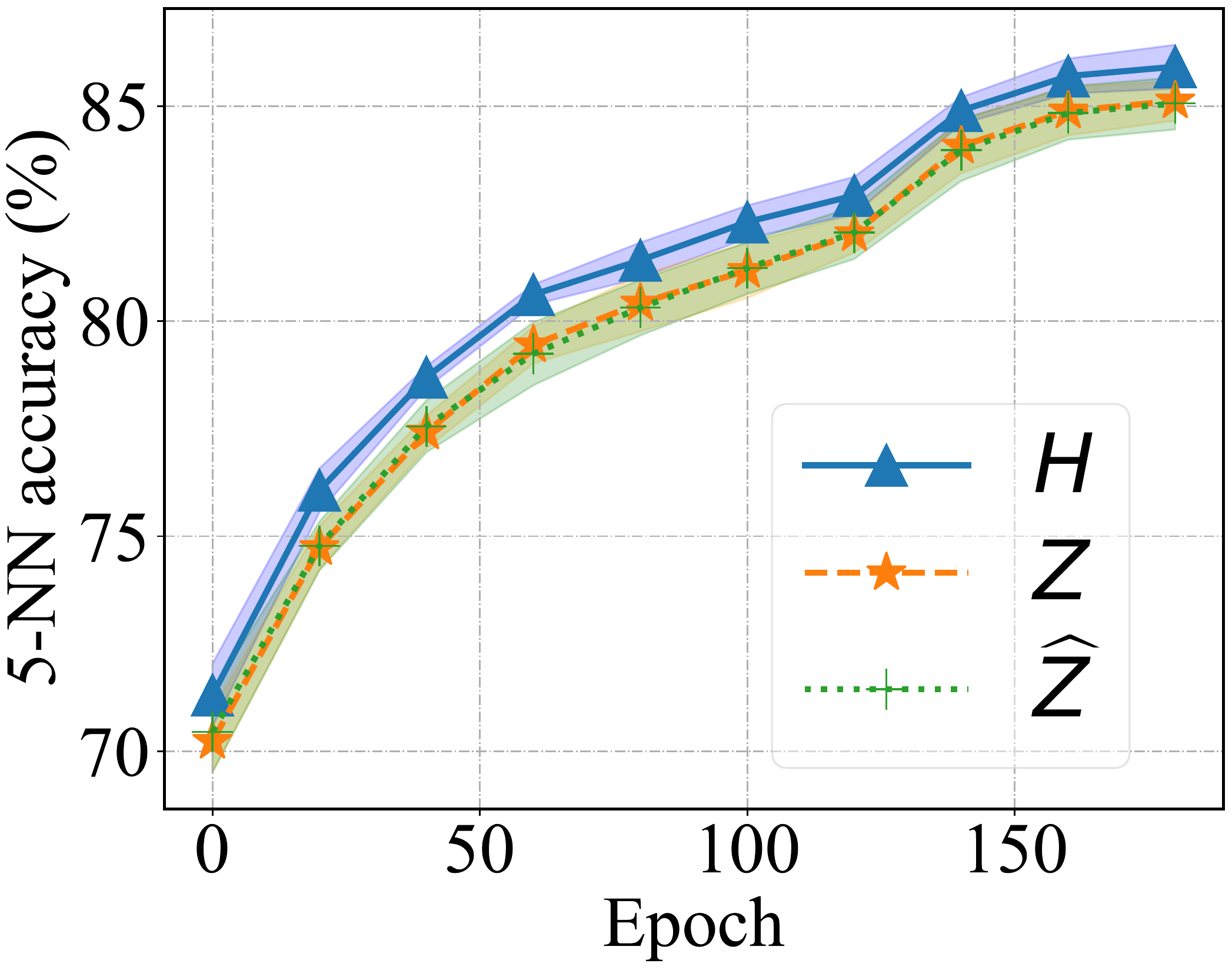}
		\end{minipage}
	}
	\hspace{-0.1in}	\subfigure[]{
		\begin{minipage}[c]{.32\linewidth}
			\centering
			\includegraphics[width=3.8cm]{./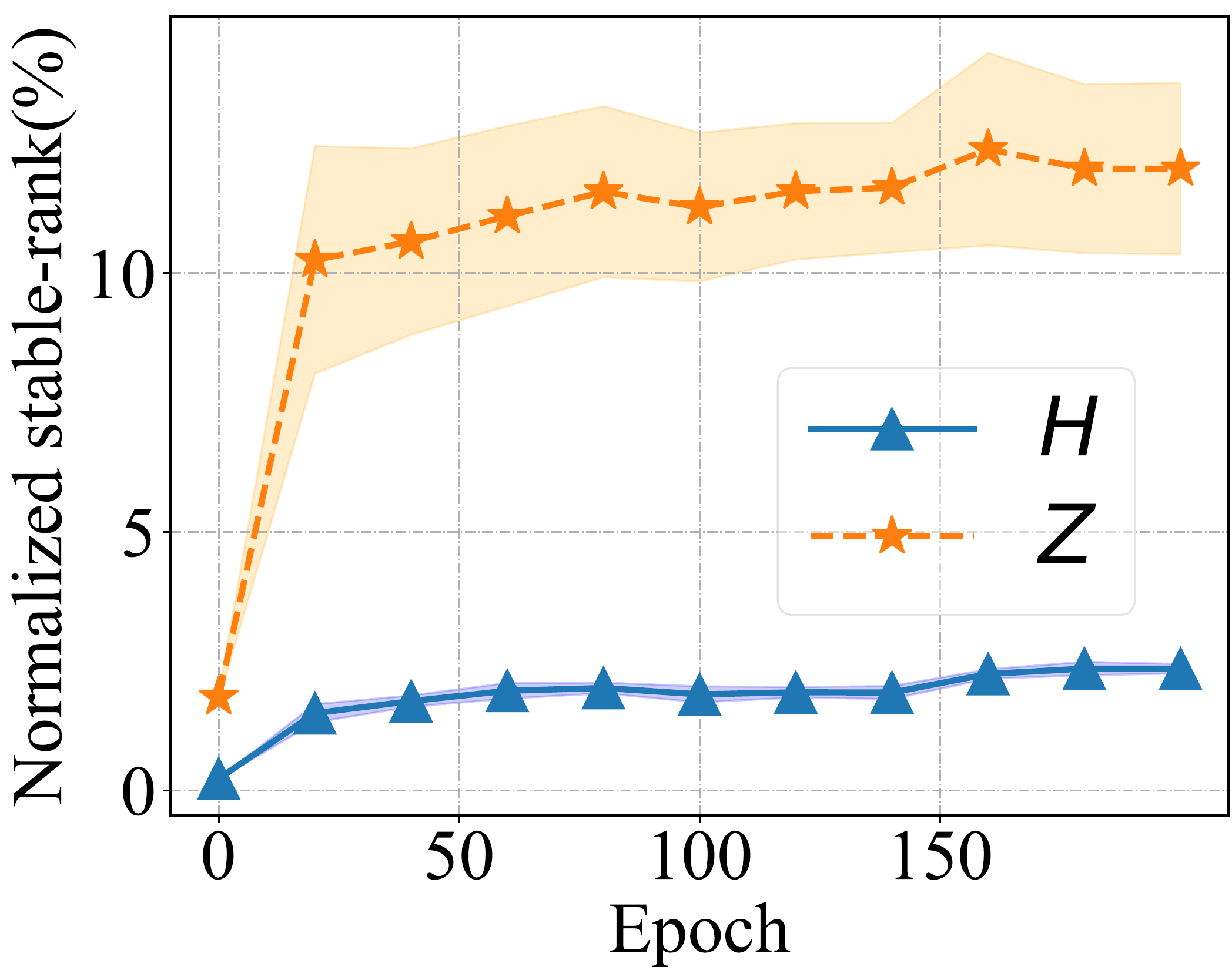}
		\end{minipage}
	}
	\vspace{-0.12in}
	\caption{Comparisons of features when using 
		%$\h{}$, $\z{}$ and $\Nz{}$
		\Term{encoding}	$\MH{}$, \Term{embedding} $\Z{}$ and whitened output $\NZ{}$ respectively. We follow the same experimental setup as Figure~\ref{fig:whitening_transformation}.
		%	we compare the results of representation, embedding and whitened output \TODO{Three curvature, representation, embedding, mini-batch-whitened output. Note that we put the mini-batch based whitened output, and put the population statistics in the \TODO{\SM} }.   
		We show (a) the linear evaluation accuracy; (b) the kNN accuracy; (c) the normalized stable-rank for comparing the extent of whitening (note that the normalized stable-rank of $\NZ{}$ is always $100\%$ during training and we omit it for clarity).  
		The results are averaged by five random seeds, with standard deviation shown using shaded region. 
	}
	\label{fig:featreus}
	\vspace{-0.18in}
\end{figure}
\vspace{-0.1in}
\paragraph{Whitened Output is not a Good Representation.}
As introduced before, the motivation of whitening loss for SSL is that the whitening operation can remove the correlation among axes~\cite{2021_ICCV_Hua} and a whitened representation ensures that the examples scattered in a spherical distribution~\cite{2021_ICML_Ermolov}, which is sufficient to avoid collapse. Based on this argument, one should use the whitened output $\NZ{}$ as the representation for downstream tasks, rather than the \Term{encoding} $\MH{}$ that is commonly used. This raises questions that whether $\MH{}$ is well whitened and whether the whitened output is a good feature.
We conduct experiments to compare the performances of whitening loss, when using 
%$\h{}$, $\z{}$ and $\Nz{}$
$\MH{}$, $\Z{}$ and $\NZ{}$
 as representations for evaluation respectively. 
 The results are shown in Figure~\ref{fig:featreus}. 
We observe that using whitened output  $\NZ{}$ as a representation has significantly worse performance than using $\MH{}$. 
 Furthermore, we find that the normalized stable rank of  $\MH{}$ is significantly smaller than $100\%$, which suggests that $\MH{}$ is not well whitened.  These results show that the whitened output could not be a good representation.
 % and there thus remains room to further investigate how the extent of whitening relates to the effects of representation. 

\vspace{-0.06in}
\subsection{Analysing Decomposition of Whitening Loss}
\label{sec:analyses_framework}
\vspace{-0.06in}
For clarity, we use the mini-batch input with size of $m$. 
%Let $\X{} \in \mathbb{R}^{d_x \times m }$,  $\MH{} \in \mathbb{R}^{d_h \times m }$, $\Z{} \in \mathbb{R}^{d_z \times m }$ and  $\NZ{} \in \mathbb{R}^{d_z \times m}$  denote the mini-batch input, encoding, embedding and whitened output, respectively. 
%We call a matrix suffer collapsed dimension if it has zero eigenvalues.  
%We call a matrix is full ranking matrix if all of its singular values larger than 0. 
%Furthermore, we can a matrix is whitened (strongly full ranking) if all of its singular values has the same value. 
Given one mini-batch input $\X{}$ with two augmented views, Eqn.~\ref{eqn:W-Batch-loss} can be formulated as:
\begin{eqnarray}
	\label{eqn:MB-W-Loss}
	\mathcal{L} (\X{}) = \frac{1}{m} \| \NZ{1} - \NZ{2}  \|_{F}^2.
\end{eqnarray}
Let us consider a proxy loss described as:
		{\setlength\abovedisplayskip{3pt}
	\setlength\belowdisplayskip{3pt}
\begin{eqnarray}
	\label{eqn:W-MSE-loss-prox}
	\mathcal{L}^{'}(\X{}) =\underbrace{ \frac{1}{m} \| \NZ{1} - (\NZ{2})_{st}  \|_{F}^2}_{\mathcal{L}^{'}_1 }
	 + \underbrace{\frac{1}{m} \| (\NZ{1})_{st} - \NZ{2}  \|_{F}^2}_{\mathcal{L}^{'}_2},	 	
\end{eqnarray}
}
where $(\cdot)_{st}$ indicates the  stop-gradient operation. It is easy to demonstrate that $\D{\theta} = \DT{\theta}$ (see \TODO{\AP}~\ref{sec:proof b.1} for proof). That is, the optimization dynamics of $\mathcal{L}$ is equivalent to $\mathcal{L}^{'}$. 
% We can see the tow term of $\mathcal{L}^{'}$ can be viewed as two BYOL-like problem. 
 By looking into the first term of Eqn.~\ref{eqn:W-MSE-loss-prox}, we have:
\begin{eqnarray}
	\label{eqn:W-MSE-loss-prox-1}
	\mathcal{L}^{'}_{1} = \frac{1}{m} \| \phi(\Z{1}) \Z{1} - (\NZ{2})_{st}  \|_{F}^2.
\end{eqnarray}
Here, we can view $\phi(\Z{1})$ as a predictor that depends on $\Z{1}$ during  forward propagation, and $\NZ{2}$ as a whitened target with $r(\NZ{2}) = Rank(\NZ{2})=d_z$. In this way, we find that minimizing $\mathcal{L}_1^{'}$ only requires the embedding $\Z{1}$ being full-rank with $Rank(\NZ{1})=d_z$, as stated by following proposition.
\Revise{
\begin{proposition}
		\vspace{-0.1in}
	\label{pro1}
	Let $\mathbb{A}= \arg{min}_{\Z{1}} \mathcal{L}^{'}_{1} (\Z{1}) $. We have that $\mathbb{A} $ is not an empty set, and  $\forall \Z{1} \in \mathbb{A}$, $\Z{1}$ is full-rank. Furthermore, for any $\{\sigma_i\}_{i=1}^{d_z}$ with $\sigma_1 \ge \sigma_2 \ge,...,\sigma_{d_z}>0$, we construct  $\widetilde{\mathbb{A}}=\{\Z{1}| \Z{1}= \mathbf{U}_2$ diag($\sigma_1, \sigma_2,...,\sigma_{d_z}$)  $\mathbf{V}_2^T$, where $\mathbf{U}_2 \in \mathbb{R}^{_{d_z} \times _{d_z}}$ and  $\mathbf{V}_2 \in \mathbb{R}^{m \times _{d_z}} $ are from the singular value decomposition of $\NZ{2}$, \ie, $\mathbf{U}_2 (\sqrt{m} \mathbf{I}) \mathbf{V}_2^T = \NZ{2}$.  When we use ZCA whitening, we have $\widetilde{\mathbb{A}} \subseteq \mathbb{A}. $
\end{proposition}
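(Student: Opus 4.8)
The plan is to extract everything from the single identity $\mathcal{L}^{'}_1(\Z{1})=\frac1m\|\NZ{1}-(\NZ{2})_{st}\|_F^2\ge 0$, in which the target $(\NZ{2})_{st}$ is held fixed, together with the rigidity of the whitening map $\phi(\cdot)$. Equality to $0$ holds precisely when $\NZ{1}=\NZ{2}$, so the proposition splits into: (a) no rank-deficient $\Z{1}$ can achieve $\NZ{1}=\NZ{2}$, whence every minimizer is full-rank and $0$ is a lower bound attained only at full rank; and (b) the value $0$ is in fact attained --- more precisely, every member of the explicitly constructed family $\widetilde{\mathbb{A}}$ achieves it when $\phi$ is ZCA --- so that $\mathbb{A}\ne\emptyset$ and indeed $\mathbb{A}=\{\Z{1}:\NZ{1}=\NZ{2}\}\supseteq\widetilde{\mathbb{A}}$.

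For (a): recall the standing hypothesis that the target is genuinely whitened, $\NZ{2}\NZ{2}^T=m\mathbf{I}$, which is the content of $r(\NZ{2})=Rank(\NZ{2})=d_z$ and relies on $\Z{2}$ being full-rank. For any $\Z{1}$ and any whitening map (whose output $\phi(\Z{1})$ is $d_z\times d_z$), $Rank(\NZ{1})=Rank(\phi(\Z{1})\Z{1})\le Rank(\Z{1})$; so if $\Z{1}$ is rank-deficient then $\NZ{1}\NZ{1}^T$ is singular, hence $\ne m\mathbf{I}=\NZ{2}\NZ{2}^T$, so $\NZ{1}\ne\NZ{2}$ and $\mathcal{L}^{'}_1(\Z{1})>0$; therefore $\Z{1}\notin\mathbb{A}$. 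For (b) and the $\widetilde{\mathbb{A}}$ statement I specialize to ZCA, written as $\phi_{ZCA}(\Z{})=\Sigma^{-1/2}$ with $\Sigma=\frac1m\Z{}\Z{}^T$: for an SVD $\Z{}=\mathbf{U}\mathbf{D}\mathbf{V}^T$ this gives $\NZ{}=\Sigma^{-1/2}\Z{}=\sqrt m\,\mathbf{U}\mathbf{V}^T$, i.e. ZCA returns the orthogonal polar factor and discards the singular values. Now any $\Z{1}=\mathbf{U}_2\,\mathrm{diag}(\sigma_1,\dots,\sigma_{d_z})\,\mathbf{V}_2^T\in\widetilde{\mathbb{A}}$ (with $\sigma_1\ge\cdots\ge\sigma_{d_z}>0$, and $\mathbf{U}_2,\mathbf{V}_2$ from the SVD $\NZ{2}=\mathbf{U}_2(\sqrt m\,\mathbf{I})\mathbf{V}_2^T$) is already written in SVD form, so $\NZ{1}=\sqrt m\,\mathbf{U}_2\mathbf{V}_2^T=\NZ{2}$ and $\mathcal{L}^{'}_1(\Z{1})=0$; thus $\widetilde{\mathbb{A}}\subseteq\mathbb{A}$. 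Since $\widetilde{\mathbb{A}}$ is clearly non-empty --- e.g. all $\sigma_i=\sqrt m$, i.e. $\Z{1}=\NZ{2}$, for which $\Sigma=\mathbf{I}$ --- so is $\mathbb{A}$; combining with (a), $\mathbb{A}=\{\Z{1}:\NZ{1}=\NZ{2}\}$, every element full-rank.

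The main obstacle is not depth but care at the boundary: the loss $\mathcal{L}^{'}_1$ must be defined for rank-deficient $\Z{1}$ as well (taking $\phi$ in its Moore--Penrose / regularized form in the idealized $\varepsilon\to0$ limit), and one must check that $Rank(\NZ{1})\le Rank(\Z{1})$ persists there so that no degenerate $\Z{1}$ slips into $\mathbb{A}$; a lesser point is that the SVD of $\NZ{2}$ is non-unique when all its singular values coincide, but any admissible pair $(\mathbf{U}_2,\mathbf{V}_2)$ works and the conclusion $\NZ{1}=\NZ{2}$ is independent of the choice. The remaining computations --- the Frobenius-norm expansion and the SVD identity $\phi_{ZCA}(\Z{})\Z{}=\sqrt m\,\mathbf{U}\mathbf{V}^T$ --- are routine.
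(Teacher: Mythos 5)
Your proposal is correct and follows essentially the same route as the paper's proof: establish $\min\mathcal{L}^{'}_{1}=0$ via an explicit minimizer, rule out rank-deficient $\Z{1}$ by the rank inequality $Rank(\phi(\Z{1})\Z{1})\le Rank(\Z{1})<d_z$ against the full-rank target $\NZ{2}$, and verify $\widetilde{\mathbb{A}}\subseteq\mathbb{A}$ by the same ZCA computation ($\Sigma^{-1/2}\Z{1}=\sqrt{m}\,\mathbf{U}_2\mathbf{V}_2^T=\NZ{2}$), merely phrased via the polar factor instead of the eigendecomposition of $\Sigma$. Your added remarks on defining $\phi$ for rank-deficient inputs and on SVD non-uniqueness are careful touches the paper glosses over, but they do not change the argument.
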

The proof is shown in \TODO{\AP}~\ref{sec:proof b.2}.   Proposition~\ref{pro1} states that there are infinity matrix with full-rank that is the optimum when minimizing $\mathcal{L}^{'}_{1}$ \wrt~$\Z{1}$. Therefore, minimizing $\mathcal{L}_1^{'}$ only requires the embedding $\Z{1}$ being full-rank with $Rank(\NZ{1})=d_z$,  and does not necessarily impose the constraints on $\Z{1}$ to be whitened with $r(\Z{1})=d_z$. 
}
Similar analysis also applies to $\mathcal{L}_2^{'}$ and minimizing $\mathcal{L}_2^{'}$ requires $\Z{2}$ being full-rank. Therefore, BW-based methods shown in Eqn.~\ref{eqn:W-Batch-loss}~do not impose whitening constraints on the embedding as formulated in Eqn.~\ref{eqn:W-loss}, but they only require the embedding to be \textbf{full-rank}. This full-rank constraint is also sufficient to avoid dimensional collapse for embedding, even though it is a weaker constraint than whitening. 

%
%\begin{figure}[tp]
%	\vspace{-0.16in}
%	\centering
%	\hspace{-0.5in}	\subfigure[]{
%		\begin{minipage}[c]{.32\linewidth}
%			\centering
%			\includegraphics[width=5cm]{./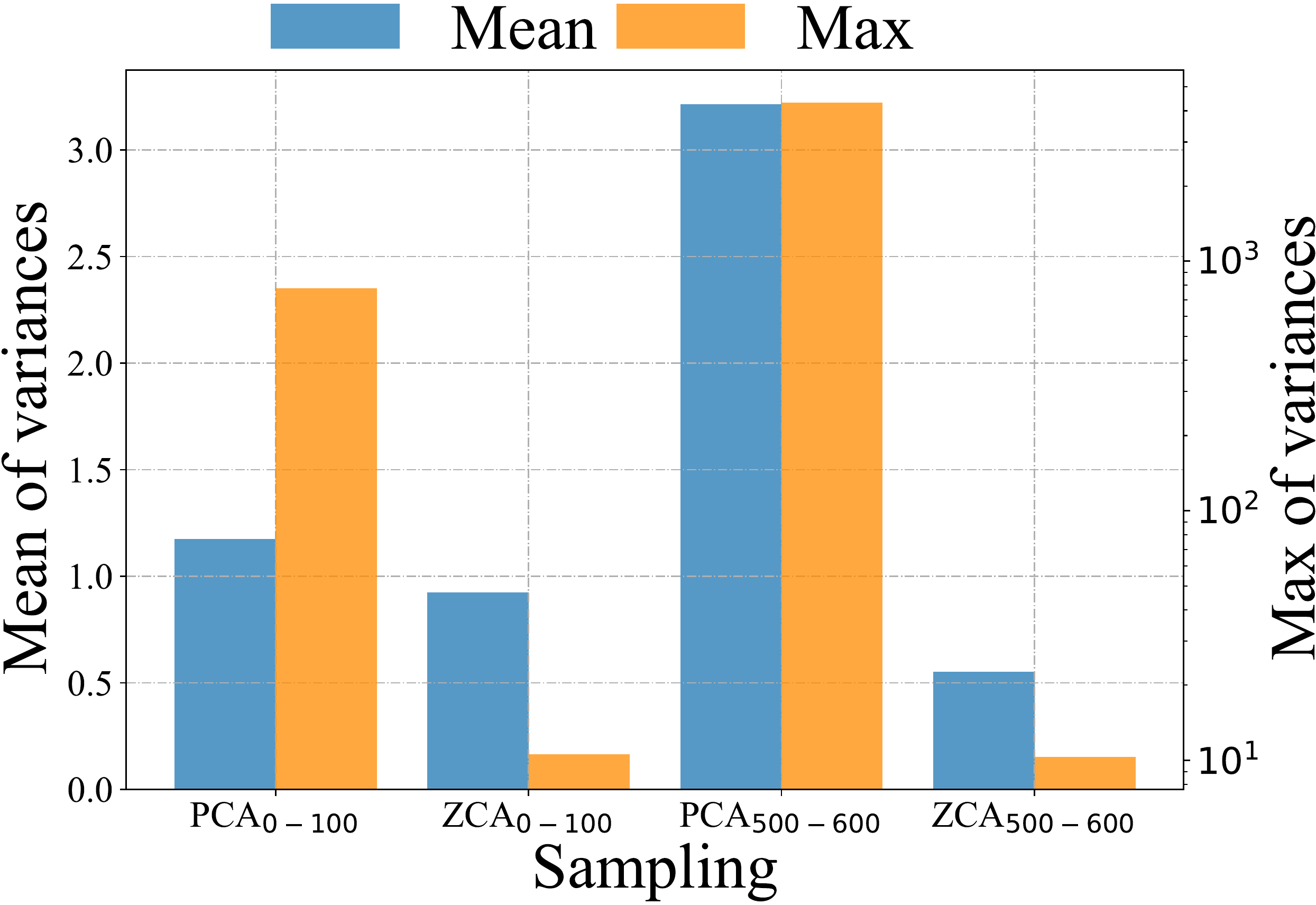}
%		\end{minipage}
%	}
%	\hspace{0.5in}	\subfigure[]{
%		\begin{minipage}[c]{.32\linewidth}
%			\centering
%			\includegraphics[width=5cm]{./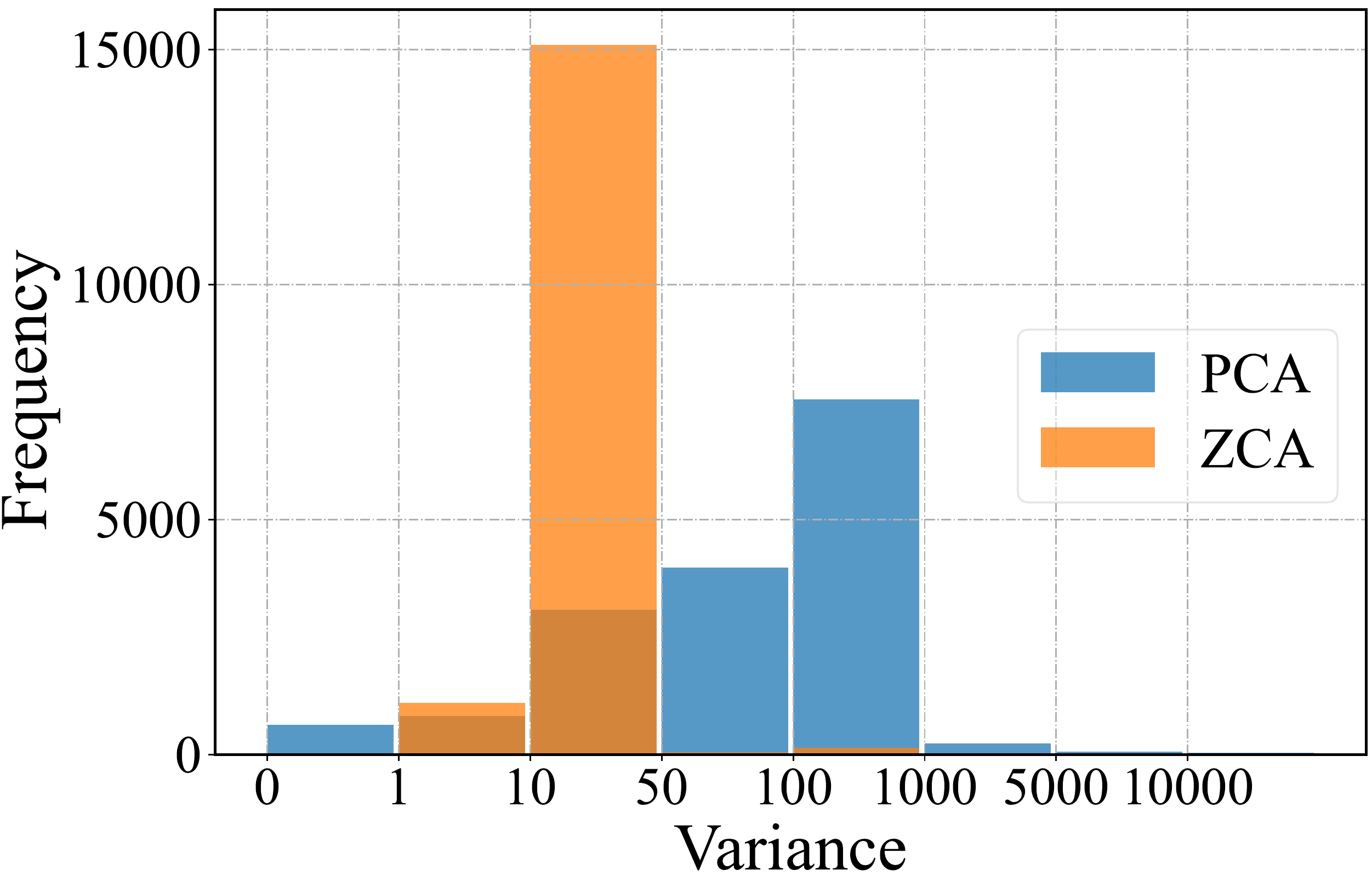}
%		\end{minipage}
%	}
%	\vspace{-0.1in}
%	\caption{Illustration of PCA-based whitening loss suffering from training instability. We follow the same experimental setup as Figure~\ref{fig:whitening_transformation}. Given a certain mini-batch input ($m=2048$), we monitor its whitened output $\NZ{}^t$ and whitening matrix $\Phi^t$ for each epoch $t$. We calculate the variance along the training epochs for each element of  $\NZ{}$ and $\Phi_{}$.  We show (a) the mean and maximum of variances of $\NZ{}$, noting that $PCA_{0-100}$ indicates the variance of PCA whitened output is calculated along the first 100 epochs; and (b) the histogram of variance of $\Phi_{}$.
%	}
%	\label{fig:exp_PCA_targest}
%	\vspace{-0.18in}
%\end{figure}
%\subsection{Discussion}
Our analysis further implies that whitening loss in its symmetric formulation (Eqn.~\ref{eqn:MB-W-Loss}) can be decomposed into two asymmetric losses (Eqn.~\ref{eqn:W-MSE-loss-prox}), where each asymmetric loss requires an online network to match a whitened target. This mechanism  provides a pivot connecting to other methods, and a clue to understand why PCA whitening fails to avoid dimensional collapse for SSL.

\vspace{-0.1in}
\paragraph{Connection to Asymmetric Methods.} The asymmetric formulation of whitening loss shown in Eqn.~\ref{eqn:W-MSE-loss-prox-1} bears resemblance to those asymmetry methods without negative pairs, \eg, SimSiam~\cite{2021_CVPR_Chen}.  In these methods, an extra predictor is incorporated and the stop-gradient is essential for avoid collapse. In particular, SimSiam uses the objective as: 
%\begin{eqnarray}
%	\label{eqn:SimSiam}
%	\mathcal{L} = \frac{1}{m} \|P_{\theta_p}(\Z{1}) \Z{1} -  (\Z{2})_{st} \|_{F}^2,
%\end{eqnarray}
\begin{eqnarray}
	\label{eqn:SimSiam}
	\mathcal{L}(\X{}) = \frac{1}{m} \|P_{\theta_p}(\cdot) \circ \Z{1} -  (\Z{2})_{st} \|_{F}^2 +  \frac{1}{m} \|P_{\theta_p}(\cdot) \circ \Z{2} -  (\Z{1})_{st} \|_{F}^2  ,
\end{eqnarray}
where $P_{\theta_p}(\cdot)$ is the predictor with learnable parameters $\theta_p$. By contrasting Eqn.~\ref{eqn:W-MSE-loss-prox-1} and the first term of Eqn.~\ref{eqn:SimSiam}, we find that: 1) BW-based whitening loss ensures a whitened target $\NZ{2}$, while SimSiam does not put constraint on the target $\Z{2}$; 2) SimSiam uses a learnable predictor $P_{\theta_p}(\cdot)$, which is shown to  empirically avoid collapse by matching the rank of the covariance matrix by back-propagation~\cite{2021_ICML_Tian}, while  BW-based whitening loss has an implicit predictor $\phi(\Z{1})$ depending on the input itself, which is a full-rank matrix by design. Based on this analysis, we find that BW-based whitening loss can surely avoid collapse if the loss  converges well, while Simsian can not provide such a guarantee in avoiding collapse. 
% and thus encouraging the \Term{embedding} $\Z{1}$ full ranking, under the constraints to fit the whitened targets. 
Similar analysis also applies to BYOL~\cite{2020_NIPS_Grill}, except that BYOL uses a momentum target network for providing target signal.

%We can see SimSiam uses the predictor with the, besides that our targets matrixi is full ranking which ensures the predicted network being a 
\vspace{-0.1in}
\paragraph{Connection to Soft Whitening.}
 VICReg~\cite{2022_ICLR_Adrien} also encourages whitened \Term{embedding} produced from different views, but by imposing a whitening penalty as a regularization on the \Term{embedding}, which is  called soft whitening.
 In particular, given a mini-batch input, the objective of VICReg is as follows\footnote{Note the slight difference where VICReg uses margin loss on the diagonal of covariance, while our notation uses MSE loss.}:
 \begin{eqnarray}
 	\label{eqn:VICReg-Loss}
 	\mathcal{L} (\X{}) = \frac{1}{m} \| \Z{1} - \Z{2}  \|_{F}^2 + \alpha \sum_{i=1}^2 (\| \frac{1}{m} \Z{i} \Z{i}^T - \lambda \mathbf{I}\|_{F}^2), 
 \end{eqnarray}
where $\alpha \ge 0$ is the penalty factor.
 %where $\Z{i} (i=1,2)$ is expected to be whitened by imposing whitening penalty on the covariance matrix of $\Z{i}$ as $\| \frac{1}{m} \Z{i} \Z{i}^T - \lambda \mathbf{I}\|_{F}^2$. 
 Similarly, we can use a proxy loss for VICReg and considering its term corresponding to optimizing $\Z{1}$ only (similar to Eqn.~\ref{eqn:W-MSE-loss-prox-1}), we have:  
 \begin{eqnarray}
 	\label{eqn:VICReg-Loss-prox}
 	\mathcal{L}^{'}_{VICReg} (\X{})= \frac{1}{m} \| \Z{1} - (\Z{2})_{st}  \|_{F}^2 + \alpha \| \frac{1}{m} \Z{1} \Z{1}^T  - \lambda \mathbf{I}\|_{F}^2. 
 \end{eqnarray}
Based on this formulation, we observe that VICReg requires \Term{embedding} $\Z{1}$ to be whitened by, 1) the additional whitening penalty, and 2) fitting the (expected) whitened targets $\Z{2}$. By contrasting Eqns.~\ref{eqn:W-MSE-loss-prox-1} and~\ref{eqn:VICReg-Loss-prox}, we highlight that the so-called hard whitening methods, like W-MSE~\cite{2021_ICML_Ermolov}, only impose full-rank constraints on the embedding, while soft whitening methods indeed impose whitening constraints. Similar analysis also applies to Barlow Twins~\cite{2021_NIPS_Zbontar}, except that the whitening/decorrelation penalty is imposed on the  cross-covariance matrix of embedding from different views.  
\vspace{-0.1in}

\paragraph{Connection to Other Non-contrastive Methods.} 
 SwAV~\cite{2020_NIPS_Caron}, a clustering-based method, uses a "swapped" prediction mechanism where the cluster assignment (code) of a view is predicted from the representation of another view, by minimizing the following objective:
 \begin{eqnarray}
 	\label{eqn:SwAV}
 	\mathcal{L}(\X{}) = \ell (\mathbf{C}^T \Z{1}, (\mathbf{Q}_{2})_{st})  +   \ell (\mathbf{C}^T \Z{2}, (\mathbf{Q}_{1})_{st}).
 \end{eqnarray}

\Revise{
Here, $\mathbf{C}$ is the prototype matrix learned by back-propagation, $\mathbf{Q}_{i}$ is the predicted code with equal-partition and high-entropy constraints, and SwAV uses cross-entropy loss as $\ell (\cdot, \cdot)$ to match the distributions. 
The constraints on $\mathbf{Q}_{i}$ are approximately satisfied during optimization, by using the iterative Sinkhorn-Knopp algorithm conditioned on the input $\mathbf{C}^T \Z{i}$. Note that SwAV explicitly uses stop-gradient when it calculates the target $\mathbf{Q}_{i}$. By contrasting Eqn.~\ref{eqn:W-MSE-loss-prox-1} and the first term of Eqn.~\ref{eqn:SwAV}, we find that: 
 1) SwAV can be viewed as an online network  to match a target with constraints, like BW-based whitening loss, even thought the constraints imposed on the targets between them are different; 2) From the perspective of asymmetric structure, SwAV indeed uses a linear predictor $\mathbf{C}^T$ that is also learned by back-propagation like SimSiam, while BW-based whitening loss has an implicit predictor $\phi(\Z{1})$ depending on the input itself. Similar analysis also applies to DINO~\cite{2021_ICCV_Caron}, which further simplifies the formulation of SwAV by removing the prototype matrix and directly matching the output of another view, from the view of knowledge distillation. DINO uses centering and sharpening operations to impose the constraints on the target (output of another view). One significant difference between DINO and whitening loss is that DINO uses population statistics of centering calculated by moving average, while whitening loss uses the mini-batch statistics of whitening.
}
\begin{wrapfigure}[20]{r}{8cm}


	\vspace{-0.1in}
	\centering
	\hspace{-0.5in}	\subfigure[]{
			\begin{minipage}[c]{.32\linewidth}
					\centering
					\includegraphics[width=3.7cm]{./figures/exp3/w_output.pdf}
				\end{minipage}
		}
	\hspace{0.5in}	\subfigure[]{
			\begin{minipage}[c]{.32\linewidth}
					\centering
					\includegraphics[width=3.7cm]{./figures/exp3/wm_var.pdf}
				\end{minipage}
		}
	\vspace{-0.1in}
	\caption{Illustration of PCA-based whitening loss suffering from training instability. We follow the same experimental setup as Figure~\ref{fig:whitening_transformation}. Given a certain mini-batch input ($m=2048$), we monitor its whitened output $\NZ{}^t$ and whitening matrix $\Phi^t$ for each epoch $t$. We calculate the variance along the training epochs for each element of  $\NZ{}$ and $\Phi_{}$.  We show (a) the mean and maximum of variances of $\NZ{}$, noting that $PCA_{0-100}$ indicates the variance of PCA whitened output is calculated along the first 100 epochs; and (b) the histogram of variance of $\Phi_{}$.
		}
	\label{fig:exp_PCA_targest}
%	\vspace{2.38in}
\end{wrapfigure}
%\begin{wrapfigure}[11]{r}{8cm}
%	\vspace{-0.1in}
%	\centering
%	\subfigure[]{
%		\begin{minipage}[c]{0.45\linewidth}
%			\centering
%			\includegraphics[width=3.5cm]{./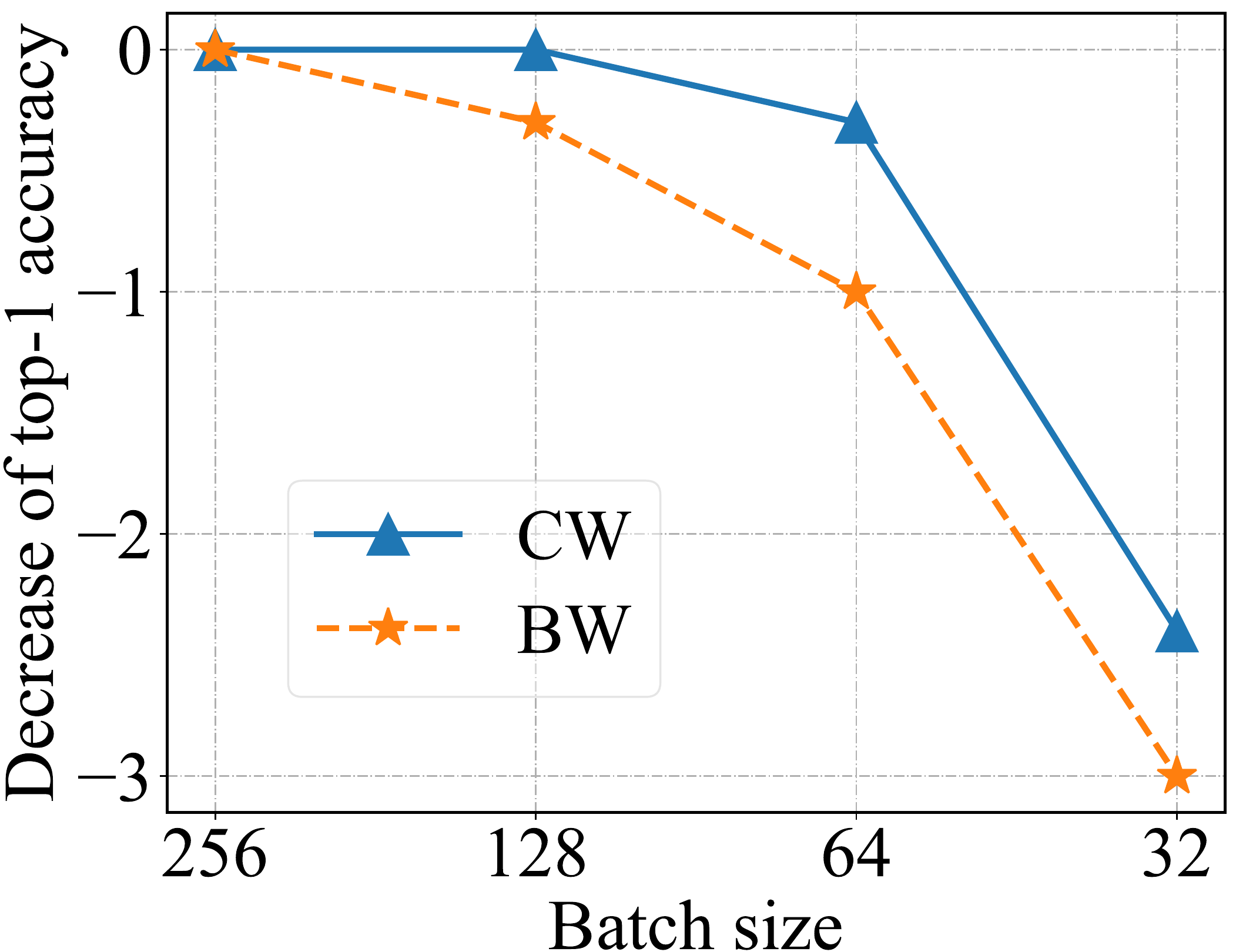}
%		\end{minipage}
%	}
%	\subfigure[]{
%		\begin{minipage}[c]{0.45\linewidth}
%			\centering
%			\includegraphics[width=3.5cm]{./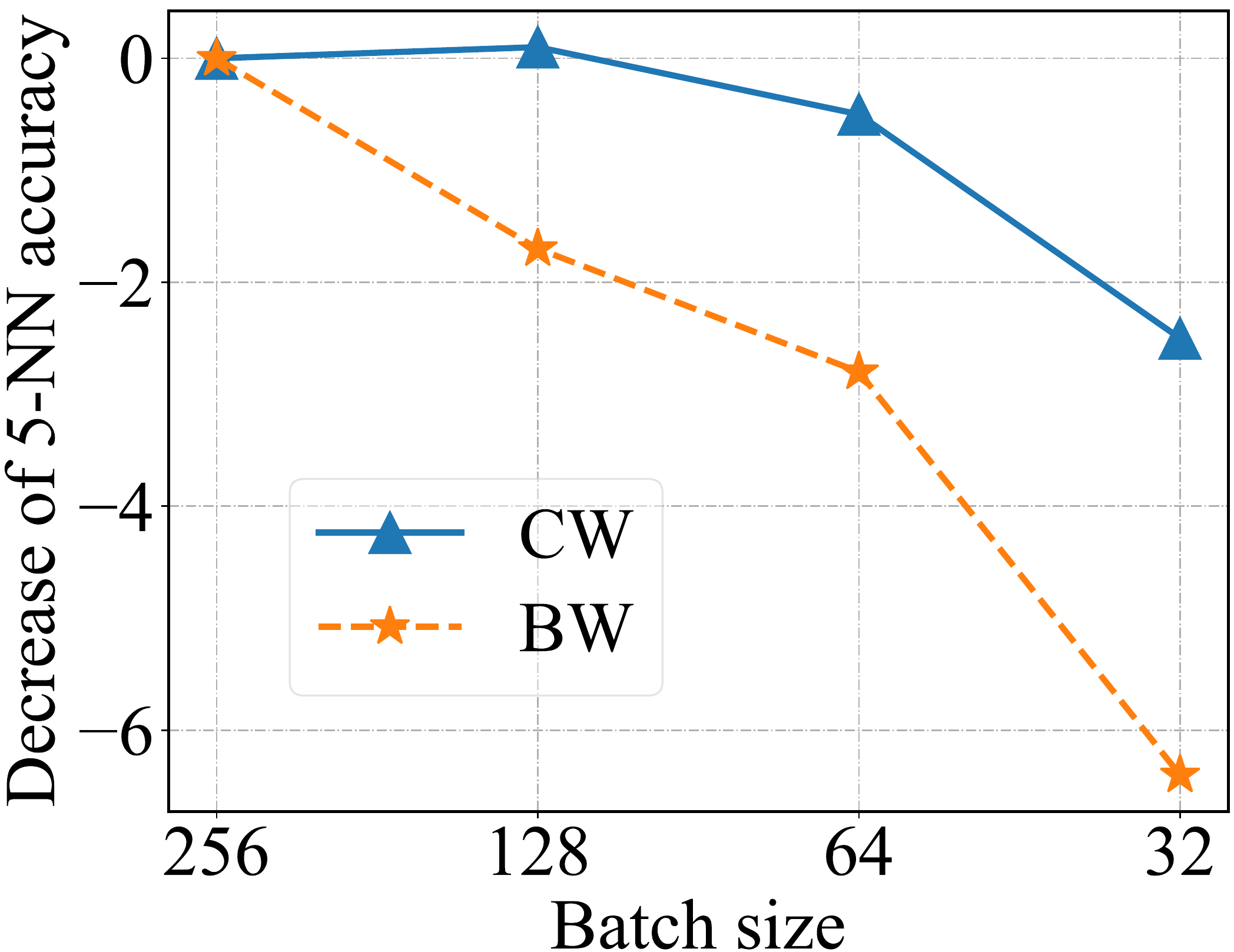}
%		\end{minipage}
%	}
%	\caption{Decrease in top-1 and 5-nn accuracy (in \% points) of CW and BW at 100 epochs on ImageNet-100.}
%	\label{fig:small batch}
%\end{wrapfigure}
\vspace{-0.1in}
\paragraph{Why PCA Whitening Fails to Avoid Dimensional Collapse?}
%Here, we illustrate why PCA whitening fails to avoid dimensional collapse
Based on Eqn.~\ref{eqn:W-MSE-loss-prox-1}, we note that whitening loss can favorably provide full-rank constraints on the embedding under the condition that the online network can match the whitened targets well. We experimentally find that PCA-based whitening loss provides volatile sequence of  whitened targets during training, as shown in Figure~\ref{fig:exp_PCA_targest}(a). It is difficult for the online network to match such a target signal with significant variation, resulting in minimal decrease in the whitening loss (see Figure~\ref{fig:whitening_transformation}). Furthermore, we observe that PCA-based whitening loss has also significantly varying whitening matrix sequences $\{\phi^{t}(\cdot)\}$ (Figure~\ref{fig:exp_PCA_targest}(b)),
%whitening matrix of PCA-based whitening loss also varies significantly during training (Figure ~\ref{fig:exp_PCA_targest}),
 even given the same input data.
 This coincides with the observation in~\cite{2020_NIPS_Grill,2021_CVPR_Chen}, where an unstable predictor results in significant degenerate performance. Our observations are also in accordance with the arguments in~\cite{2018_CVPR_Huang,2020_CVPR_Huang}  that PCA-based BW shows significantly large stochasticity. We note that ZCA whitening can provide relatively stable sequences of whitened targets and whitening matrix during training~(Figure~\ref{fig:exp_PCA_targest}), which ensures stable training for SSL. This is likely due to the property of ZCA-based whitening that minimizes
 the total squared distance between the original and whitened  variables~\cite{2018_AS_Kessy,2018_CVPR_Huang}. 
 
% We conjecture the   
%Based on the above formulation. We can see PCA whitening can still to match a target matrix with full ranking. But why does it not work? 
%By looking into the whitened output of the PCA whitening. we see these matrix disturbance significantly. (Figure \TODO{***})
%Furthermore, By looking into the  formulation of the whitening matrix of PCA whitening, we see it uses the rotation matrix. By observing the similarity of the rotation matrix, we can see this matrix varies significantly during training (Figure \TODO{***}), while ZCA/CD whitening has no significant perturbation.  

%Here, we perform experiments and show that 

\begin{figure}[bp]
	\vspace{-0.16in}
	\centering
	\hspace{-0.1in}	\subfigure[]{
		\begin{minipage}[c]{.23\linewidth}
			\centering
			\includegraphics[width=3.5cm]{./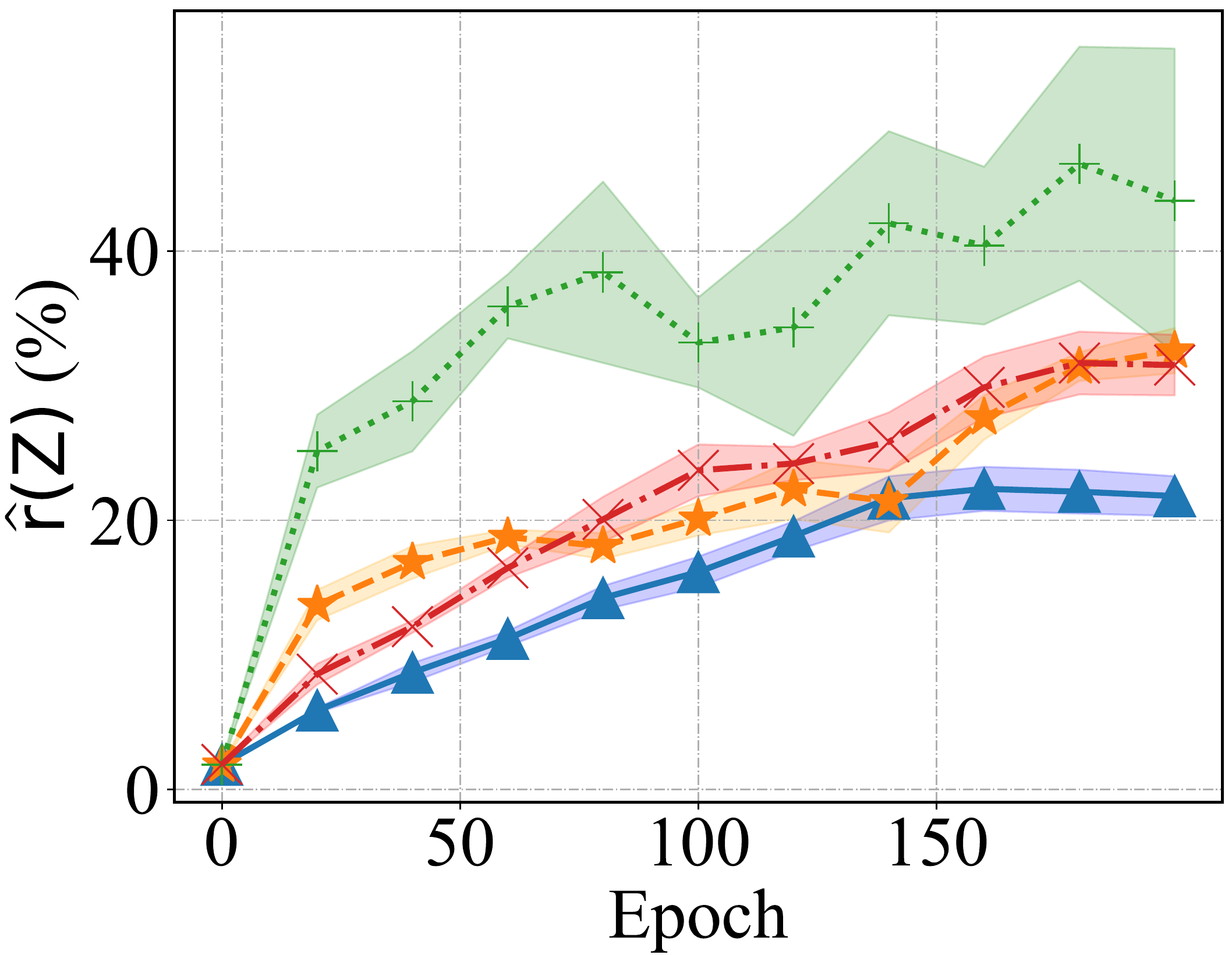}
		\end{minipage}
	}	
	\hspace{0in}	\subfigure[]{		\begin{minipage}[c]{.23\linewidth}
			\centering
			\includegraphics[width=3.5cm]{./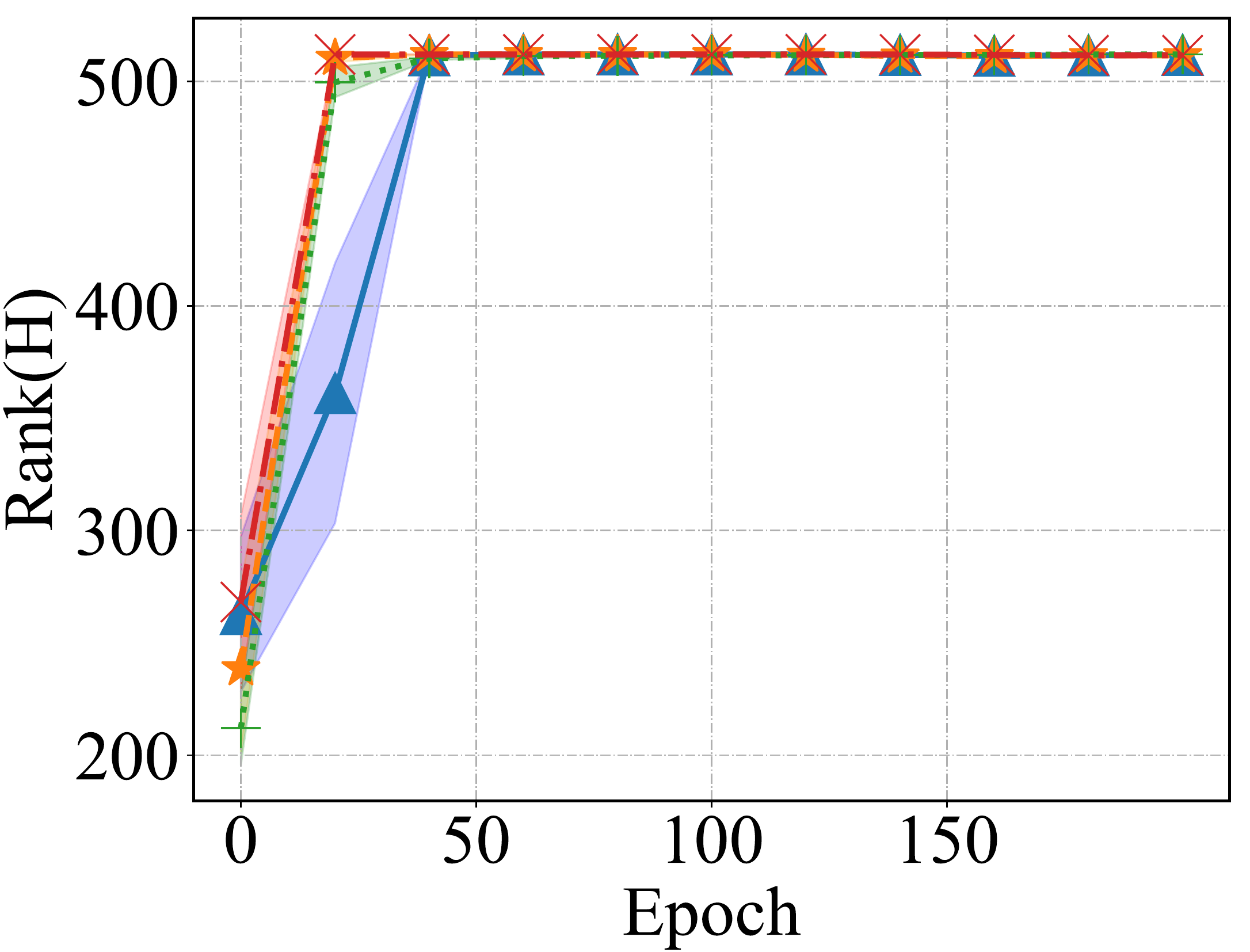}
		\end{minipage}
	}
	\hspace{0in}	\subfigure[]{
		\begin{minipage}[c]{.23\linewidth}
			\centering
			\includegraphics[width=3.5cm]{./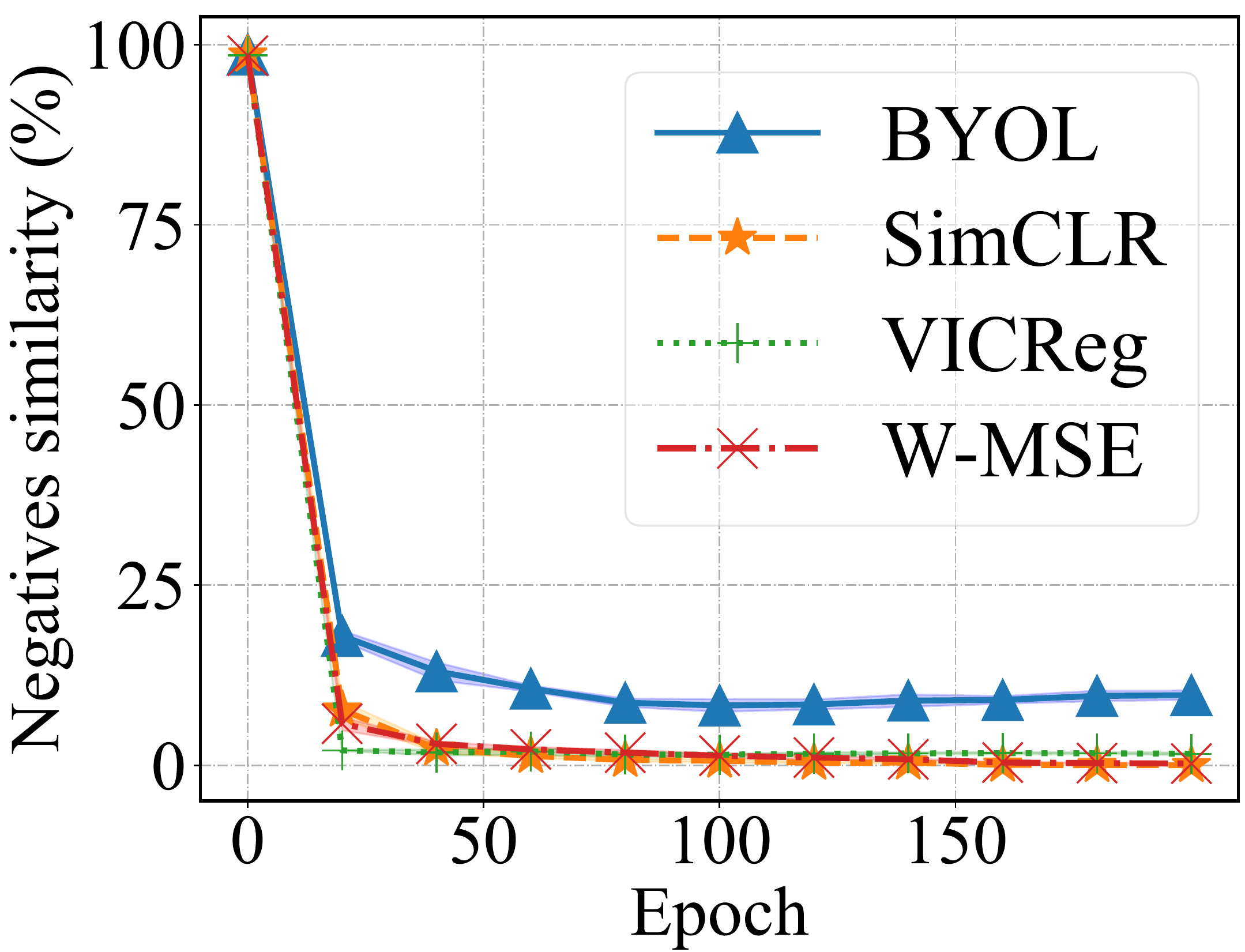}
		\end{minipage}
	}
	\hspace{0in}	\subfigure[]{
		\begin{minipage}[c]{.23\linewidth}
			\centering
			\includegraphics[width=3.8cm]{./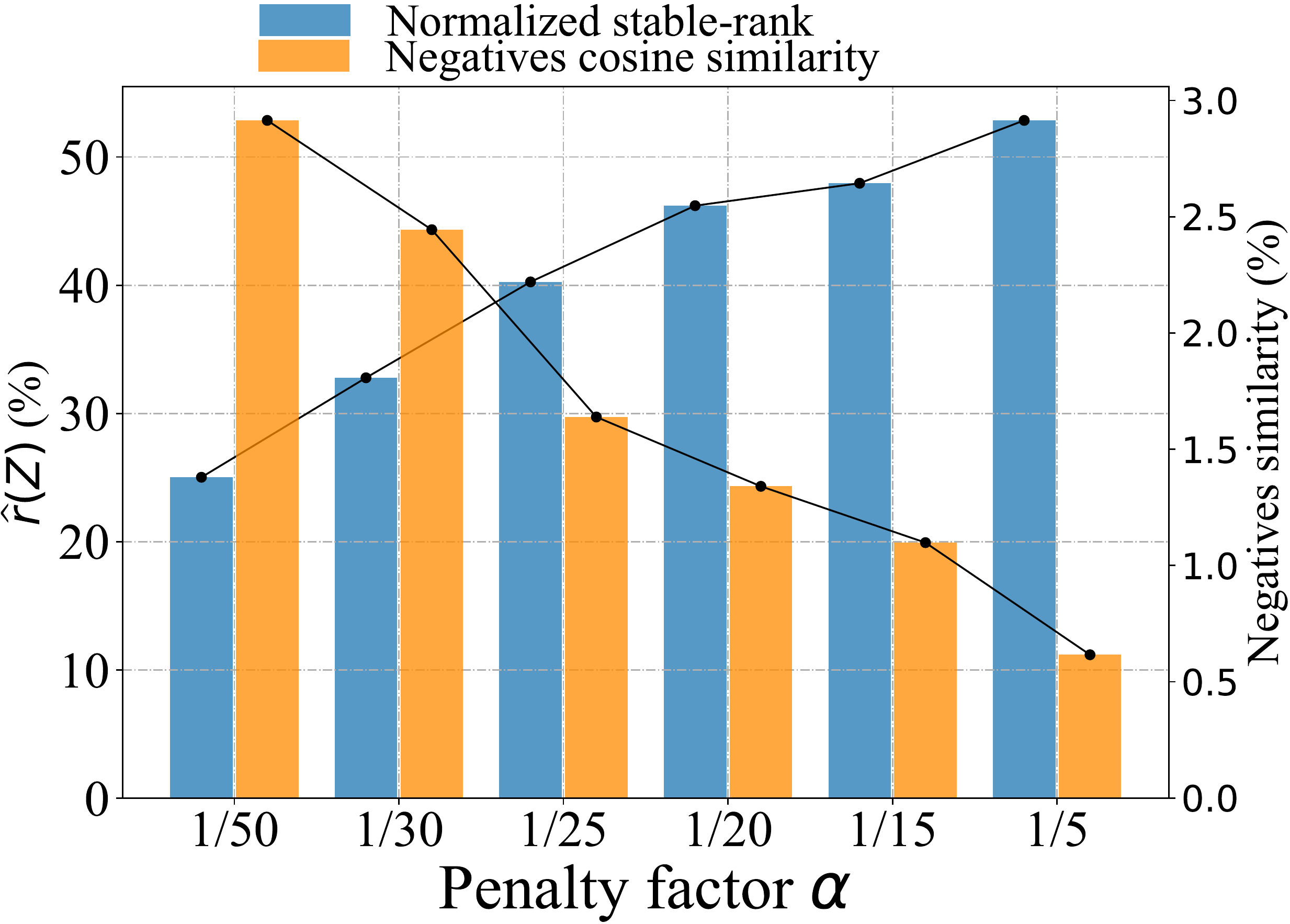}
		\end{minipage}
	}
		\vspace{-0.12in}
	\caption{Comparison of different SSL methods.  We follow the same experimental setup as Figure~\ref{fig:whitening_transformation}. 
		We show (a) the normalized stable-rank of \Term{embedding}; (b) the rank of \Term{encoding}; (c) the negatives cosine similarity, calculated on the \Term{embeddings} from all negative pairs (different examples). We also train VICReg with varying penalty factor $\alpha$ to show the relationship between the normalized stable-rank and negatives cosine similarity in (d).
		Here, we use embedding dimension of 64. We have similar observations when using the embedding dimension of other numbers (\eg, 128 and 256).  }
	\label{fig:rank_connection}
		\vspace{-0.15in}
\end{figure}

\vspace{-0.1in}
\paragraph{Why Whitened Output is not a Good Representation?}
%\subsection{\TODO{show} that rank can be a good indicator for SSL}
A whitened output removes the correlation among axes~\cite{2021_ICCV_Hua} and ensures the examples scattered in a spherical distribution~\cite{2021_ICML_Ermolov}, which bears resemblance to contrastive learning where different examples are pulled away. We conduct experiments to compare  SimCLR~\cite{2020_ICML_Chen}, BYOL~\cite{2020_NIPS_Grill}, VICReg~\cite{2022_ICLR_Adrien} and W-MSE~\cite{2021_ICML_Ermolov}, and monitor the cosine similarity for all negative pairs, stable-rank and rank during training. From Figure~\ref{fig:rank_connection}, we find that all methods can achieve a high rank on the \Term{encoding}. This is driven by the improved extent of whitening on the \Term{embedding}. Furthermore, we observe that the negatives cosine similarity decreases during the training, while the extent of stable-rank increases, for all methods. This observation suggests that \Revise{a} representation with stronger extent of whitening is more likely to have less similarity among different examples. We further conduct experiments to validate this argument, using VICReg with varying penalty factor $\alpha$ (Eqn.~\ref{eqn:VICReg-Loss-prox}) to adjust the extent of whitening on \Term{embedding} (Figure~\ref{fig:rank_connection}(d)). Therefore, a whitened output leads to the state that all examples have dissimilar features. This state can break the potential manifold the examples in the same class belong to, which makes the learning more difficult~\cite{2021_NIPS_Jeff}. Similar analysis for contrastive learning is also shown in~\cite{2020_ICML_Chen}, where classes represented by the projected output (\Term{embedding}) are not well separated, compared to \Term{encoding}. 
%s discussed in~\ for constrastive learning.  

%\TODO{Show that The rank is more flexible for the constrain, explain why whitened OUtput has worse performance, here, we connect simCLR as negative exmaples. Too stronger prior on the manifold structure}
%We can see it performs the similar motivation as our analysis. 
%
%an essential  to be match while stop gradient are used for it. Thus, Eqn.5 can be a special case of BYOL without using any momentum. 
%Our main conclusion is that keep $\Z{}$ has a constraint of full ranking seems imposes less constraint of whitening, but it can also avoid dimensional collapse. Intuitively, it uses less prior on $\Z{}$, which seems more flexible.
\vspace{-0.1in}
\section{Channel Whitening with Random Group Partition}
\vspace{-0.1in}
One main weakness of BW-based whitening loss is that the whitening operation requires the number of examples (mini-batch size) $m$ to be larger than the size of channels $d$, to avoid numerical instability\footnote{An empirical setting is $m=2d$ that can obtain good performance as shown in~\cite{2021_ICML_Ermolov,2021_ICCV_Hua}.}. This requirement limits its usage in scenarios where large batch of training data cannot be fit into the memory.
%One solution is use group-wised whitening as introduced in \TODO{cite} and further applied in SSL \TODO{cite}. However, this is not a perfect solution, because the min-batch size may be significantly small. 
%\footnote{in \TODO{titme}, an empires setting of that $m=2d$ can obtain good performance to }
Based on previous analysis, the whitening loss can be viewed as an online learner to match a whitened target with all singular values being one. We note the key of whitening loss is that it conducts a transformation $\phi: \Z{} \rightarrow \NZ{}$, ensuring that the singular values of $\NZ{}$ are one. We thus propose  channel whitening (CW) that ensures the examples in a mini-batch are orthogonal:
%\begin{eqnarray}
%	\label{eqn:channel-whitening}
%	 \NZ{}=  (I -   \frac{1}{d} \mathbf{1} \cdot  \mathbf{1}^T ) \Z{}  \WM{}
%\end{eqnarray}
\begin{eqnarray}
	\label{eqn:channel-whitening}
	Centering: \Z{c}=  (\mathbf{I} -   \frac{1}{d} \mathbf{1} \cdot  \mathbf{1}^T ) \Z{}, ~~~~~~ Whitening: \NZ{} = \Z{c} \WM{} ,
\end{eqnarray}
%\end{small}
where $\WM{} \in \mathbb{R}^{m \times m}$ is the `whitening matrix' that is derived from the corresponding `covariance matrix': %$\Sigma^{'} = \frac{1}{d-1} \Z{}^T (I - \frac{1}{d} \mathbf{1} \cdot  \mathbf{1}^T)^T (\mathbf{I} -   \frac{1}{d} \mathbf{1} \cdot  \mathbf{1}^T ) \Z{} $.
$\Sigma^{'} = \frac{1}{d-1} \Z{c}^T \Z{c}$.
 In our implementation, we use ZCA whitening to obtain $\WM{}$. 
CW ensures the examples in  a mini-batch are orthogonal to each other, with $\NZ{}^T \NZ{} = \frac{1}{d-1} \mathbf{I}$. This means CW has the same ability as BW for SSL in avoiding the dimensional collapse, by providing target $\NZ{}$ whose singular values are one. %also provides a constrain on the predicted matrix with full ranking.\Eg, the eigenvalues of $\NX{}$ has all the eigenvaules of 1. This means Channel whitening has the same ability as BW for SSL in avoiding the collapse problems. 
More importantly, one significant advantage of CW is that it can obtain numerical stability when the batch size is small, since the condition that $d>m $ can be obtained by design (\eg, we can set the channel number of embedding $d$ to be larger than the batch size $m$). Besides, we find that CW can amplify the full-rank constraints on the embedding by dividing the channels/neurons into random groups, as we will illustrate. 
%Therefore, we believe CW is more practical than BW in practice. 

\begin{figure}[tp]
	\vspace{-0.16in}
	\centering
	\hspace{-0.2in}	\subfigure[]{
		\begin{minipage}[c]{.32\linewidth}
			\centering
			\includegraphics[width=4cm]{./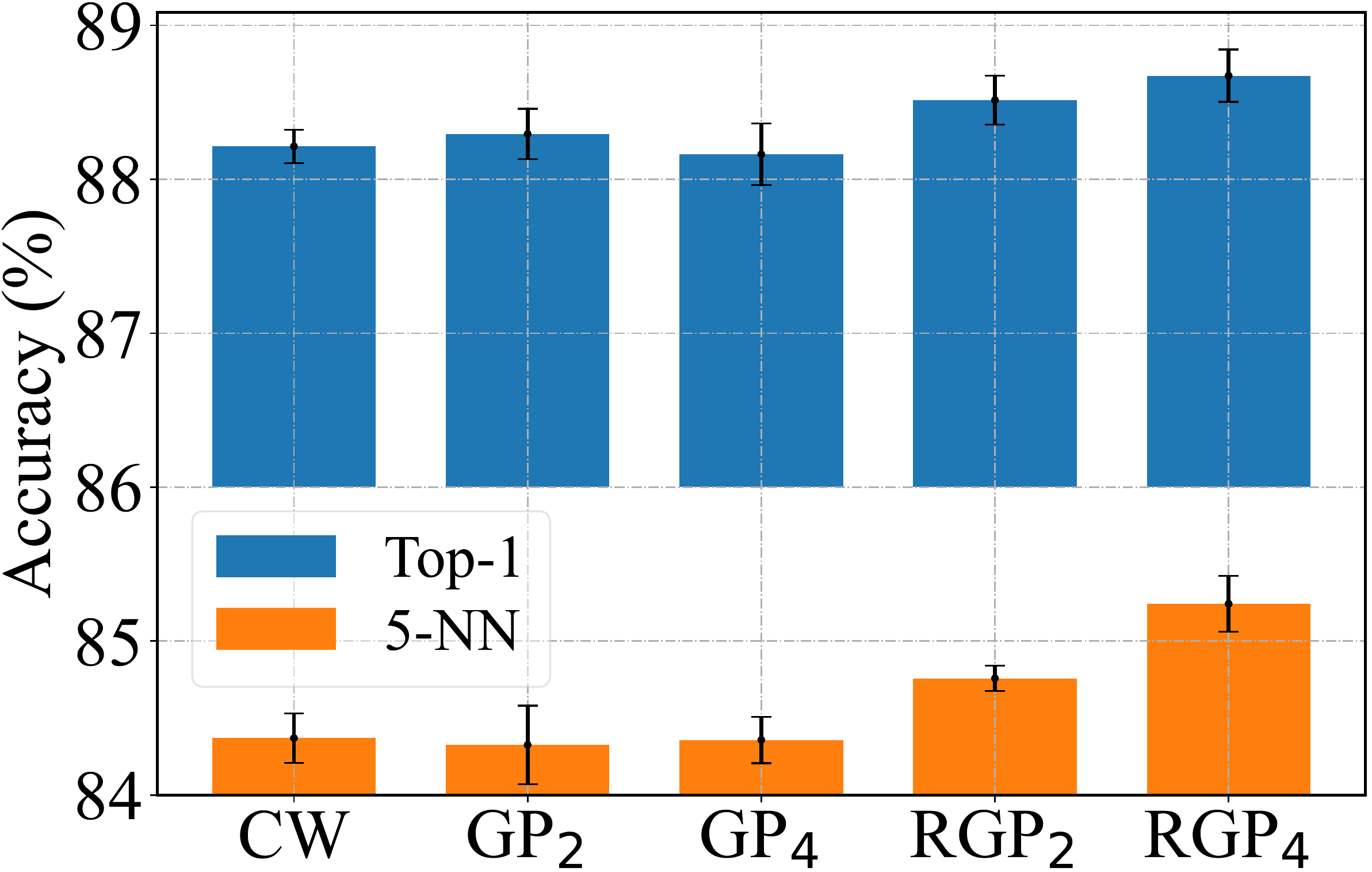}
		\end{minipage}
	}
	\hspace{0in}	\subfigure[]{
		\begin{minipage}[c]{.32\linewidth}
			\centering
			\includegraphics[width=4cm]{./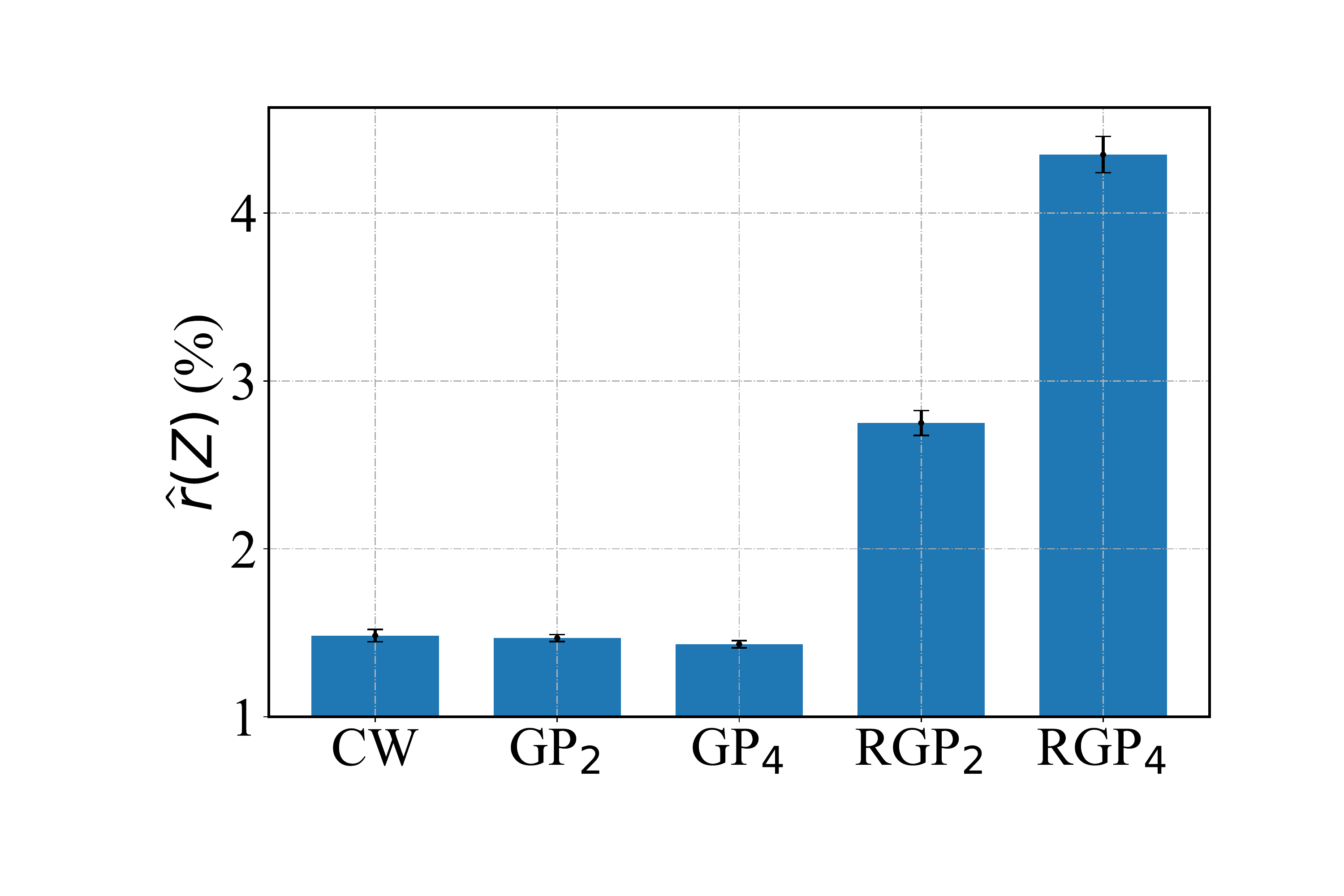}
		\end{minipage}
	}
	\hspace{0in}	\subfigure[]{
		\begin{minipage}[c]{.32\linewidth}
			\centering
			\includegraphics[width=4cm]{./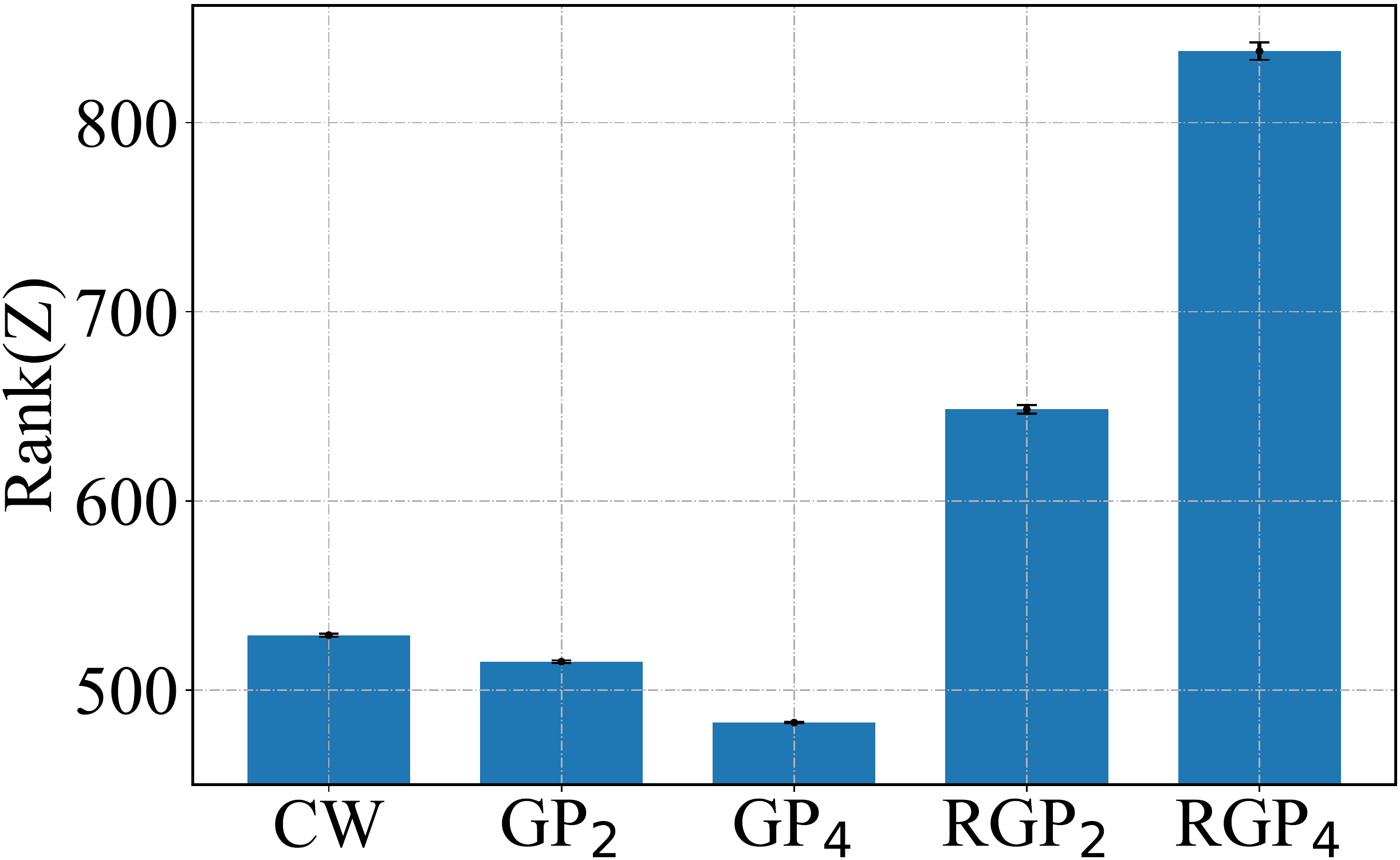}
		\end{minipage}
	}
	\vspace{-0.1in}
	\caption{Illustration of CW with random group partition. We follow the same experimental setup as Figure~\ref{fig:whitening_transformation}, except that we set the dimension of embedding as 2048 tailored for CW. We use `GP$_2$' (`RGP$_2$') to indicate CW using group partition (random group partition), with a group number of 2.  (a) The linear and k-NN accuracies; (b) The normalized stable-rank of embedding; (c) The rank of embedding. 
		All experiments are repeated five times, with  standard deviation shown as error bars.}
	\label{fig:exp_group_CW}
		\vspace{-0.18in}
\end{figure}

\vspace{-0.1in}
\paragraph{Random Group Partition.}
\label{sec:random group}
Given the embedding $\Z{} \in \mathbb{R}^{d \times m}, d > m$, we divide it into $g \ge 1$ groups $\{\Z{}^{(i)} \in \mathbb{R}^{ \frac{d}{g}  \times  m}\}_{i=1}^g$, where we assume that $d$ is divisible by $g$ and ensure $\frac{d}{g} > m $. We then perform CW on each $\Z{}^{(i)}, i=1, ...,g$. Note that the ranks of $\Z{}$ and $\Z{}^{(i)}$ are all at most $m$. Therefore, CW with group partition provides $g$ constraints with $Rank(\Z{}^{(i)})=m$ on embedding, compared to CW without group partition that only one constraint with $Rank(\Z{})=m$. Although CW with group partition can provide more full-rank constraints for mini-batch data, we find that it can also make the population data  correlated, if group partition is all the same during training, which decreases the rank  and does not improve the performance in accuracy by our experiments (Figure~\ref{fig:exp_group_CW}). We find random group partition, which randomly divide the channels/neurons into group for each iteration (mini-batch data),  can alleviate this issue and obtain an improved performance, from Figure~\ref{fig:exp_group_CW}.
%) further exploit the advantages of,We find raonldy can ensure the perofmranc impoarve. 
We call our method as channel whitening with random group partition (CW-RGP), and provide the full algorithm and PyTorch-style code in \TODO{\AP}~\ref{sec:code}.

We note that Hua~\etal~\cite{2021_ICCV_Hua} use a similar idea for BW, called Shuffled-DBN. However Shuffled-DBN cannot well amplify the full-rank constraints by using more groups, since BW-based methods require $m > \frac{d}{g}$ to avoid numerical instability. We further show that CW-RGP works remarkably better than Shuffled-DBN in the subsequent experiments. We attribute this results to the ability of CW-RGP in amplifying the full-rank constraints by using groups.  
%with ranodm group patiton. We provide the full aligotian in \SM

\setlength{\tabcolsep}{3.5pt}
\begin{table}
	\vspace{-0.2in}
	\begin{center}
		\caption{Classification accuracy (top 1) of a linear classifier and a 5-nearest neighbors classifier for different loss functions and datasets
			with a ResNet-18 encoder.}
		\label{tab:baselines}
		\begin{tabular}{lcccccccc}
			\hline
			\noalign{\smallskip}
			\multirow{2}{*}{Method} & 
			\multicolumn{2}{c}{CIFAR-10} &
			\multicolumn{2}{c}{CIFAR-100} &
			\multicolumn{2}{c}{STL-10} &
			\multicolumn{2}{c}{Tiny-ImageNet} \\ 
			&linear&5-nn&linear&5-nn&linear&5-nn&linear&5-nn \\
			
			\hline
			\noalign{\smallskip}
			SimCLR~\cite{2020_ICML_Chen}  & 91.80&88.42&66.83&56.56&90.51&85.68&48.84&32.86 \\
			% 			\noalign{\smallskip}
			BYOL~\cite{2020_NIPS_Grill} & 91.73&89.45&66.60&56.82&91.99&88.64&\textbf{51.00}&\textbf{36.24} \\
			% 			\noalign{\smallskip}
			SimSiam~\cite{2021_CVPR_Chen} (repro.) & 90.51&86.82&66.04&55.79&88.91&84.84&48.29&34.21 \\
			% 			\noalign{\smallskip}
			Shuffled-DBN~\cite{2021_ICCV_Hua} (repro.) & 90.45&88.15&66.07&56.97&89.20&84.51&48.60&32.14  \\
			% 			\noalign{\smallskip}
			Barlow Twins~\cite{2021_NIPS_Zbontar} (repro.) & 88.51&86.53&65.78&55.76&88.36&83.71&47.44&32.65  \\
			% 			\noalign{\smallskip}
			VICReg~\cite{2022_ICLR_Adrien} (repro.) & 90.32&88.41&66.45&56.78&90.78&85.72&48.71&33.35  \\
			% 			\noalign{\smallskip}
			Zero-ICL~\cite{2022_ICLR_Zhang2} (repro.) & 88.12&86.64&61.91&53.47&86.35&82.51&46.25&32.74 \\
			% 			\noalign{\smallskip}
			W-MSE 2~\cite{2021_ICML_Ermolov} & 91.55&89.69&66.10&56.69&90.36&87.10&48.20&34.16  \\
			% 			\noalign{\smallskip}
			W-MSE 4~\cite{2021_ICML_Ermolov} & 91.99&89.87&67.64&56.45&91.75&88.59&49.22&35.44  \\
			\hline
			\noalign{\smallskip}
			CW-RGP 2 (ours) & 91.92&89.54&67.51&57.35&90.76&87.34&49.23  &34.04\\
			% 			\noalign{\smallskip}
			CW-RGP 4 (ours) & \textbf{92.47}&\textbf{90.74}&\textbf{68.26}&\textbf{58.67}&\textbf{92.04}&\textbf{88.95}&50.24&35.99 \\
			% 			\noalign{\smallskip}
			\hline
		\end{tabular}
		\vspace{-0.2in}
	\end{center}
\end{table}
\setlength{\tabcolsep}{1.4pt}

\begin{wraptable}[15]{r}{6.5cm}
	\vspace{-0.3in}
	\footnotesize
	\centering
	\caption{Comparisons on ImageNet linear classification. All are based on ResNet-50 encoder. The table \Revise{is} mostly inherited from~\cite{2021_CVPR_Chen}.}
	\label{tab:baseline-imageNet}
	\setlength{\tabcolsep}{2.5pt}
	\begin{tabular}{lccccr}
		\hline
		\noalign{\smallskip}
		Method & Batch size & 100 eps & 200 eps \\
		\hline
		\noalign{\smallskip}
		SimCLR~\cite{2020_ICML_Chen} &4096&66.5&68.3 \\
		MoCo v2~\cite{2020_arxiv_Chen} &256&67.4&69.9 \\
		BYOL~\cite{2020_NIPS_Grill} &4096&66.5&70.6 \\
		SwAV~\cite{2020_NIPS_Caron} &4096&66.5&69.1 \\
		SimSiam~\cite{2021_CVPR_Chen} &256 &68.1&70.0 \\
		W-MSE 4~\cite{2021_ICML_Ermolov} &4096 &69.4&- \\
		Zero-CL~\cite{2022_ICLR_Zhang2} &1024 &68.9&- \\
		\hline
		\noalign{\smallskip}
		BYOL~\cite{2020_NIPS_Grill} (repro.) &512&66.1&69.2 \\
		SwAV~\cite{2020_NIPS_Caron} (repro.) &512&65.8&67.9 \\
		W-MSE 4~\cite{2021_ICML_Ermolov} (repro.) &512&66.7&67.9 \\
		\textbf{CW-RGP 4 (ours)} & 512 &\textbf{69.7}&\textbf{71.0} \\
		\hline
	\end{tabular}
	\setlength{\tabcolsep}{1.4pt}
\end{wraptable}
\vspace{-0.1in}
%\vspace{-0.05in}
\subsection{Experiments for Empirical Study}
\label{sec:Experiments}
\vspace{-0.1in}
In this section, we conduct experiments to validate the effectiveness of our proposed CW-RGP. We evaluate the performances of CW-RGP for classification  on CIFAR-10, CIFAR-100~\cite{2009_TR_Alex}, STL-10~\cite{coates2011analysis}, TinyImageNet~\cite{le2015tiny} and ImageNet~\cite{2009_ImageNet}. We also evaluate the effectiveness in transfer learning, for a pre-trained model using CW-RGP.  We run the experiments on one workstation with 4 GPUs. For more details of implementation and training protocol, please refer to \TODO{\AP}~\ref{sec:baseline experiments} and~\ref{sec:large-scale}.
%It is similar to \textsl{W-MSE}~\cite{2021_ICML_Ermolov} that CW-RGP 2 and CW-RGP 4 correspond to our method with d = 2 and d = 4 positives extracted per image respectively. The experiment details are shown in Appendix~\ref{sec:details in experiments}.
\vspace{-0.1in}
\paragraph{Evaluation for Classification}
We first conduct experiments on small and medium size datasets (including CIFAR-10, CIFAR-100, STL-10 and TinyImageNet), strictly following the setup of \textsl{W-MSE} paper~\cite{2021_ICML_Ermolov}. Our CW-RGP inherits the advantages of W-MSE in exploiting different views. CW-RGP 2 and CW-RGP 4 indicate our methods with $s=2$ and $s=4$ positive views extracted per image respectively, similar to \textsl{W-MSE}~\cite{2021_ICML_Ermolov}. 
%Tab.~\ref{tab:baselines} shows the results of the experiments on small and medium size datasets.  
\Revise{
The results of baselines shown in Table.~\ref{tab:baselines}  are partly inherited in~\cite{2021_ICML_Ermolov}, except that we reproduce certain baselines under the same training and evaluation settings as in~\cite{2021_ICML_Ermolov} (some different hyper-parameter settings are shown in \TODO{\AP}~\ref{sec:baseline experiments}).} %that we reproduce Shuffled-DBN~\cite{2021_ICCV_Hua}, Barlow Twins~\cite{2021_NIPS_Zbontar}, VICReg~\cite{2022_ICLR_Adrien}, Zero-ICL~\cite{2022_ICLR_Zhang2} and SimSiam~\cite{2021_CVPR_Chen} (on CIFAR-10 and CIFAR-100). }
We observe that CW-RGP obtains the highest accuracy on almost all the datasets except Tiny-ImageNet. Besides,  CW-RGP with 4 views are generally better than 2, similar to W-MSE. 
%Except on Tiny-Imagenet, the CW-RGP 4 gets the highest accuracy on the other datasets. 
%At the same time, we note that the accuracy of CW-RGP-2 is very close to that of W-MSE-4. 
These results show that CW-RGP is a competitive SSL method. We also confirm that CW with random group partition could obtain a higher performance than BW (and with random group partition), comparing CW-RGP to W-MSE and Shuffled-DBN. \par

We then conduct experiments on large-scale ImageNet, strictly following the setup of SimSiam paper~\cite{2021_CVPR_Chen}. The results of baselines shown in Table~\ref{tab:baseline-imageNet} are mostly reported in~\cite{2021_CVPR_Chen}, except that the result of W-MSE 4 is from the W-MSE paper ~\cite{2021_ICML_Ermolov} \Revise{and we reproduce BYOL~\cite{2020_NIPS_Grill}, SwAV~\cite{2020_NIPS_Caron} and W-MSE 4~\cite{2021_ICML_Ermolov} under a batch size of 512 based on the same training and evaluation settings as in~\cite{2021_CVPR_Chen} for fairness.}
%we show the ImageNet results using 100 and 200 training epochs, and we compare CW-RGP 4 with the results of other state-of-the-art approaches as mostly reported in~\cite{2021_CVPR_Chen}, while the result of W-MSE 4 refers to~\cite{2021_ICML_Ermolov}. 
CW-RGP 4 is trained with a batch size of 512 and gets the highest accuracy among all methods under both 100 and 200 epochs training. We find that our CW-RGP can also work well when combined with the whitening penalty used in VICReg. Note that we also try a batch size of 256 under 100-epoch training, which gets the top-1 accuracy of $69.5\%$. 
%Due to the limitation of GPU, we cannot try a larger batch size, but these results are enough to prove that CW-RGP can perform competitively without a large batch size.
% \vspace{0.07in}

\setlength{\tabcolsep}{1.4pt}
\setlength{\tabcolsep}{0.5pt}
\begin{table}
	\begin{center}
		\caption{Transfer Learning. All competitive unsupervised methods are based on 200-epoch pre-training in ImageNet (IN).   The table \Revise{is} mostly inherited from~\cite{2021_CVPR_Chen}. Our CW-RGP is performed with 3 random seeds, with mean and standard deviation reported.
			%			\textsl{VOC 07+12 detection}: Faster R-CNN fine-tuned in VOC 2007 trainval + 2012 train, evaluated in VOC 2007 test. \textsl{COCO detection and COCO instance segmentation}: Mask R-CNN (1× schedule) fine-tuned in COCO 2017 train, evaluated in COCO 2017 val. All Faster/Mask R-CNN models are with the C4-backbone.
		}
		\label{tab:transfer}
		\begin{footnotesize}
			\begin{tabular}{llllllllllr}
				\hline
				\multirow{2}{*}{Method} & 
				\multicolumn{3}{c}{VOC 07+12 detection} &
				\multicolumn{3}{c}{COCO detection} &
				\multicolumn{3}{c}{COCO instance seg.} \\
				&AP$_{50}$&AP&AP$_{75}$ &AP$_{50}$&AP&AP$_{75}$ &AP$_{50}$&AP&AP$_{75}$ \\
				\hline
				\noalign{\smallskip}
				Scratch &60.2&	33.8&	33.1&	44.0&	26.4&	27.8&	46.9&	29.3&	30.8& \\
				% \noalign{\smallskip}
				IN-supervised &81.3&	53.5&	58.8&	58.2&	38.2&	41.2&	54.7&	33.3&	35.2& \\
				\hline
				\noalign{\smallskip}
				SimCLR~\cite{2020_ICML_Chen} &81.8&	55.5&	61.4&	57.7&	37.9&	40.9&	54.6&	33.3&	35.3& \\	
				% \noalign{\smallskip}
				MoCo v2~\cite{2020_arxiv_Chen} &\textbf{82.3}&	57.0&	63.3&	58.8&	39.2&	42.5&	55.5&	34.3&	36.6& \\
				% \noalign{\smallskip}
				BYOL~\cite{2020_NIPS_Grill} &81.4&	55.3&	61.1&	57.8&	37.9&	40.9&	54.3&	33.2&	35.0& \\
				% \noalign{\smallskip}
				SwAV~\cite{2020_NIPS_Caron} &81.5&	55.4&	61.4&	57.6&	37.6&	40.3&	54.2&	33.1&	35.1& \\
				% \noalign{\smallskip}
				SimSiam~\cite{2021_CVPR_Chen} &82.0&	56.4&	62.8&	57.5&	37.9&	40.9&	54.2&	33.2&	35.2& \\
				\hline
				%	\noalign{\smallskip}			
				%	\textbf{CWRG-4 (100 eps)} &82.2&	57.0&	63.5&	60.6&	\textbf{40.9}&	\textbf{44.4}&	\textbf{57.5}&	\textbf{35.7}&	\textbf{38.1}& \\
				\noalign{\smallskip}
				\textbf{CW-RGP (ours)} &82.2\tiny$_{\pm0.07}$ &	\textbf{57.2$_{\pm0.10}$}&
				\textbf{63.8$_{\pm0.11}$}&
				\textbf{60.5$_{\pm0.28}$}&	\textbf{40.7$_{\pm0.14}$}&	\textbf{44.1$_{\pm0.14}$}&	\textbf{57.3$_{\pm0.16}$}&	\textbf{35.5$_{\pm0.12}$}&	\textbf{37.9$_{\pm0.14}$}& \\
				% \noalign{\smallskip}
				\hline
			\end{tabular}
		\vspace{-0.2in}
		\end{footnotesize}
	\end{center}
\end{table}
\vspace{-0.1in}
\paragraph{Transfer to downstream tasks}
We examine the representation quality by transferring our model to other tasks, including VOC~\cite{2010_IJCV_VOC} object detection, COCO~\cite{2014_ECCV_COCO} object detection and instance segmentation. 
 We use the baseline (except for the pre-training model, the others are exactly the same) of the detection codebase from MoCo~\cite{2020_CVPR_He} for CW-RGP  to produce the results.  
The results of baselines shown in Table\ref{tab:transfer}  are mostly inherited from~\cite{2021_CVPR_Chen}.
We clearly observe that CW-RGP  performs better than or on par with these state-of-the-art approaches on COCO object detection and instance segmentation, which shows the great potential of CW-RGP in transferring to downstream tasks.

\begin{wraptable}[10]{r}{8cm}
	\vspace{-0.1in}
	\footnotesize
	\centering
	\caption{Results of ablation for random group partition. }
	\label{tab:rgp_ablation}
	\setlength{\tabcolsep}{3.5pt}
	\begin{tabular}{lcccccccc}
			\hline
			\noalign{\smallskip}
			\multirow{2}{*}{Method} & 
			\multicolumn{2}{c}{CIFAR-10} &
			\multicolumn{2}{c}{CIFAR-100} & \\ 
			&linear&5-nn&linear&5-nn \\
			\hline
			\noalign{\smallskip}
			CW 2  & 91.66&88.99&66.26&56.36 \\
			CW-GP 2  & 91.61&88.89&66.17&56.53 \\
			\textbf{CW-RGP 2} & \textbf{91.92}&\textbf{89.54}&\textbf{67.51}&\textbf{57.35} \\
			\hline
			CW 4  & 92.10&90.12&66.90&57.12 \\
			CW-GP 4  & 92.08&90.06&67.34&57.28 \\
			\textbf{CW-RGP 4}& \textbf{92.47}&\textbf{90.74}&\textbf{68.26}&\textbf{58.67} \\
			\hline
	\end{tabular}
\end{wraptable}

\vspace{0.2in}
\paragraph{Ablation for Random Group Partition.}
\Revise{
We also conduct experiments to show the advantages of random group partition for channel whitening. We use `CW', `CW-GP' and `CW-RGP' to indicate channel whitening without group partition, with group partition and with random group partition, respectively. We further consider the setup with $s=2$ and $s=4$ positive views. We use the same setup as in Table~\ref{tab:baselines} and show the results in Table~\ref{tab:rgp_ablation}. We have similar observation as in Figure~\ref{fig:exp_group_CW} that CW with random group partition improves the performance.}
% while CW with group partition even leads to a decline in performance.
%with different batch sizes from 32  1024
% \begin{figure}[tp]
% 	\vspace{0.1in}
% 	\centering
% 	\hspace{-0.5in}	\subfigure[]{
% 		\begin{minipage}[c]{.32\linewidth}
% 			\centering
% 			\includegraphics[width=4.5cm]{./figures/small_bs/acc.pdf}
% 		\end{minipage}
% 	}
% 	\hspace{0.5in}	\subfigure[]{
% 		\begin{minipage}[c]{.32\linewidth}
% 			\centering
% 			\includegraphics[width=4.5cm]{./figures/small_bs/knn.pdf}
% 		\end{minipage}
% 	}
% 	%	\vspace{-0.05in}
% 	\caption{Decrease in top-1 and 5-nn accuracy (in \% points) of CW and BW at 100 epochs on ImageNet-100.}
% 	\label{fig:small batch}
% 	\vspace{-0.18in}
% \end{figure}

\begin{wrapfigure}[9]{r}{8cm}


	\vspace{-0.2in}
	\centering
	\subfigure[]{
		\begin{minipage}[c]{0.45\linewidth}
			\centering
			\includegraphics[width=3.5cm]{./figures/small_bs/acc.pdf}
		\end{minipage}
	}
	\subfigure[]{
		\begin{minipage}[c]{0.45\linewidth}
			\centering
			\includegraphics[width=3.5cm]{./figures/small_bs/knn.pdf}
		\end{minipage}
	}
	\vspace{-0.1in}
	\caption{Decrease in top-1 and 5-nn accuracy (in \% points) of CW and BW at 100 epochs on ImageNet-100.}
	\label{fig:small batch}
\end{wrapfigure}

\vspace{-0.1in}
\paragraph{Ablation for Batch Size.} Here, we  conduct experiments to empirically show the advantages of CW over BW, in terms of the stability using different batch size. We train CW and BW on ImageNet-100, using batch size ranging in $\{32, 64, 128, 256\}$. Figure~\ref{fig:small batch} shows the results. We can find that CW is more robust for small batch size training. 
\section{Conclusion and Limitation}
\label{sec:conclusion}
In this paper, we invested whitening loss for SSL, and observed several interesting phenomena with further clarification based on our analysis framework. We showed that batch whitening (BW) based methods only require the embedding to be full-rank, which is also a sufficient condition for collapse avoidance.
We proposed channel whitening with random group partition (CW-RGP) that is well motivated theoretically in avoiding a collapse and has been validated empirically in learning good representation.
\vspace{-0.1in}
\paragraph{Limitation.} 
Our work only shows how to avoid collapse by using whitening loss, but does not explicitly show what should be the extent of whitening of a good representation. We note that a concurrent work addresses this problem by connecting the eigenspectrum of a representation to a power law~\cite{2022_arxiv_Ghosh}, and shows the coefficient of the
power law is a strong indicator for the effects of representation.  We believe our work can be further extended when combined with the analyses from~\cite{2022_arxiv_Ghosh}.
Besides, our work does not answer how the projector affects the extents of whitening between \Term{encoding} and \Term{embedding}~\cite{2022_ICML_Bobby}, which is important to answer why \Term{encoding} is usually used as a representation for evaluation, rather than the whitened output or \Term{embedding}. Our attempts, shown in \TODO{\AP}~\ref{sec:investigating projector}, provide preliminary results, but does not offer an  answer to this question.

\Revise{
\noindent\textbf{Acknowledgement}
This work was partially supported  by the National Key Research and Development Plan of China under Grant 2021ZD0112901, National Natural Science Foundation of China (Grant No. 62106012), the Fundamental Research Funds for the Central Universities. 
}
\clearpage
% ---- Bibliography ----
%
% BibTeX users should specify bibliography style 'splncs04'.
% References will then be sorted and formatted in the correct style.
%
\bibliographystyle{splncs04}
\bibliography{SSL}
%\newpage
%%%%%%%%%%%%%%%%%%%%%%%%%%%%%%%%%%%%%%%%%%%%%%%%%%%%%%%%%%%%

%%%% BEGIN INSTRUCTIONS %%%
%The checklist follows the references.  Please
%read the checklist guidelines carefully for information on how to answer these
%questions.  For each question, change the default \answerTODO{} to \answerYes{},
%\answerNo{}, or \answerNA{}.  You are strongly encouraged to include a {\bf
%	justification to your answer}, either by referencing the appropriate section of
%your paper or providing a brief inline description.  For example:
%\begin{itemize}
%	\item Did you include the license to the code and datasets? \answerYes{See Section~\ref{gen_inst}.}
%	\item Did you include the license to the code and datasets? \answerNo{The code and the data are proprietary.}
%	\item Did you include the license to the code and datasets? \answerNA{}
%\end{itemize}
%Please do not modify the questions and only use the provided macros for your
%answers.  Note that the Checklist section does not count towards the page
%limit.  In your paper, please delete this instructions block and only keep the
%Checklist section heading above along with the questions/answers below.
%%%% END INSTRUCTIONS %%%
\newpage
\appendix
\begin{appendix}
	\section{Details of Experimental Setup for Small and Medium Size Datasets}
	In this section, we provide the details of the implementation and training protocol for the experiments on small and medium size datasets (including  CIFAR-10, CIFAR-100, STL-10, Tiny-ImageNet and ImageNet-100). 
	Our implementation is based on the  released codebase of \textsl{W-MSE}~\cite{2021_ICML_Ermolov}\footnote{https://github.com/htdt/self-supervised}.
	
	\label{sec:details in experiments}
	\subsection{Datasets}
	The followings are the descriptions of 5 small and medium scale datasets, commonly used to evaluate the effectiveness of SSL models.\par
    • CIFAR-10 and CIFAR-100~\cite{2009_TR_Alex}, two small-scale datasets composed of 32 × 32 images with 10 and 100 classes, respectively. 

	• STL-10~\cite{coates2011analysis}, derived from ImageNet~\cite{2009_ImageNet}, with 96 × 96 resolution images and more than 100K training samples.
	
	• Tiny ImageNet~\cite{le2015tiny}, a reduced version of ImageNet~\cite{2009_ImageNet}, composed of 200 classes with images scaled down to 64 × 64. The total number of images is: 100K (training) and 10K (testing).
	
	• ImageNet-100~\cite{tian2020contrastive}, a random 100-class subset of ImageNet~\cite{2009_ImageNet}. 
	
	\subsection{Analytical Experiments}
	\label{sec:analytical experiments}
	In section 3 of the submitted paper, we conduct several experiments on CIFAR-10 to illustrate our analysis. We provide a brief description of the setup in the caption of Figure 2 of the submitted paper. Here,  we describe the details of  these experiments. All experiments are uniformly based on the following training settings, unless otherwise stated in the  figures of the submitted paper.

	\paragraph{Training Settings}
	\label{para:training setting}
	We use the ResNet-18 as the encoder (dimension of \Term{encoding} is 512.), a two layer MLP with ReLU and BN appended as the projector (dimension of the hidden layer and embedding are 1024 and 64 respectively). The model is trained on CIFAR-10 for 200 epochs with a batch size of 256, using Adam optimizer~\cite{2014_CoRR_Kingma} with a learning rate of $3\times10^{-3}$, and  learning rate warm-up for the first 500 iterations and a 0.2 learning rate drop at the last 50 and 25 epochs. The weight decay is set as $10^{-6}$. All transformations are performed with 2 positives extracted per image with standard data argumentation (see Section~\ref{para:image-transform} for details). We use the same evaluation protocol as in \textsl{W-MSE}~\cite{2021_ICML_Ermolov}. 
	
	\paragraph{Method Settings}
	In the experiments shown in Figure 2 and Figure 4 of the paper, fully PCA whitening suffers the dimensional collapse and further produces numerical instability. Therefore, we use the MSE loss without L2 normalization and partition its channel dimension into 4 groups, which makes it possible to finish training in a normal way. Despite the reduced setting for whitening, PCA whitening still has problems as shown in Figure 2 and Figure 4. 
	In other experiments, we use the MSE loss with L2 normalization for all methods, as \textsl{W-MSE}~\cite{2021_ICML_Ermolov} does.

	\subsection{Experimental Setup for Comparison of Baselines}
	\label{sec:baseline experiments}
 In section 4.1 of the paper, we compare our channel whitening with random group partition (CW-RGP) to the state-of-the-art SSL methods on CIFAR-10, CIFAR-100, STL-10 and Tiny-ImageNet datasets. Here, we describe the training details of these experiments. 	
 Except for the hyper-parameters relating to CW-RGP itself (\eg, the group number), our experimental setups are strictly following the setup of the W-MSE~\cite{2021_ICML_Ermolov} paper, as following descriptions.
 %, unless otherwise stated.
%Here, we describe the experimental setup on small-scale datasets, which strictly follows the setup of \textsl{W-MSE}~\cite{2021_ICML_Ermolov}.

	\paragraph{Encoder and Projector} 
	We use the ResNet-18~\cite{2015_CVPR_He} as the encoder and the dimension of \Term{encoding} is 512. We use a 2-layers MLP as the projector: one hidden layer with BN and Relu applied to it and a linear layer as output. In the experiments of CIFAR-10, CIFAR-100 and STL-10, the dimension of the hidden layer in the projector and \Term{embedding} are  1024 and 512, respectively. In the experiments of Tiny-ImageNet, the dimension of the hidden layer in the projector and \Term{embedding} are  2048 and 1024, respectively.
	
		\paragraph{Image Transformation Details}
	\label{para:image-transform}
	We make the image transformation following the details in ~\cite{2020_ICML_Chen}, which extract crops with a random
	size from 0.2 to 1.0 of the original area and a random aspect ratio from 3/4 to 4/3 of the original aspect
	ratio. The horizontal mirroring is applied with a probability of 0.5. The color jittering configuration is (0.4, 0.4, 0.4, 0.1) with a probability of 0.8 and grayscaling with a probability of 0.1. For ImageNet-100, the crop size is from 0.08 to 1.0, jittering is strengthened to (0.8, 0.8, 0.8, 0.2), grayscaling probability is 0.2, and Gaussian blurring is with a  probability of 0.5. We use only one crop at testing time in all the experiments (standard protocol).
	
		\paragraph{Optimizer and Learning Rate Schedule }
	%We follow the same settings as in \textsl{W-MSE}~\cite{2021_ICML_Ermolov}. 
	We use the Adam optimizer~\cite{2014_CoRR_Kingma}.
	We apply the same number of epochs and learning rate schedule to all the compared methods. 
	Specifically, for CIFAR-10 and CIFAR-100, we use 1,000 epochs with a learning rate of $3\times10^{-3}$; for STL-10, 2,000 epochs with a learning rate of $2\times10^{-3}$; for Tiny-ImageNet, 1000 epochs with a  learning rate of $2\times10^{-3}$. In these experiments, we use a 0.2 learning rate drop at the last 50 and 25 epochs. The weight decay is $10^{-6}$. 
	% 具体地，在CIFAR10和CIFAR100上，我们使用1000个epoch，学习率为3 × 103 ; 在Tiny ImageNet和ImageNet-100上，我们使用400个epoch，学习率为2× 103。对于STL10，为2000个epoch，学习率2× 103。
	In all experiments, we use learning rate warm-up for the first 500 iterations of the optimizer.
%	\paragraph{Batch Size}
	We use a batch size of 512 for CW-RGP 2 in CIFAR-100, STL-10 and Tiny ImageNet experiments, while 256 for the others.

	\paragraph{Evaluation Protocol}
We use the same setup of evaluation protocol as in \textsl{W-MSE}~\cite{2021_ICML_Ermolov}: training the linear classifier for 500 epochs using the Adam optimizer and the labeled training set of each specific dataset, without data augmentation; the learning rate is exponentially decayed from $10^{-2}$ to $10^{-6}$ and the weight decay is $5 \times 10^{-6}$. In addition, we also evaluate the accuracy of a k-nearest neighbors classifier (k-NN, k = 5) in these experiments.
%	In the ImageNet experiment, we use the evaluation following the \textsl{Simsiam}~\cite{2021_CVPR_Chen}: training the \textsl{linear classifier} with \textsl{LARS} optimizer(with an origin learning rate of 0.1 and no weight decay).

%	\paragraph{Batch Slicing and Random Group Partition}
For our CW-RGP, we denote `RGP$_2$' to indicate CW using random group partition, with a group number of 2.
We find that our CW-RGP can also work well when batch-slicing, proposed in W-MSE~\cite{2021_ICML_Ermolov}, is used. We thus also use batch slicing (a default setting in the released code of W-MSE) to ensure that the channel number is larger than the batch size for our CW-RGP. In the experiments of CIFAR-10, CIFAR-100 and STL-10, we use RGP$_4$ and the slicing sub-batch size is 64. In the experiments of Tiny-ImageNet, we use RGP$_2$ and the slicing sub-batch size is 128. 
 %In the CIFAR-10 experiment for CW-RGP 2, we use 4 iterations of RGP and BL, while 1 iteration in all other experiments.
%(see ~\ref{sec:batch slicing})

\Revise{For a fair comparison, we also  reproduce several related methods (including SimSiam~\cite{2021_CVPR_Chen}, Barlow Twins~\cite{2021_NIPS_Zbontar}, VICReg~\cite{2022_ICLR_Adrien}, and Zero-ICL~\cite{2022_ICLR_Zhang2}) under the same training and evaluation settings as in~\cite{2021_ICML_Ermolov}. However, These methods using the configuration of hyper-parameters based on the original baselines in~\cite{2021_ICML_Ermolov} get poor results in our training mode (e.g, a 2-layers projector, using Adam optimizer~\cite{2014_CoRR_Kingma}, `step' learning rate schedule, and so on). In the reproduction experiments, we use the recommended 2048-2048-2048 projector for these methods  which obtains significantly better results than the default 1024-512 or 1024-64 projector in~\cite{2021_ICML_Ermolov}. In particular, for Barlow Twins~\cite{2021_NIPS_Zbontar}, we set the trade-off coefficient  $\lambda$ to 0.0078 instead of the recommended 0.0051 in ~\cite{2021_NIPS_Zbontar}, since that using the recommended 0.0051 has a significant degenerated performance in our experiments.
	% It is likely that we donot well reproduce Barlow Twins, since the recommended hyper-parameters of Barlow Twins is for the training of ImageNet. 
	 For SimSiam~\cite{2021_CVPR_Chen}, we use the SGD optimizer and cosine learning rate schedule as recommended in~\cite{2021_CVPR_Chen}, because the loss value fluctuates sharply and it leads to very poor results when we use Adam optimizer for the training. We conjecture that the optimization mechanism of Adam may be not suitable for the training of predictor in SimSiam~\cite{2021_CVPR_Chen}. Other settings not mentioned here are the same as in ~\cite{2021_ICML_Ermolov} by default.}

    \section{Proofs}
  \subsection{proof of  $\D{\theta} = \DT{\theta}$.}
  \label{sec:proof b.1}
  
As stated in Section 3.3 of the paper, given one mini-batch input $\X{}$ with two augmented views, we say the loss:
\begin{eqnarray}
	\label{sup_eqn:MB-W-Loss}
	\mathcal{L} (\X{}) = \frac{1}{m} \| \NZ{1} - \NZ{2}  \|_{F}^2.
\end{eqnarray}
and the proxy loss:
{\setlength\abovedisplayskip{3pt}
	\setlength\belowdisplayskip{3pt}
	\begin{eqnarray}
		\label{eqn:W-MSE-loss-prox}
		\mathcal{L}^{'}(\X{}) = \frac{1}{m} \| \NZ{1} - (\NZ{2})_{st}  \|_{F}^2
		+ \frac{1}{m} \| (\NZ{1})_{st} - \NZ{2}  \|_{F}^2,	 	
	\end{eqnarray}
}
has the same gradients \wrt the learnable parameters $\theta$, \ie,  $\D{\theta} = \DT{\theta}$. Here, we provide the proof. 

\begin{proof}
Note that $\NZ{1}$ and $\NZ{2}$ are the function of  $\theta$. Based on the chain rule, we have:
\begin{eqnarray}
	\label{sup_eqn:loss_derivation}
	\D{\theta}&= &  \frac{\partial  \frac{1}{m} \| \NZ{1} - \NZ{2}  \|_{F}^2 } {\partial \theta}  \nonumber \\ 
	&= &\frac{1}{m} \frac{\partial \| \NZ{1} - \NZ{2}  \|_{F}^2 } {\partial \NZ{1}} \frac{\partial \NZ{1}}{\partial \theta}
	+ \frac{1}{m} \frac{\partial \| \NZ{1} - \NZ{2}  \|_{F}^2 } {\partial \NZ{2}} \frac{\partial \NZ{2}}{\partial \theta}.
\end{eqnarray}
Similarly, we have:
\begin{eqnarray}
	\label{sup_eqn:proxy_loss_derivation}
	\DT{\theta}&= &  \frac{\partial  (\frac{1}{m} \| \NZ{1} - (\NZ{2})_{st}  \|_{F}^2
		+ \frac{1}{m} \| (\NZ{1})_{st} - \NZ{2}  \|_{F}^2)} {\partial \theta}    \nonumber \\
&= &\frac{1}{m} \frac{\partial \| \NZ{1} - (\NZ{2})_{st}  \|_{F}^2 } {\partial \NZ{1}} \frac{\partial \NZ{1}}{\partial \theta}
+ \frac{1}{m} \frac{\partial \| (\NZ{1})_{st} - \NZ{2}  \|_{F}^2 } {\partial \NZ{2}} \frac{\partial \NZ{2}}{\partial \theta} \nonumber \\
	&= &\frac{1}{m} \frac{\partial \| \NZ{1} - \NZ{2}  \|_{F}^2 } {\partial \NZ{1}} \frac{\partial \NZ{1}}{\partial \theta}
+ \frac{1}{m} \frac{\partial \| \NZ{1} - \NZ{2}  \|_{F}^2 } {\partial \NZ{2}} \frac{\partial \NZ{2}}{\partial \theta}.
\end{eqnarray}
From Eqn.~\ref{sup_eqn:loss_derivation} and Eqn.~\ref{sup_eqn:proxy_loss_derivation}, we have $\D{\theta} = \DT{\theta}$.
\end{proof}

\subsection{Proof of Proposition 1}
\label{sec:proof b.2}
As stated in Section 3.3 of the paper, by looking into the first term of Eqn.~\ref{eqn:W-MSE-loss-prox}, we aim to minimize the following objective:
\begin{eqnarray}
	\label{sup_eqn:W-MSE-loss-prox-1}
  \min_{\Z{1}}	\mathcal{L}^{'}_{1} (\Z{1}) = \min_{\Z{1}} \frac{1}{m} \| \phi(\Z{1}) \Z{1} - (\NZ{2})_{st}  \|_{F}^2.
\end{eqnarray}
Here, $\phi(\Z{1})$ is the whitening matrix that depends on $\Z{1}$, and $\NZ{2}$ is a whitened matrix with $\frac{1}{m} \NZ{2} \NZ{2}^T = \mathbf{I}$. We prove the following Proposition.

\begin{proposition}
%	\vspace{-0.1in}
	\label{pro1_2}
	Let $\mathbb{A}= \arg{min}_{\Z{1}} \mathcal{L}^{'}_{1} (\Z{1}) $. We have that $\mathbb{A} $ is not an empty set, and  $\forall \Z{1} \in \mathbb{A}$, $\Z{1}$ is full-rank. Furthermore, for any $\{\sigma_i\}_{i=1}^{d_z}$ with $\sigma_1 \ge \sigma_2 \ge,...,\sigma_{d_z}>0$, we construct  $\widetilde{\mathbb{A}}=\{\Z{1}| \Z{1}= \mathbf{U}_2$ diag($\sigma_1, \sigma_2,...,\sigma_{d_z}$)  $\mathbf{V}_2^T$, where $\mathbf{U}_2 \in \mathbb{R}^{_{d_z} \times _{d_z}}$ and  $\mathbf{V}_2 \in \mathbb{R}^{m \times _{d_z}} $ are from the singular value decomposition of $\NZ{2}$, \ie, $\mathbf{U}_2 (\sqrt{m} \mathbf{I}) \mathbf{V}_2^T = \NZ{2}$.  When we use ZCA whitening, we have $\widetilde{\mathbb{A}} \subseteq \mathbb{A}. $
\end{proposition}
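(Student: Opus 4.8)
The plan is to reduce everything to the defining property of whitening --- that $\frac{1}{m}\widehat{\mathbf{Z}}_{1}\widehat{\mathbf{Z}}_{1}^{T}=\mathbf{I}$ for any full-rank centered $\mathbf{Z}_{1}$, whatever whitening transformation $\phi$ is used --- together with the elementary fact $\mathrm{rank}(\phi(\mathbf{Z}_{1})\mathbf{Z}_{1})\le\mathrm{rank}(\mathbf{Z}_{1})$. First I would use the whitening identity to note $\|\widehat{\mathbf{Z}}_{1}\|_{F}^{2}=\mathrm{tr}(\widehat{\mathbf{Z}}_{1}\widehat{\mathbf{Z}}_{1}^{T})=m\,d_{z}$ is a constant over the admissible (full-rank) domain, and likewise $\|\widehat{\mathbf{Z}}_{2}\|_{F}^{2}=m\,d_{z}$. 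Expanding the squared norm then gives $\mathcal{L}^{'}_{1}(\mathbf{Z}_{1})=\frac{1}{m}\big(\|\widehat{\mathbf{Z}}_{1}\|_{F}^{2}+\|\widehat{\mathbf{Z}}_{2}\|_{F}^{2}-2\langle\widehat{\mathbf{Z}}_{1},\widehat{\mathbf{Z}}_{2}\rangle_{F}\big)=2d_{z}-\frac{2}{m}\langle\widehat{\mathbf{Z}}_{1},\widehat{\mathbf{Z}}_{2}\rangle_{F}$, so minimizing $\mathcal{L}^{'}_{1}$ is equivalent to maximizing the inner product $\langle\widehat{\mathbf{Z}}_{1},\widehat{\mathbf{Z}}_{2}\rangle_{F}$.

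Next I would identify the optimal value and the minimizer set. Since $\mathcal{L}^{'}_{1}$ is a squared Frobenius norm it is nonnegative, so $\min\mathcal{L}^{'}_{1}\ge 0$ with equality precisely when $\widehat{\mathbf{Z}}_{1}=\widehat{\mathbf{Z}}_{2}$ (equivalently, von Neumann's trace inequality bounds $\langle\widehat{\mathbf{Z}}_{1},\widehat{\mathbf{Z}}_{2}\rangle_{F}\le\sum_{i}\sigma_{i}(\widehat{\mathbf{Z}}_{1})\sigma_{i}(\widehat{\mathbf{Z}}_{2})=m\,d_{z}$, and since all singular values equal $\sqrt{m}$, equality forces $\widehat{\mathbf{Z}}_{1}=\widehat{\mathbf{Z}}_{2}$). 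Once the value $0$ is known to be attained, $\mathbb{A}=\{\mathbf{Z}_{1}:\phi(\mathbf{Z}_{1})\mathbf{Z}_{1}=\widehat{\mathbf{Z}}_{2}\}$, and any $\mathbf{Z}_{1}\in\mathbb{A}$ satisfies $\mathrm{rank}(\mathbf{Z}_{1})\ge\mathrm{rank}(\phi(\mathbf{Z}_{1})\mathbf{Z}_{1})=\mathrm{rank}(\widehat{\mathbf{Z}}_{2})=d_{z}$, hence is full-rank; conversely a rank-deficient $\mathbf{Z}_{1}$ yields $\widehat{\mathbf{Z}}_{1}$ of rank $<d_{z}$, so $\widehat{\mathbf{Z}}_{1}\neq\widehat{\mathbf{Z}}_{2}$ and $\mathcal{L}^{'}_{1}(\mathbf{Z}_{1})>0$, excluding it from $\mathbb{A}$.

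It remains to exhibit a minimizer --- and in fact to show every element of $\widetilde{\mathbb{A}}$ is one when $\phi=\phi_{ZCA}$, which establishes $\mathbb{A}\neq\emptyset$ and $\widetilde{\mathbb{A}}\subseteq\mathbb{A}$ at once. Fixing $\sigma_{1}\ge\cdots\ge\sigma_{d_{z}}>0$ and setting $\mathbf{Z}_{1}=\mathbf{U}_{2}\,\mathrm{diag}(\sigma_{1},\ldots,\sigma_{d_{z}})\,\mathbf{V}_{2}^{T}$, I would verify: (a) $\mathbf{Z}_{1}$ is centered --- since $\widehat{\mathbf{Z}}_{2}$ is the whitening of a centered $\mathbf{Z}_{2}$ we have $\widehat{\mathbf{Z}}_{2}\mathbf{1}=0$, which with $\widehat{\mathbf{Z}}_{2}=\sqrt{m}\,\mathbf{U}_{2}\mathbf{V}_{2}^{T}$ forces $\mathbf{V}_{2}^{T}\mathbf{1}=0$, hence $\mathbf{Z}_{1}\mathbf{1}=0$; (b) the given factorization is already an SVD, so $\mathbf{Z}_{1}$ is full-rank with covariance $\frac{1}{m}\mathbf{Z}_{1}\mathbf{Z}_{1}^{T}=\frac{1}{m}\mathbf{U}_{2}\,\mathrm{diag}(\sigma_{1}^{2},\ldots,\sigma_{d_{z}}^{2})\,\mathbf{U}_{2}^{T}$; (c) therefore $\phi_{ZCA}(\mathbf{Z}_{1})=\mathbf{U}_{2}\,\mathrm{diag}(\sqrt{m}/\sigma_{1},\ldots,\sqrt{m}/\sigma_{d_{z}})\,\mathbf{U}_{2}^{T}$ and $\widehat{\mathbf{Z}}_{1}=\phi_{ZCA}(\mathbf{Z}_{1})\mathbf{Z}_{1}=\sqrt{m}\,\mathbf{U}_{2}\mathbf{V}_{2}^{T}=\widehat{\mathbf{Z}}_{2}$, so $\mathcal{L}^{'}_{1}(\mathbf{Z}_{1})=0$. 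Hence $\mathbf{Z}_{1}\in\mathbb{A}$, and combined with the previous paragraph every element of $\mathbb{A}$ is full-rank.

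I expect the main obstacle to be the ZCA bookkeeping in the last step: making sure the constructed $\mathbf{Z}_{1}$ is genuinely centered (this is exactly where the assumption that $\widehat{\mathbf{Z}}_{2}$ arises from whitening a centered embedding is used), and checking that ZCA whitening of $\mathbf{Z}_{1}$ returns $\widehat{\mathbf{Z}}_{2}$ \emph{exactly} rather than some other matrix with identity covariance --- this is the place where the particular whitening transformation matters, and is precisely why $\widetilde{\mathbb{A}}\subseteq\mathbb{A}$ would fail for, say, PCA whitening, whose whitened output $\sqrt{m}\,\mathbf{V}_{1}^{T}$ equals $\widehat{\mathbf{Z}}_{2}$ only when $\mathbf{U}_{2}=\mathbf{I}$. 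A minor point to handle carefully is that $\phi(\mathbf{Z}_{1})$ is (in the idealized, regularization-free setting) defined only on full-rank embeddings, so the domain of $\mathcal{L}^{'}_{1}$ and the rank arguments above should be phrased consistently.
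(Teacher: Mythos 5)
Your proposal is correct and follows essentially the same route as the paper's proof: establish that the minimum value $0$ is attained by an explicit construction, use $\mathrm{rank}(\phi(\Z{1})\Z{1})\le\mathrm{rank}(\Z{1})$ together with $\mathrm{rank}(\NZ{2})=d_z$ to force full-rankness of every minimizer, and verify via the ZCA formula that each element of $\widetilde{\mathbb{A}}$ whitens exactly to $\NZ{2}$ (the paper separately exhibits $\NZ{2}$ itself as a minimizer for non-emptiness, which is just your construction with $\sigma_i\equiv\sqrt{m}$). Your extra checks --- the centering of the constructed $\Z{1}$ and the observation that the loss reduces to maximizing $\langle\NZ{1},\NZ{2}\rangle_F$ --- are sound refinements the paper leaves implicit.
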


\begin{proof}
Based on the fact that $\mathcal{L}^{'}_{1} \ge 0$, we have $\mathbb{A}= \{\Z{1}| \mathcal{L}^{'}_{1} (\Z{1})=0\}$. It is easy to validate that $\mathcal{L}^{'}_{1} (\NZ{2}) =0$, and we have $\NZ{2} \in \mathbb{A}$. Therefore,  $\mathbb{A}$ is not an empty set. 
%We first show that $\mathbb{A}$ is not an empty set and the for any element in $\Z{1} \in \mathbb{A}$, $\Z{1}$ is a full-rank matrix. 

We then prove that $\forall \Z{1} \in \mathbb{A}$, $\Z{1}$ is full-rank. 
We assume that for any $\Z{1} \in \mathbb{A}$ and $\Z{1}$ is not  a full-rank matrix, \ie, $Rank(\Z{1}<d_z)$. We have $Rank(\phi(\Z{1}) \Z{1}) \le Rank(\Z{1}) <d_z$. We thus have that  $\phi(\Z{1}) \Z1{}$ is not a full-rank matrix. Therefore, it is impossible for $\phi(\Z{1}) \Z{1}=\NZ{2} $ since $\NZ{2}$ is a full-rank matrix. So $\mathcal{L}^{'}_{1} (\Z{1}) > 0$, which is contradictory to $\Z{1} \in \mathbb{A}$ .  Therefore, we have  $\forall \Z{1} \in \mathbb{A}$, $\Z{1}$ is full-rank

%We further show that there are infinity matrix with full-rank that is the optimum when minimizing $\mathcal{L}^{'}_{1}$, when using ZCA whitening. The whitening matrix of ZCA whitening is $\phi(\Z{1})= \mathbf{U} \Lambda^{-\frac{1}{2}} \mathbf{U}^T$, where  $\Lambda=\mbox{diag}(\lambda_1, \ldots,\lambda_{d_z})$ and $\mathbf{U}=[\mathbf{u}_1, ...,
%\mathbf{u}_{d_z}]$ are the eigenvalues and associated eigenvectors of the covariance matrix $\Sigma=\frac{1}{m} \Z{1} \Z{1}^T $.

For any $\{\sigma_i\}_{i=1}^{d_z}$ with $\sigma_1 \ge \sigma_2 \ge,...,\sigma_{d_z}>0$, let $ \Z{1}= \mathbf{U}_2$ diag($\sigma_1, \sigma_2,...,\sigma_{d_z}$) $\mathbf{V}_2^T$, we now prove that $\phi(\Z{1}) \Z{1}=\NZ{2}$ when using ZCA whitening. 
We know $\phi(\Z{1})= \WM{ZCA}= \mathbf{U} \Lambda^{-\frac{1}{2}} \mathbf{U}^T$, where  $\Lambda=\mbox{diag}(\lambda_1, \ldots,\lambda_{d_z})$ and $\mathbf{U}=[\mathbf{u}_1, ...,
\mathbf{u}_{d_z}]$ are the eigenvalues and associated eigenvectors of the covariance matrix $\Sigma $ of $\Z{1}$.
We know $\Sigma=\frac{1}{m} \Z{1} \Z{1}^T = \mathbf{U}_2$ diag($\sigma_1^2/m, \sigma_2^2/m,...,\sigma_{d_z}^2/m$) $\mathbf{U}_2^T$. Since the eigen decomposition of $\Sigma$ is unique, we have $\phi(\Z{1})=\mathbf{U}_2 $ diag($\sqrt{m}/\sigma_1, \sqrt{m}/\sigma_2,...,\sqrt{m}/\sigma_{d_z}$) $\mathbf{U}_2^T$. 
Therefore, $\phi(\Z{1}) \Z{1}=\mathbf{U}_2$ diag($\sqrt{m}/\sigma_1, \sqrt{m}/\sigma_2,...,\sqrt{m}/\sigma_{d_z}$) $\mathbf{U}_2^T$$\mathbf{U}_2$ diag($\sigma_1, \sigma_2,...,\sigma_{d_z}$) $\mathbf{V}_2^T=\mathbf{U}_2 (\sqrt{m} \mathbf{I}) \mathbf{V}_2^T = \NZ{2}$.
We thus have $\widetilde{\mathbb{A}} \subseteq \mathbb{A}. $
%
%Considering the singular value decomposition on the whitened matrix $\NZ{2}$, we have $\mathbf{U}_2  (\sqrt{m}) \mathbf{I} \mathbf{V}_2^T=\NZ{2}$, where $\mathbf{U}_2 \in \mathbb{R}^{d \times d}$ is an orthogonal matrix with $\mathbf{U}_2^T \mathbf{U}_2=\mathbf{U}_2 \mathbf{U}_2^T=\mathbf{I} $, and $\mathbf{V}_2 \in \mathbb{R}^{m \times d} $ is a rectangle orthogonal matrix with $\mathbf{V}_2^T \mathbf{V}_2= \mathbf{I}$
%
% For any $\{\sigma_i>0\}_{i=1}^d$ with $\sigma_1 \ge \sigma_2 \ge,...,\sigma_d>0$, we denote $\mathbf{S}$ to represent the diagonal matrix $diag(\sigma_1, \sigma_2,...,\sigma_d)$. We conduct  $\widetilde{\mathbf{A}}=\{\Z{}| Z= \mathbf{U}_2 \mathbf{S} \mathbf{V}_2^T,\}, for any \{\sigma_i>\}_{i=1}^d$. We see $\widetilde{\mathbf{A}} $
%
% In this way, we find that minimizing $\mathcal{L}_1^{'}$ only requires the embedding $\Z{1}$ being full-rank with $Rank(\NZ{1})=d_z$, and does not impose the constraints on $\Z{1}$ to be whitened with $r(\Z{1})=d_z$

\end{proof}

 \newpage
    \section{Algorithm of CW-RPG}
    \label{sec:code}
    We describe our CW-RGP algorithm in Py-Torch style code, shown in Figure~\ref{fig:code}.
	\begin{figure}[hp]
	\vspace{0in}
		\centering
		\includegraphics[width=15cm]{./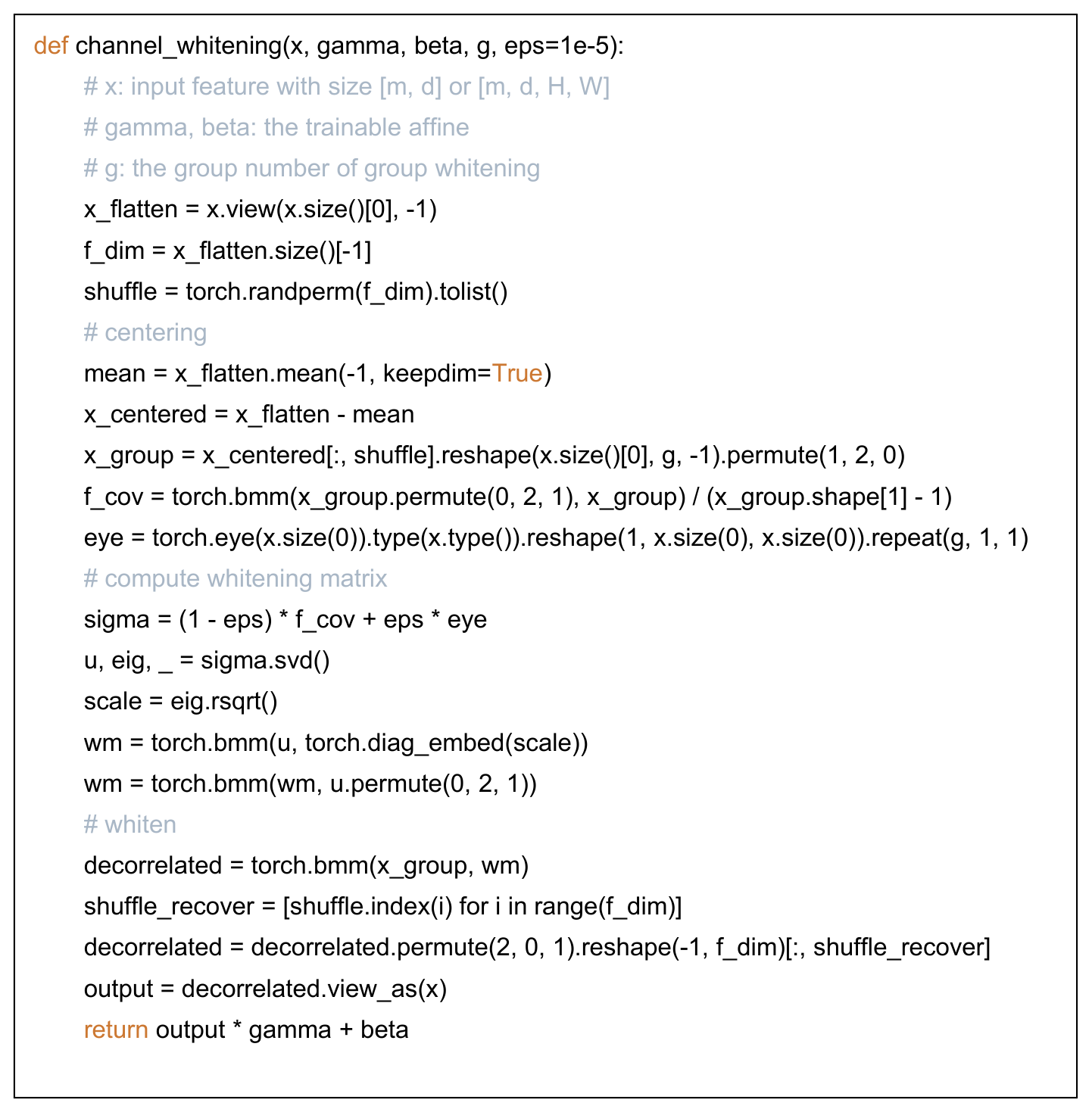}
		\caption{Py-Torch style code of CW-RPG.}
		\label{fig:code}
	\end{figure}
	\newpage

	\section{Details of Experiments for Large-Scale Classification and Transfer Learning}
	\label{sec:large-scale}
In this section, we provide the details of implementation and training protocol for the experiments on large-scale ImageNet~\cite{2009_ImageNet} classification, and transfer learning to VOC~\cite{2010_IJCV_VOC} object detection, COCO~\cite{2014_ECCV_COCO} object detection and instance segmentation. 

\subsection{Datasets}
• ImageNet~\cite{2009_ImageNet}, the well-known largescale dataset with about 1.3M training images and 50K test images, spanning over 1000 classes.

• VOC07+12~\cite{2010_IJCV_VOC}, the PASCAL Visual Object Classes Challenge. VOC2007: 20 classes with 9,963 images containing 24,640 annotated objects; VOC2012: 20 classes with 11,530 images containing 27,450 ROI annotated objects and 6,929 segmentations.

• COCO2017~\cite{2014_ECCV_COCO}, a large-scale object detection, segmentation, and captioning dataset with 330K images containing 1.5 million object instances.

\subsection{Experiment on ImageNet}

In section 4.1 of the paper, we compare our CW-RGP to the state-of-the-art SSL methods on large-scale ImageNet classification. Here, we describe the training details of these experiments. 	Our implementation is based on the  released codebase of \textsl{SimSiam}~\cite{2021_CVPR_Chen}\footnote{\textit{https://github.com/facebookresearch/simsiam} under the CC-BY-NC 4.0 license.}.
Except for the hyper-parameters relating to CW-RGP itself,	we strictly follow the setup of the SimSiam paper~\cite{2021_CVPR_Chen}. 

\paragraph{Encoder and Projector} 
We use the ResNet-50~\cite{2015_CVPR_He} as the encoder and the dimension of \Term{encoding} is 2048. We use a 3-layers MLP as the projector: two hidden layers  with BN and Relu applied to it and a linear layer as output. The dimension of the hidden layer and embedding are  2048 and 1024, respectively.

\paragraph{Image Transformation Details}
\label{para:image-transform-b}
In image transformation, we follow the details in~\cite{2021_CVPR_Chen}:
crop size from 0.2 to 1.0, no strengthened jittering (0.4, 0.4, 0.4, 0.1) with probability 0.8, grayscaling probability 0.2, and Gaussian blurring with 0.5 probability. We use standard protocol at testing time.

\paragraph{Optimizer and Learning Rate Schedule }

We apply the SGD optimizer, using a learning rate of \textsl{lr} × BatchSize / 256 with a base  \textsl{lr} of 0.05 and cosine decay schedule.  The weight decay is $10^{-4}$ and the SGD momentum is 0.9. In addition, we use learning rate warm-up for the first 500 iterations of the optimizer. We only try the batch size of 256 and 512 due to memory limitation.
% We find that when the batchsize is 512, the top-1 accuracy is generally higher than 256’s .

\paragraph{Evaluation Protocol}
We use the same setup of evaluation protocol as in \textsl{Simsiam}~\cite{2021_CVPR_Chen}: training the \textsl{linear classifier} for 100 epochs with the \textsl{LARS} optimizer (using a learning rate of \textsl{lr} × BatchSize / 256 with a base \textsl{lr} of 0.1 and cosine decay schedule). The batch size for evaluation is 1024.

%	\paragraph{Covariance Loss along Channel Dimension}
For our CW-RGP, we use RGP$_2$ for CW. We find that our CW-RGP can also work well when combined with the whitening penalty (covariance loss) used in VICReg~\cite{2022_ICLR_Adrien}. For adapting to the sample orthogonalization in CW, we use a covariance loss along the channel dimension (see Section~\ref{sec:cov_loss} for details). We empirically set the weight of covariance loss as 0.001 for the half training epochs to amplify the extent of whitening, which obtains a top-1 accuracy of $69.7\%$ when training 100 epochs, compared to $69.6\%$ of the method using CW-RGP only. Here, we address that our CW-RGP can combine with covariance loss to obtain good results. We believe the performance can be further improved, if we fine-tune the weight of covariance loss. 
% better training stability In 100-epoch experiment of ImageNet, we set the weight of covariance loss to 0.001 and cancel it in 50 epoch while 0.01 and 100 in 200-epoch experiment without tuned intensively, which can achieve similar results and keep the training stable.

\subsection{Experiments for Transfer Learning}

In this part, we describe the training details of experiments for transfer learning. 	Our implementation is based on the  released codebase of \textsl{MoCo}~\cite{2020_CVPR_He}\footnote{\textit{https://github.com/facebookresearch/moco/tree/main/detection} under the CC-BY-NC 4.0 license.} for transfer learning to object detection and instance segmentation tasks. 
We use the default hyper-parameter configurations 	from the training scripts provided by the codebase for CW-RGP, using our 200-epoch pre-trained model on ImageNet. 
%	We strictly follow the setup of SimSiam paper~\cite{2021_CVPR_Chen}.  our implementation is 
%    We use the baseline (except for the pre-training model, the others are exactly the same) of the detection codebase from MoCo~ for CW-RGP  to produce the results.

For the experiments of `\textsl{VOC 07+12 detection}', we use Faster R-CNN fine-tuned in VOC 2007 trainval + 2012 train, evaluated in VOC 2007 test. For the experiments of `\textsl{COCO detection and COCO instance segmentation}', we use Mask R-CNN (1× schedule) fine-tuned in COCO 2017 train, evaluated in COCO 2017 val. All Faster/Mask R-CNN models are with the C4-backbone.
Our CW-RGP is performed with 3 random seeds, with mean and standard deviation reported.

\section{Investigating the Projector MLP}
\label{sec:investigating projector}
As mentioned in  section 5 of the submitted paper, we conduct preliminary experiments to explore how the projector affects the extents of whitening between \Term{encoding} and \Term{embedding}. Specifically, we conduct experiments to explore the extents of whitening between \Term{encoding} and \Term{embedding} by varying the dimension and number of the hidden layer of the projector,  based on our CW-RGP algorithm. 

\paragraph{Dimension of the Hidden Layer}
\label{sec:dimension of hidden layers}
Here, we conduct experiments on \textsl{CIFAR-10} to observe the effect by using different dimensions, ranging in $\{1024, 2048, 4096, 8192\}$, of the hidden layer. In the experiments, we set the projector: one hidden layer with BN and Relu applied to it and a linear layer as output (the \Term{embedding} is 2048). We train the model for 200 epochs (other settings are the same as the experiments described in Section~\ref{para:training setting}). We use \Term{encoding} as the representation for evaluation. The results are shown in Figure~\ref{fig:hidden_size}. We observe that CW-RGP can obtain improved linear/5-NN accuracy, and increased rank of \Term{embedding}, when increasing the dimension of the hidden layer. We find that CW-RGP can make the \Term{encoding}  full-rank for all settings with different dimensions. Besides, there are no significant differences in terms of the stable-rank (the extent of whitening) of \Term{embedding} for all settings.  One interesting observation is that the stable-rank  of \Term{encoding} decreases as the dimension of the hidden layer increases. For this observation, we conjecture that the large hidden-layer dimension may amplify the largest eigenvalue of the covariance matrix of \Term{encoding} (driven by back-propagation), which leads to the decrease of stable-rank of \Term{encoding}.
% we can not provide a clear clue to illustrate, and
\begin{figure}[t]
	\vspace{-0.1in}
	\centering
	\hspace{-0.2in}	\subfigure[]{
		\begin{minipage}[c]{.32\linewidth}
			\centering
			\includegraphics[width=4cm]{./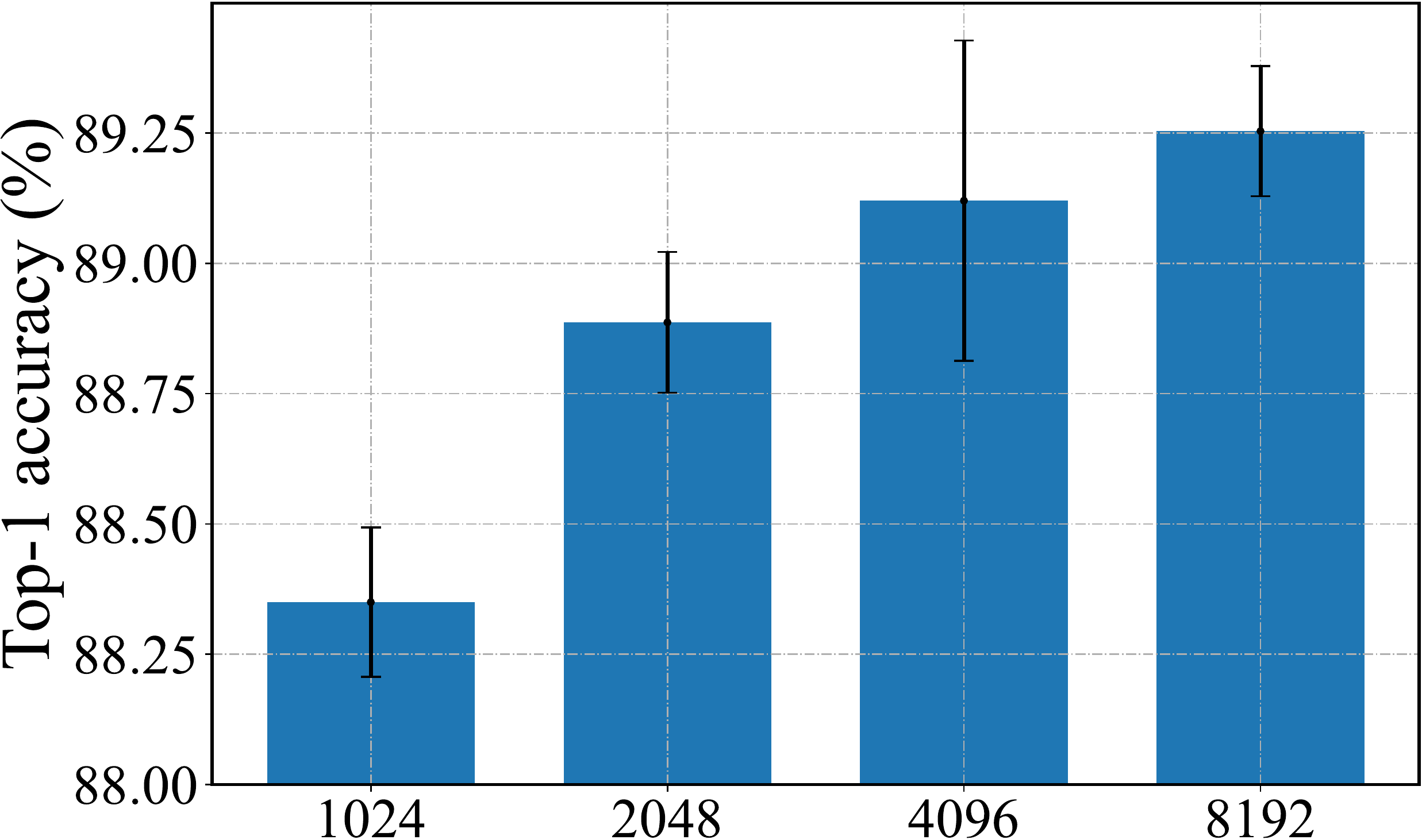}
		\end{minipage}
	}
	\hspace{0in}	\subfigure[]{
		\begin{minipage}[c]{.32\linewidth}
			\centering
			\includegraphics[width=4cm]{./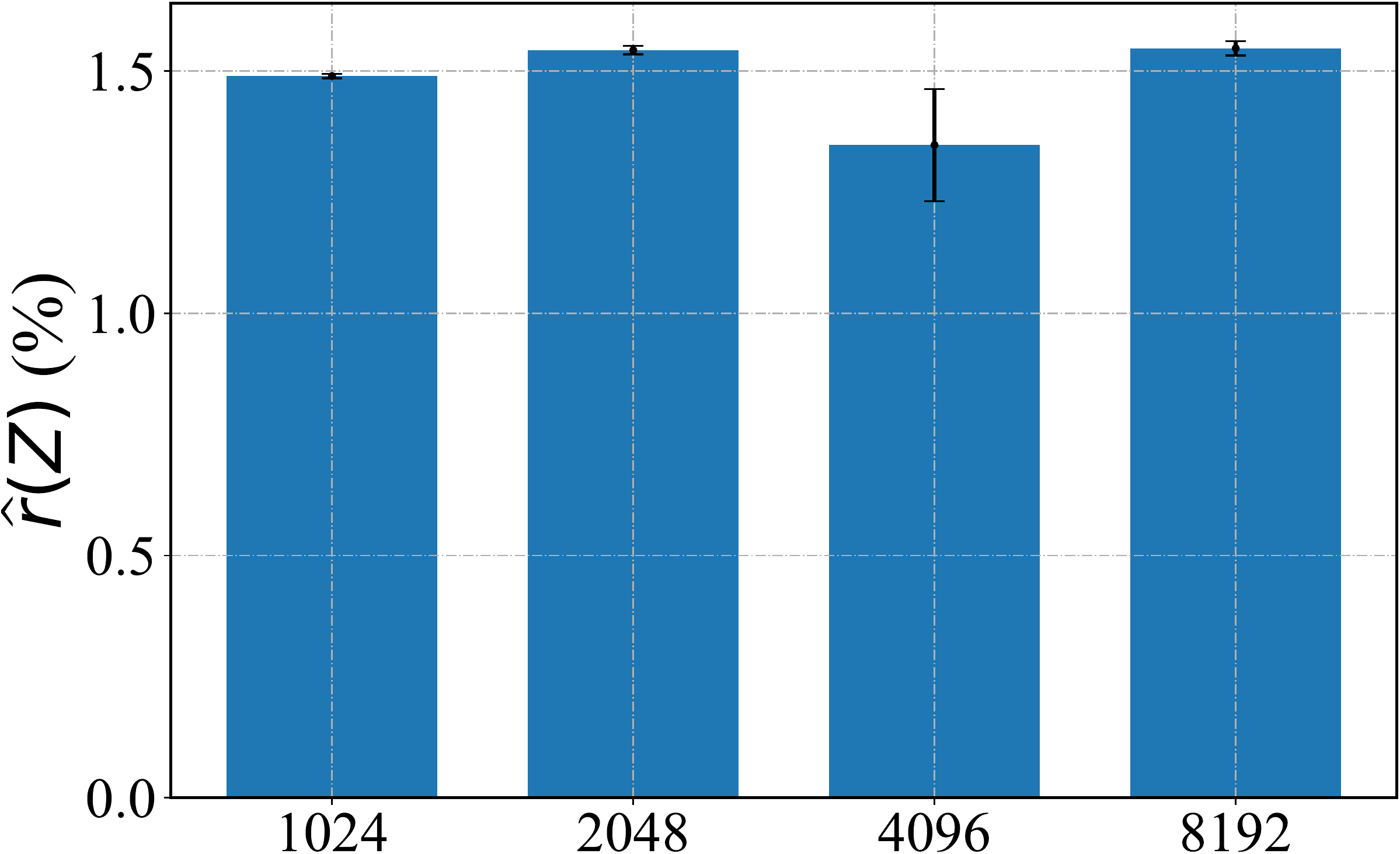}
		\end{minipage}
	}
	\hspace{0in}	\subfigure[]{
		\begin{minipage}[c]{.32\linewidth}
			\centering
			\includegraphics[width=4cm]{./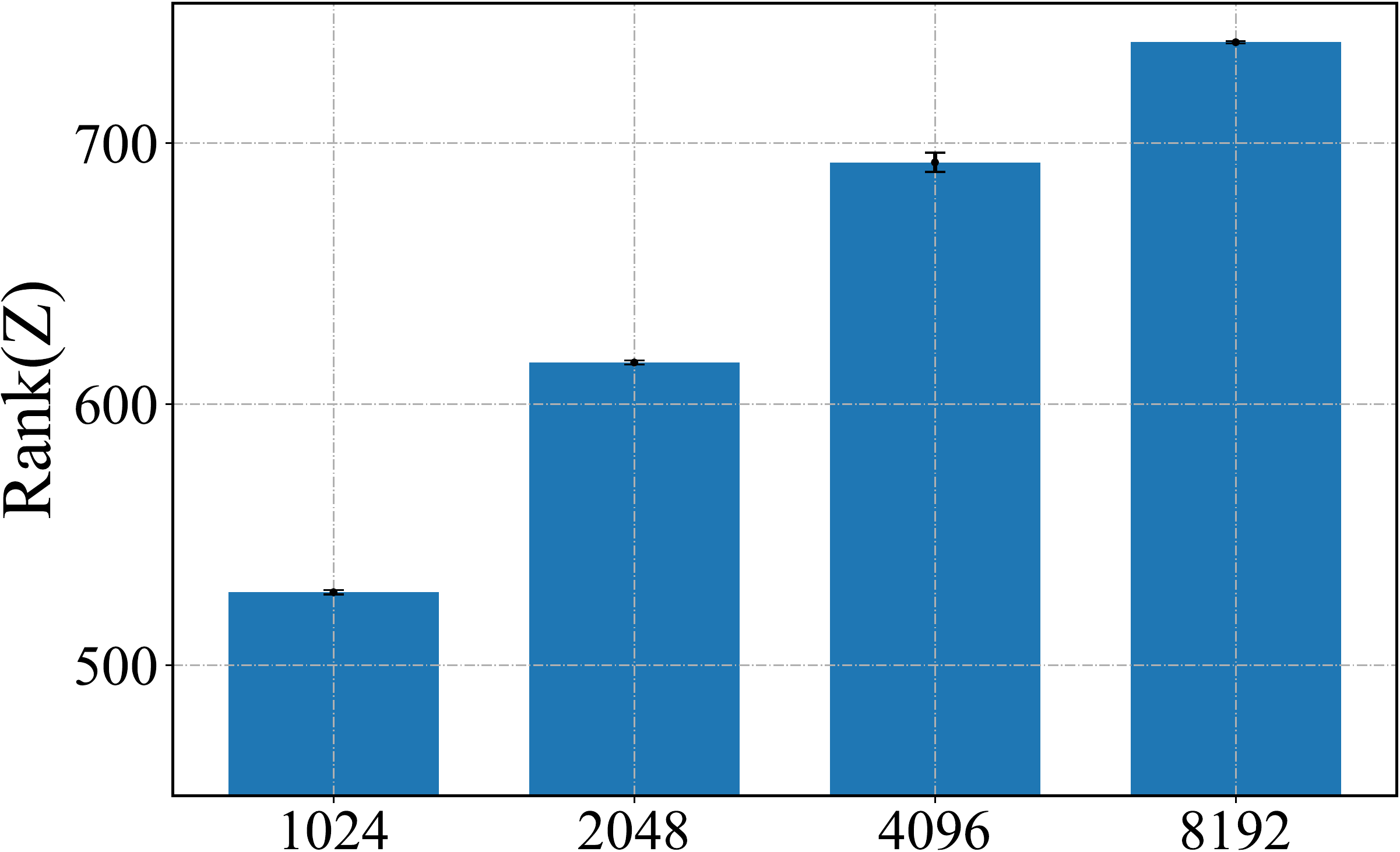}
		\end{minipage}
	} \\
	\hspace{-0.2in}	\subfigure[]{
		\begin{minipage}[c]{.32\linewidth}
			\centering
			\includegraphics[width=4cm]{./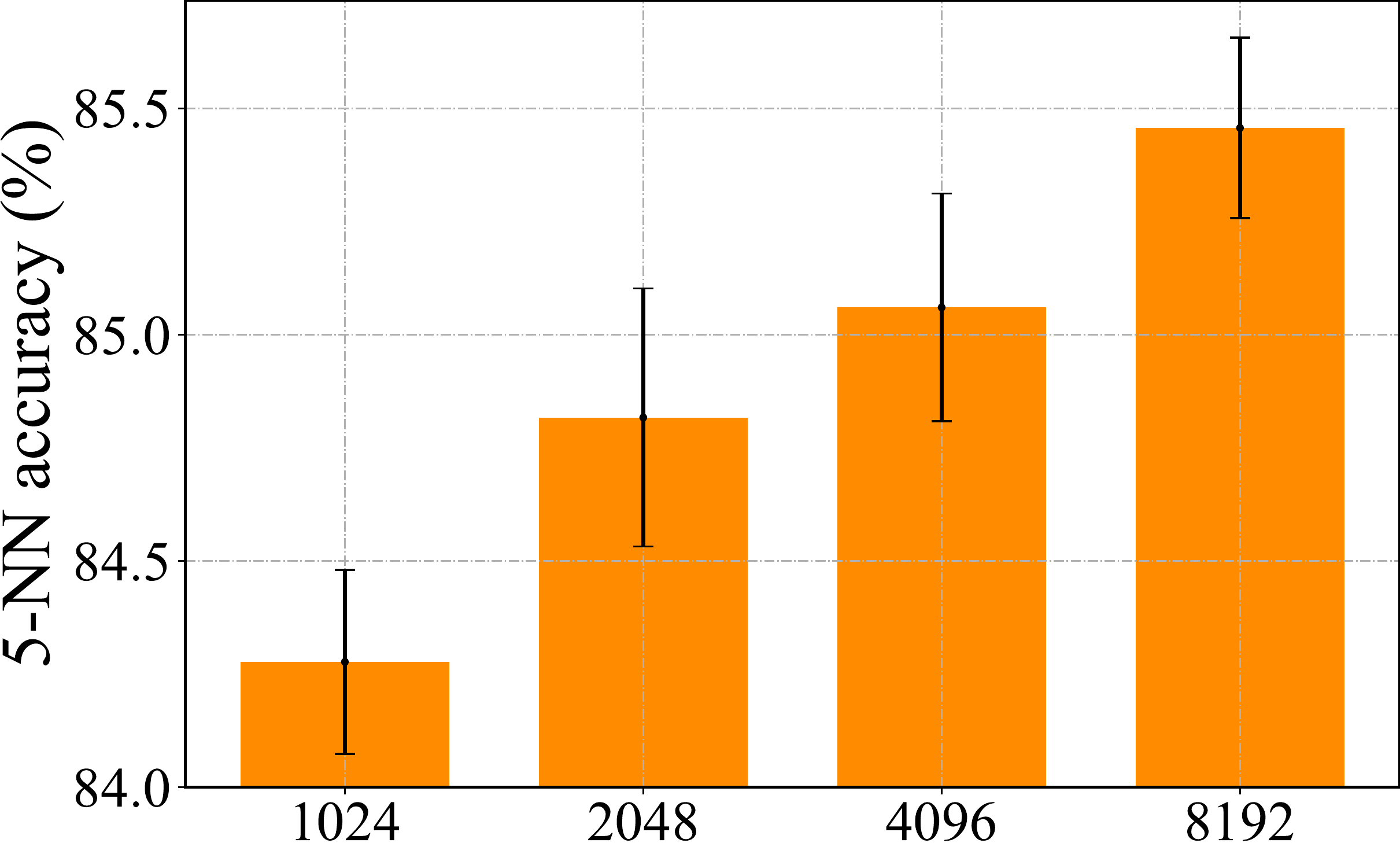}
		\end{minipage}
	}
	\hspace{0in}	\subfigure[]{
		\begin{minipage}[c]{.32\linewidth}
			\centering
			\includegraphics[width=4cm]{./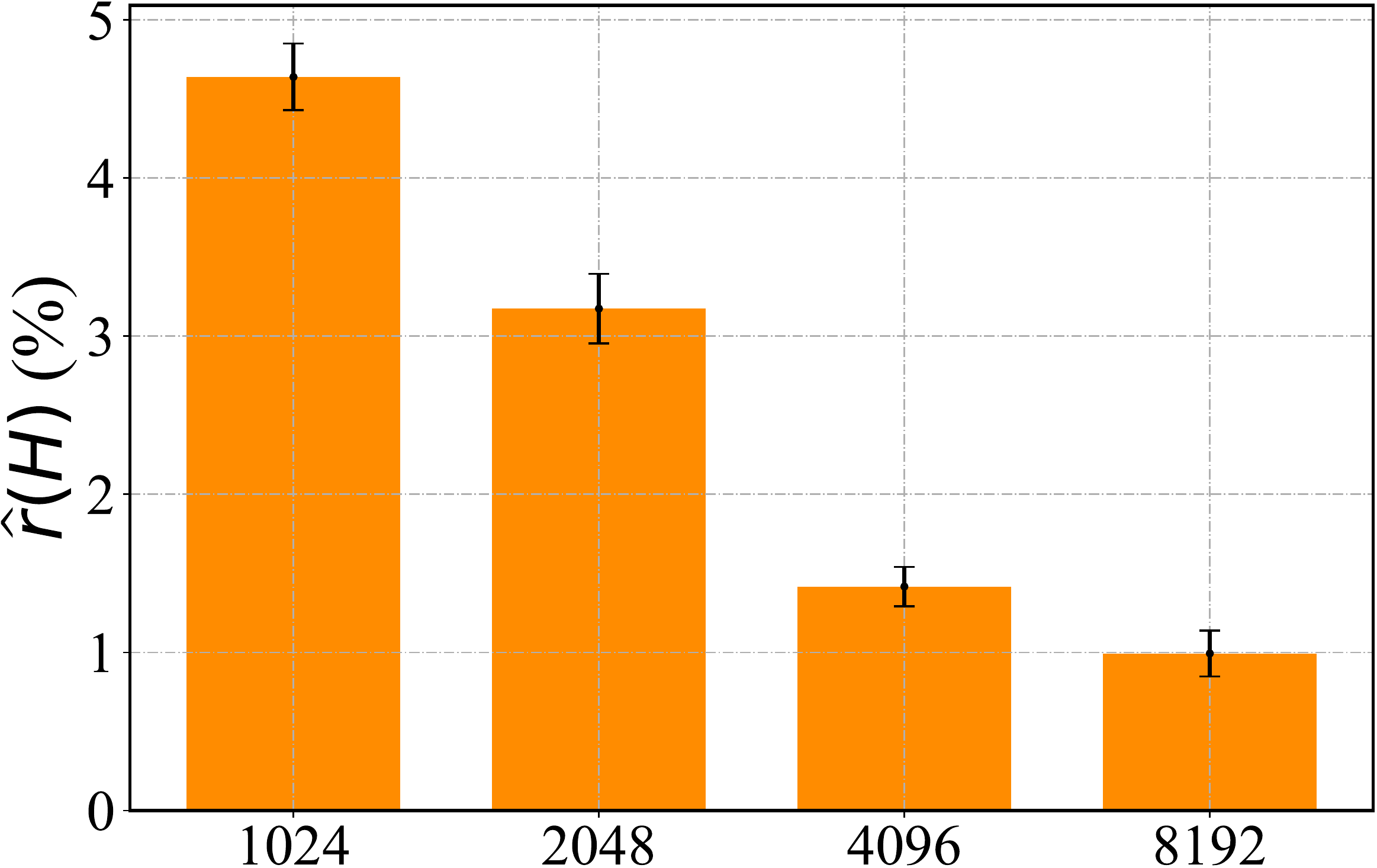}
		\end{minipage}
	}
	\hspace{0in}	\subfigure[]{
		\begin{minipage}[c]{.32\linewidth}
			\centering
			\includegraphics[width=4cm]{./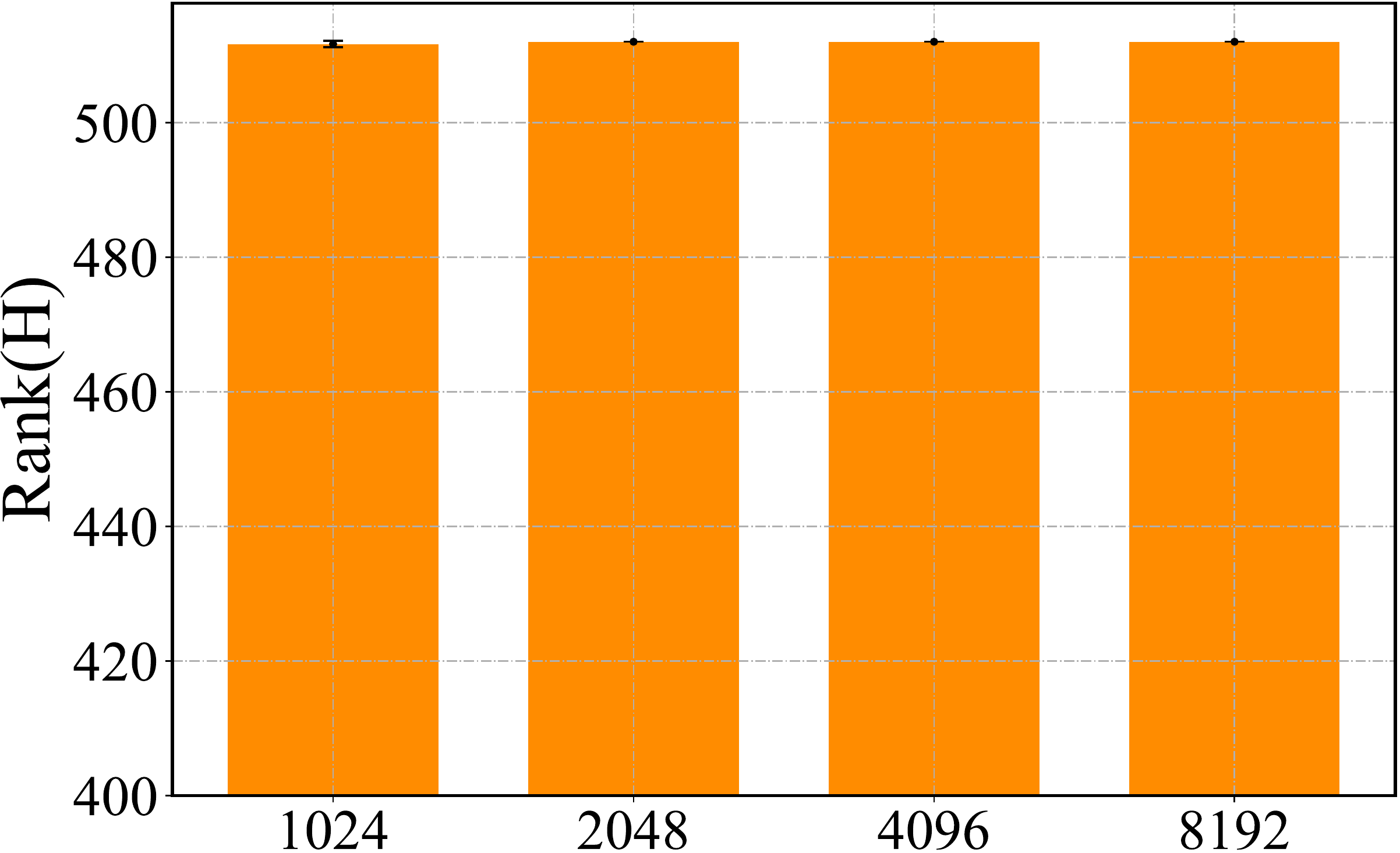}
		\end{minipage}
	} 
	\vspace{-0.1in}
	\caption{Illustration of different projectors when varying the dimension of the hidden layer. We show (a) the linear accuracies; (b) the normalized stable-rank of \Term{embedding}; (c) the rank of \Term{embedding}. (d) the k-NN accuracies; (e) the normalized stable-rank of \Term{encoding}; (f) the rank of \Term{encoding}.
		All results are averaged by five random seeds, with  standard deviation shown as error bars.}
	\label{fig:hidden_size}
	%	\vspace{-0.18in}
\end{figure}

\paragraph{Number of the Hidden Layer}
\label{sec:number of hidden layers}
We then conduct experiments on \textsl{Tiny ImageNet} to observe the effect by using different numbers, ranging in $\{1, 2, 3, 4, 5\}$, of the hidden layer (dimension of the hidden layer and embedding are 2048 and 1024 respectively). We train the model for 400 epochs (other settings are the same as the experiments described in Section~\ref{para:training setting}). We use \Term{encoding} as the representation for evaluation. The results are shown in Figure~\ref{fig:hidden_number}. We observe that CW-RGP can obtain increased  rank and stable-rank (extent of whitening) of \Term{embedding}, when decreasing the numbers of the hidden layer from 5 to 2.  We also find that CW-RGP can make the \Term{encoding} to be full-rank for almost all settings with different numbers of the hidden layer, except that CW-RGP with the projector using 5 hidden-layer has slightly reduced rank and has slightly worse performance. For this experiment, we do not obtain a clear clue that how the numbers of the hidden layer of the projector affects the extent of whitening. 
%Besides, there are no significant differences in terms of the stable-rank (the extent of whitening) of \Term{embedding} for all dimensions used.  One interesting observation is that the stable-rank  of \Term{encoding} decreases as the dimension of hidden layer increases. For this observation, we can not provide a clear clue to illustrate, and we conjecture that the large hidden-layer dimension may amplify the largest eigenvalue of covariance matrix of \Term{encoding} (driven by back-propagation).

\begin{figure}[t]
	\vspace{-0.1in}
	\centering
	\hspace{-0.2in}	\subfigure[]{
		\begin{minipage}[c]{.32\linewidth}
			\centering
			\includegraphics[width=4cm]{./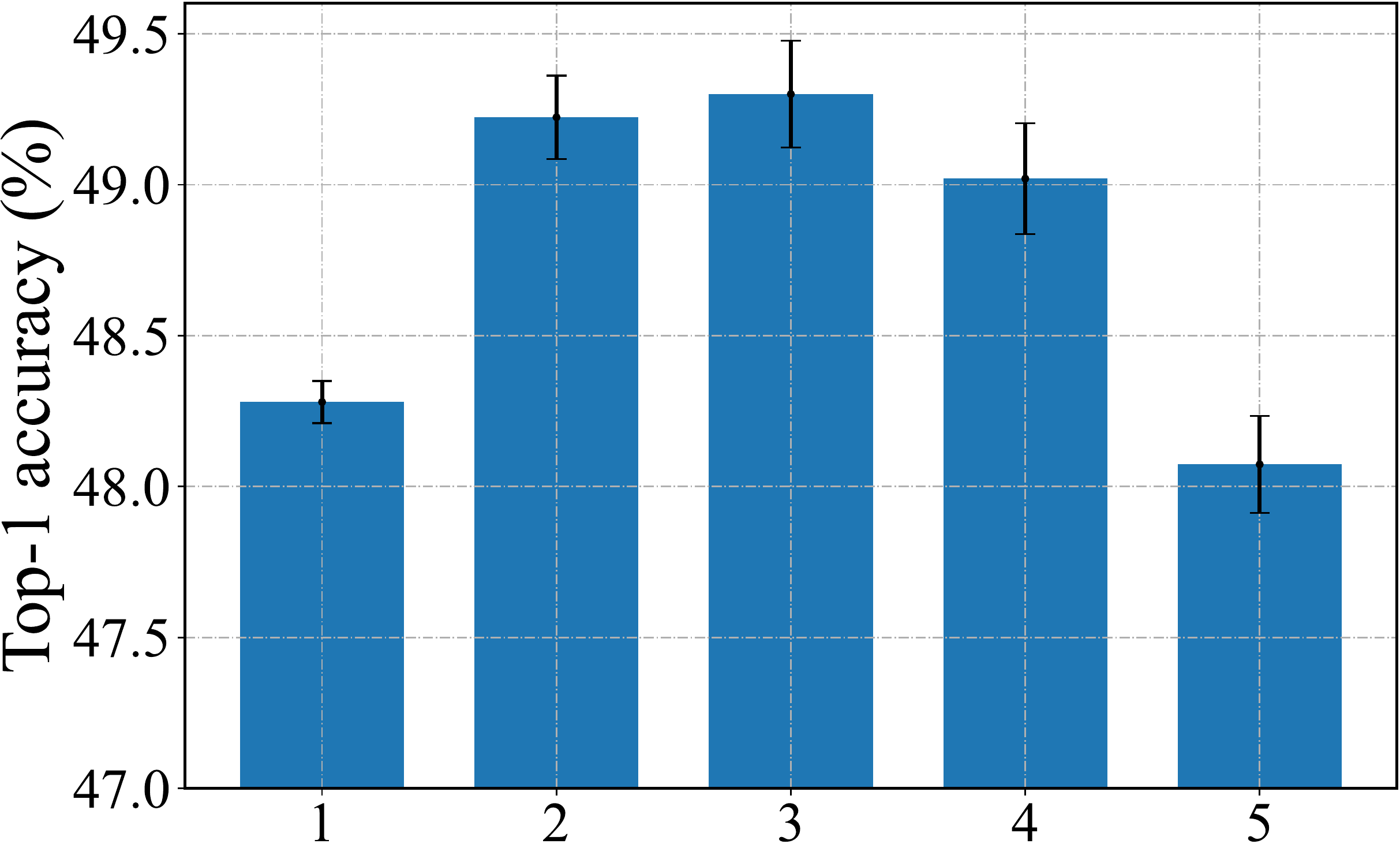}
		\end{minipage}
	}
	\hspace{0in}	\subfigure[]{
		\begin{minipage}[c]{.32\linewidth}
			\centering
			\includegraphics[width=4cm]{./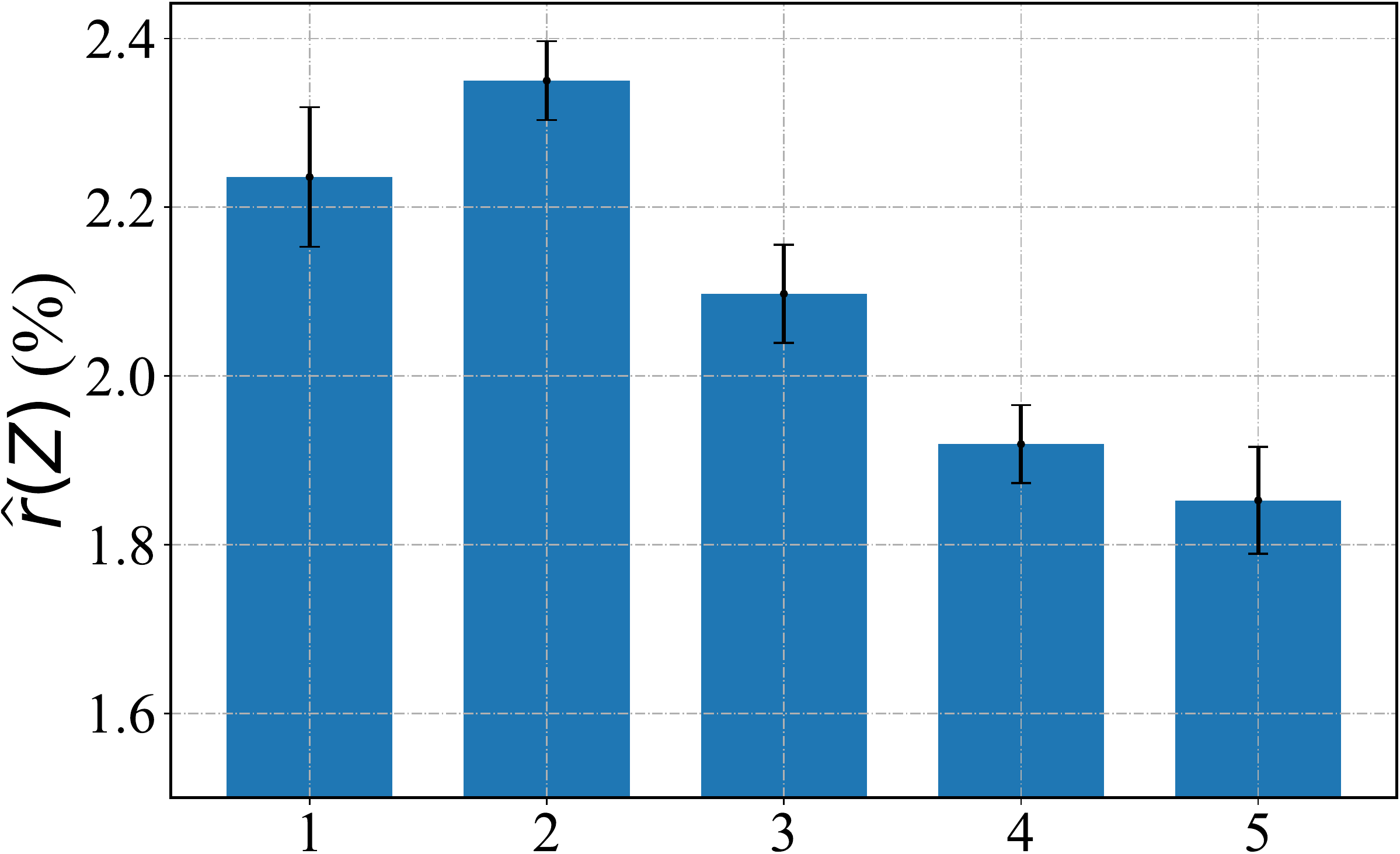}
		\end{minipage}
	}
	\hspace{0in}	\subfigure[]{
		\begin{minipage}[c]{.32\linewidth}
			\centering
			\includegraphics[width=4cm]{./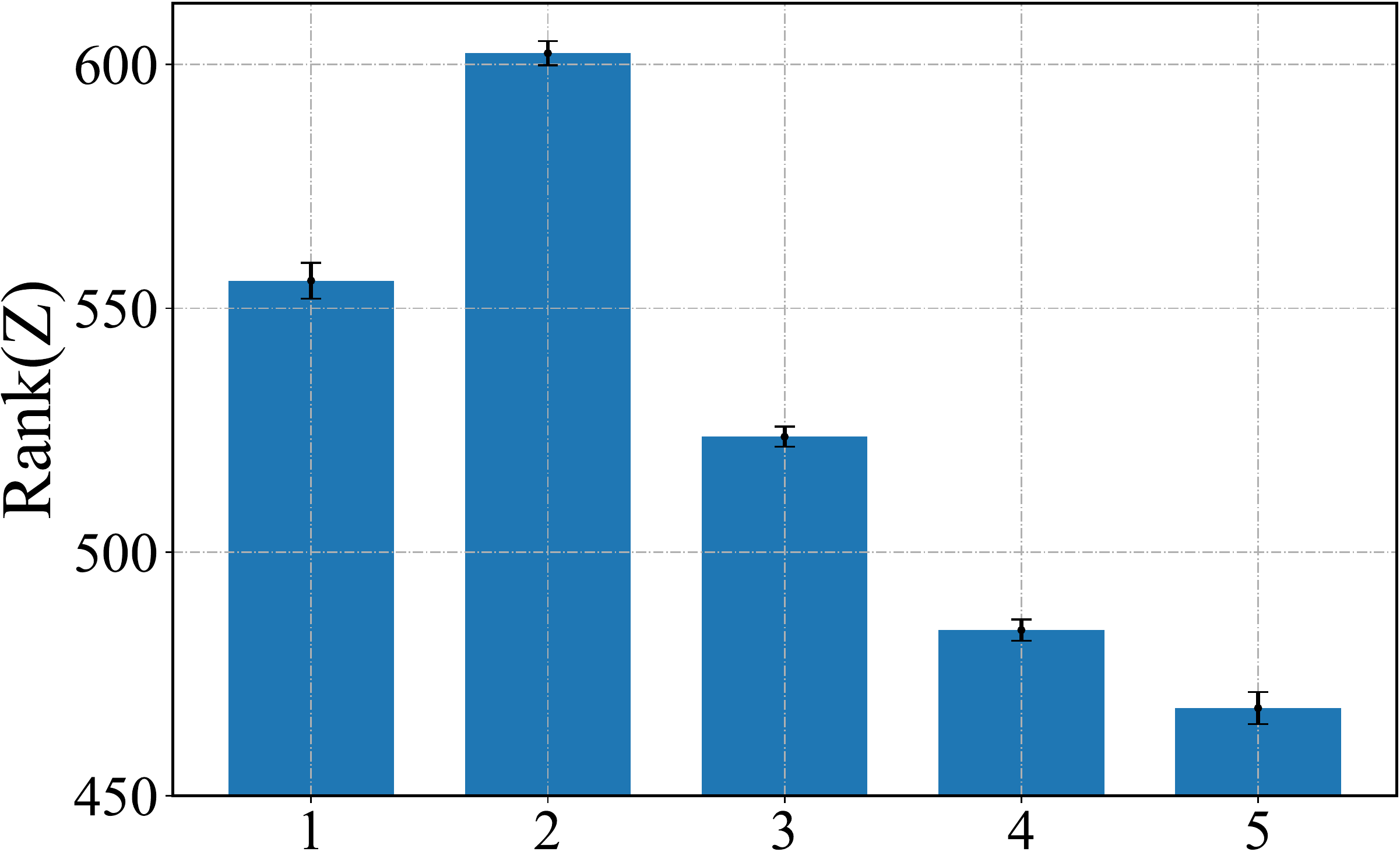}
		\end{minipage}
	} \\
	\hspace{-0.2in}	\subfigure[]{
		\begin{minipage}[c]{.32\linewidth}
			\centering
			\includegraphics[width=4cm]{./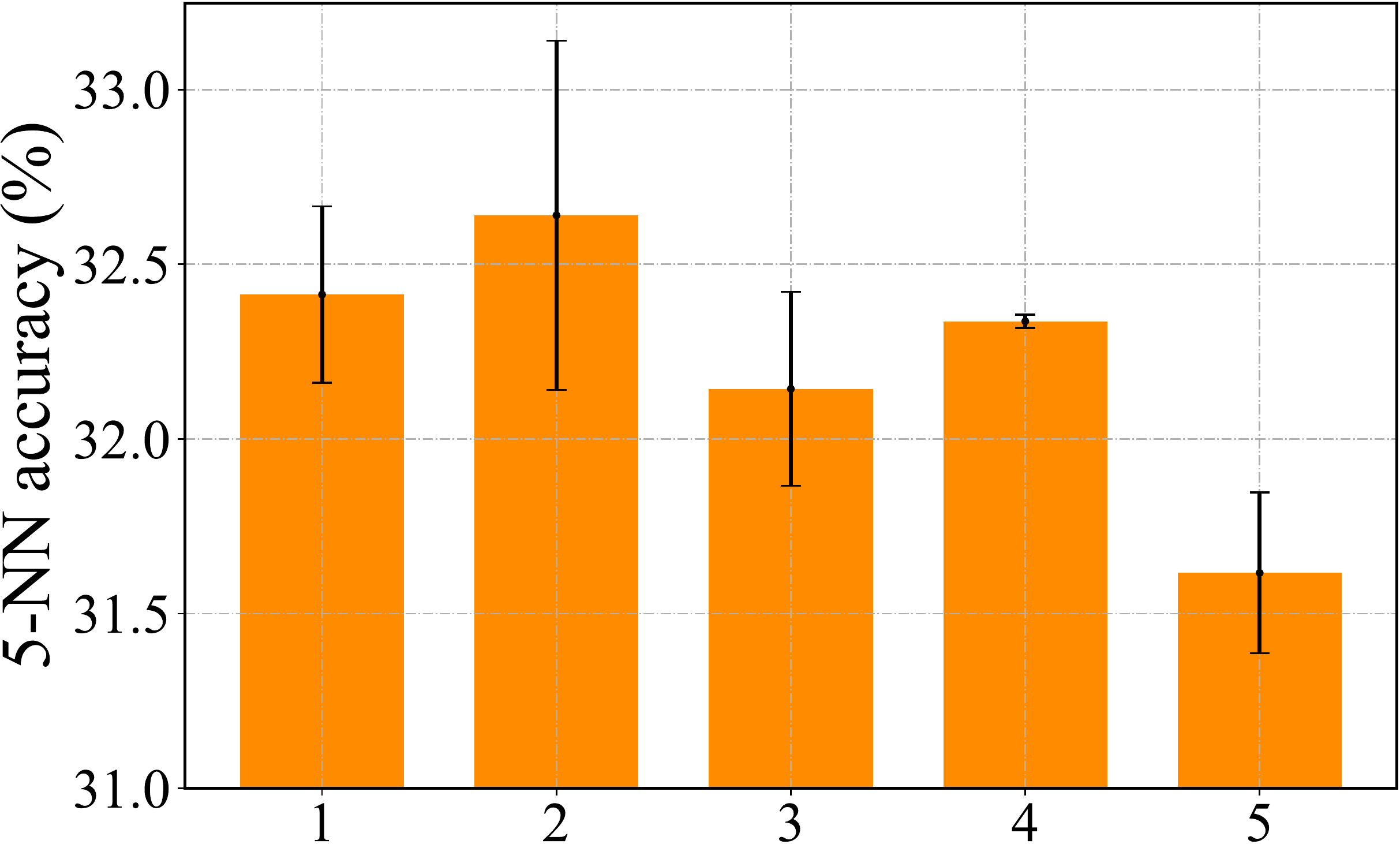}
		\end{minipage}
	}
	\hspace{0in}	\subfigure[]{
		\begin{minipage}[c]{.32\linewidth}
			\centering
			\includegraphics[width=4cm]{./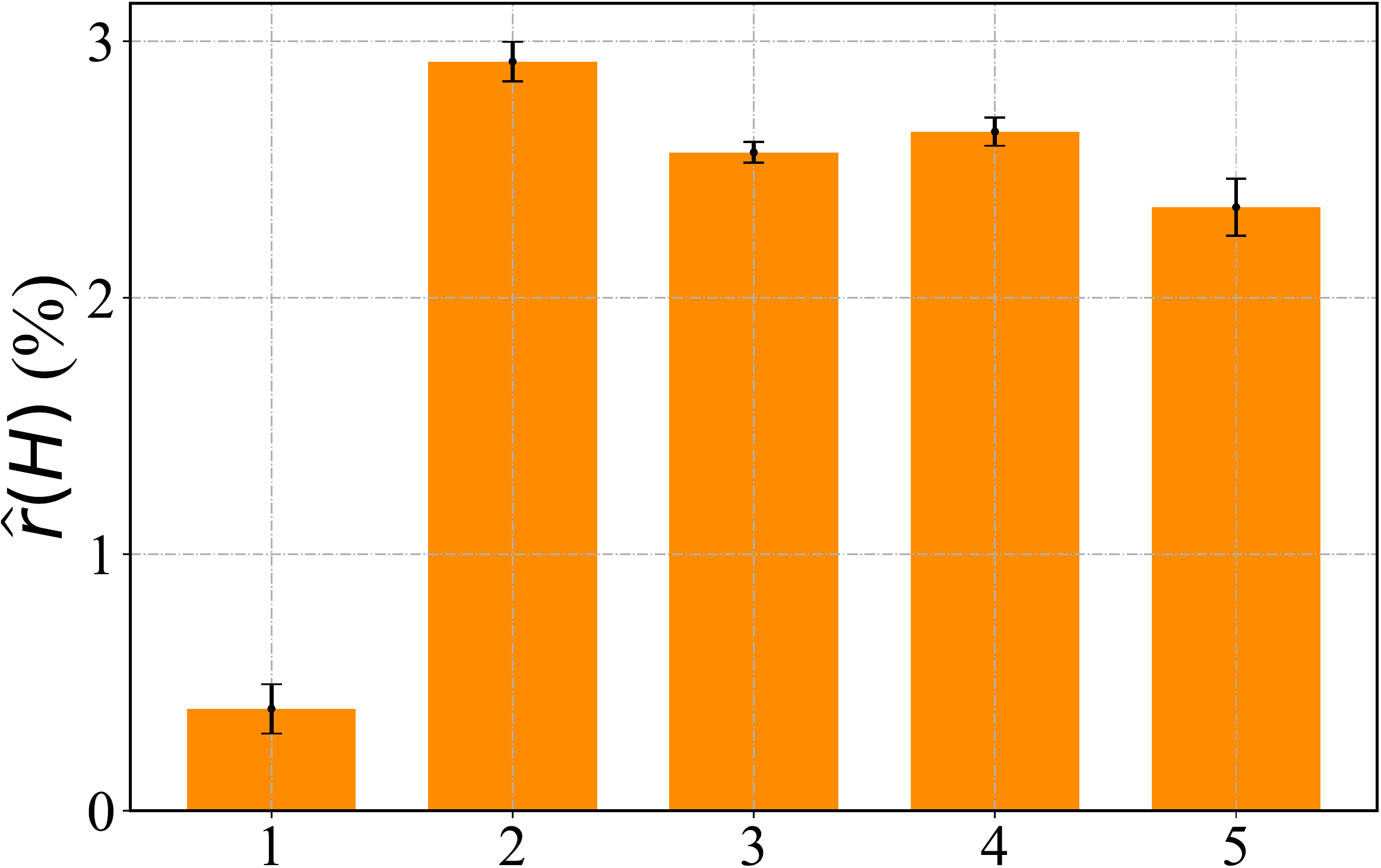}
		\end{minipage}
	}
	\hspace{0in}	\subfigure[]{
		\begin{minipage}[c]{.32\linewidth}
			\centering
			\includegraphics[width=4cm]{./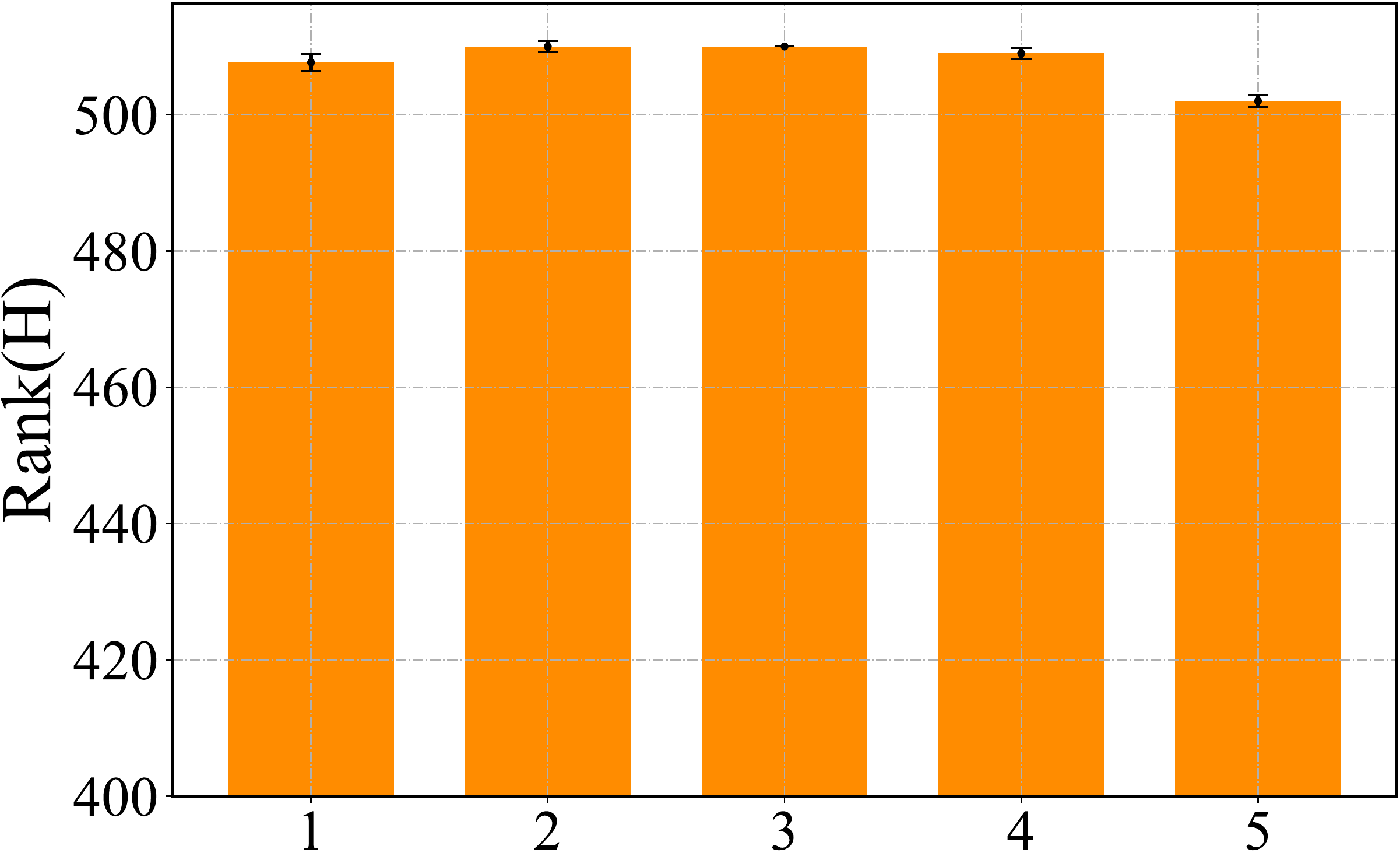}
		\end{minipage}
	} 
	\vspace{-0.1in}
	\caption{Illustration of different projectors when varying the number of the hidden layer. We show (a) the linear accuracies; (b) the normalized stable-rank of \Term{embedding}; (c) the rank of \Term{embedding}. (d) the k-NN accuracies; (e) the normalized stable-rank of \Term{encoding}; (f) the rank of \Term{encoding}.
		All results are averaged by three random seeds, with  standard deviation shown as error bars.}
	\label{fig:hidden_number}
	%	\vspace{-0.18in}
\end{figure}

	\section{Covariance Loss along Channel Dimension}
	\label{sec:cov_loss}
	We note that VICReg~\cite{2022_ICLR_Adrien} uses covariance loss along the batch dimension to constrain the evolution of the covariance matrix of \Term{embedding} $\Z{}$ to a diagonal matrix. The main idea is to reduce the value of non-diagonal elements of the covariance matrix. It is natural to extend this covariance loss along the channel dimension and  we explore whether covariance loss can be used together with whitening loss in this section.
	
	\begin{figure}[bp]
		\vspace{0.1in}
		\centering
		\hspace{-0.5in}	\subfigure[]{
			\begin{minipage}[c]{.32\linewidth}
				\centering
				\includegraphics[width=4.5cm]{./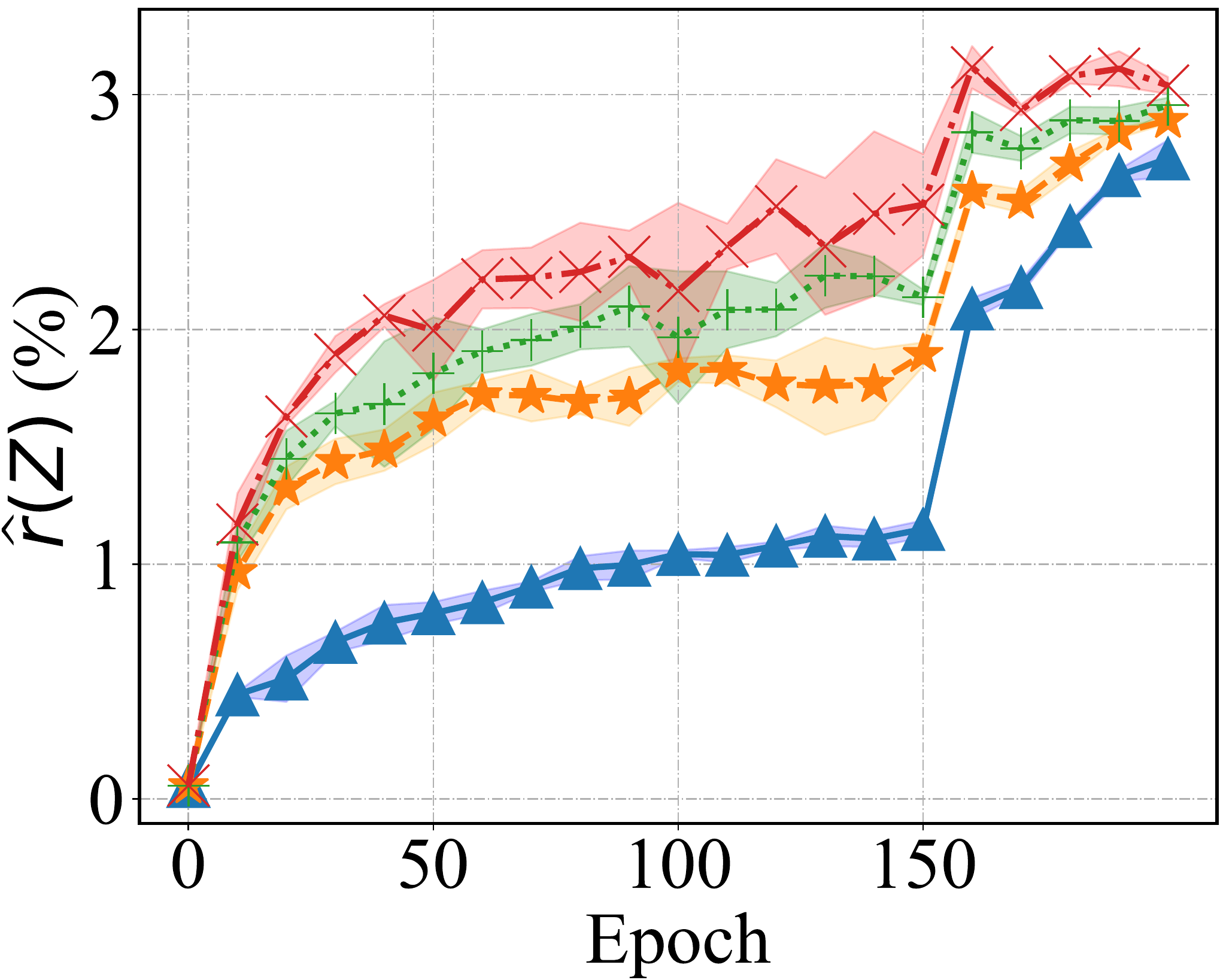}
			\end{minipage}
		}
		\hspace{0.5in}	\subfigure[]{
			\begin{minipage}[c]{.32\linewidth}
				\centering
				\includegraphics[width=4.5cm]{./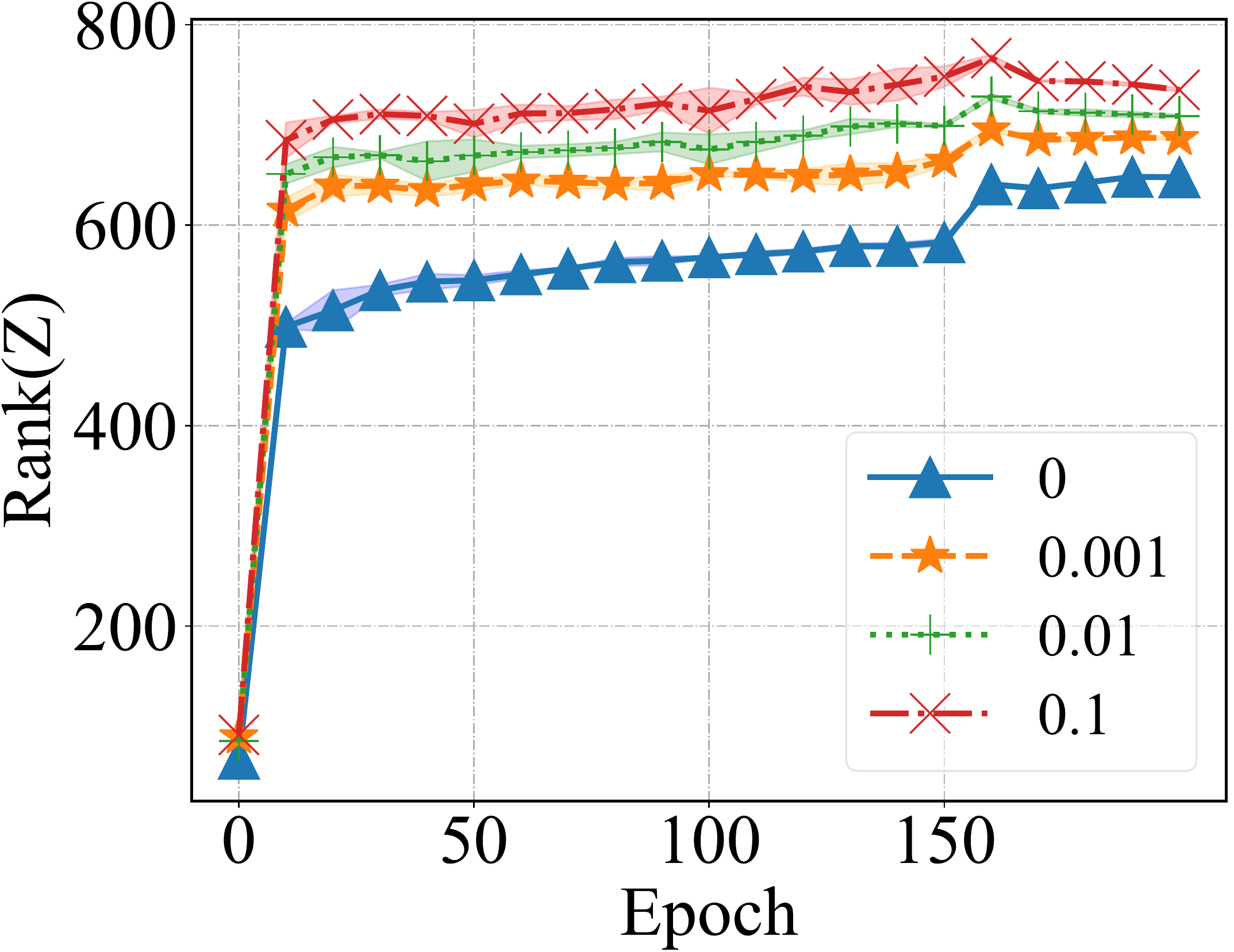}
			\end{minipage}
		}
		%	\vspace{-0.05in}
		\caption{Illustration of CW with different penalty weights $\{0, 0.001, 0.01, 0.1\}$ of the covariance loss. We use the ResNet-18 as the encoder (dimension of representation is 512.), a two layer MLP with ReLU and BN appended as the projector (dimension of embedding is 2048). The model is trained on CIFAR-10 for 200 epochs with a batch size of 256 and standard data argumentation, using Adam optimizer~\cite{2014_CoRR_Kingma} (more details of experimental setup please see Section~\ref{sec:details in experiments}). We show (a) the normalized stable-rank of \Term{embedding}; (b) the rank of \Term{embedding}. The results are averaged by five random seeds, with standard deviation shown using shaded region. }
		\label{fig:cov_loss}
		\vspace{-0.1in}
	\end{figure}
	For adapting to the sample orthogonalization like our channel whitening (CW), we propose a covariance loss along the channel dimension as follows:
	\begin{eqnarray}
		\label{eqn:cov_loss_1}
		  Centering: \Z{c}=  (\mathbf{I} -   \frac{1}{d} \mathbf{1} \cdot  \mathbf{1}^T ) \Z{}, \\
		Covariance~ matrix: \Sigma = \frac{1}{d-1} \Z{c}^T \Z{c}, \\
		Covariance ~loss: C = \frac{1}{m} \sum_{i\neq j}\Sigma_{i,j}^2,
	\end{eqnarray}
	\par
	We conduct experiments to show that our CW can work well with the covariance loss and the covariance loss even can amplify the extent of whitening of the embedding. We use CW combing the covariance loss with varying penalty weights ranging in  $\{0, 0.001, 0.01, 0.1\}$. The results are shown in Figure~\ref{fig:cov_loss}. We observe that CW can obtain a significantly higher rank and stable-rank  of \Term{embedding} $\Z{}$, when increasing the penalty weight of covariance loss, especially in the early stage of training where a relatively large learning rate is used. We also note that CW with covariance loss using a penalty weight of 0.001 obtains an accuracy of $88.78\%$, slightly better than CW without covariance loss which has an accuracy of $88.50\%$.
	% We note that this observation  covariance loss can enhance the whitening extent and the rank of $\Z{}$ mainly in the early stage of training. The higher the weight, the stronger the effect. However, due to the constraint of covariance loss, when the learning rate begins to drop in 150th epoch with ‘step’ type, the rank and normalized stable-rank can hardly be improved or even decrease, which also happens to evaluation accuracy. Although the whitening extent of models with covariance loss is obviously higher than that without it at the beginning of training, they get much closer at the end. One intuitive interpretation is that the neural network can adjust itself adaptively so that the extent of whitening will not be too high to make learning difficult. 
%	For this reason, we will not use covariance loss in the whole training process, but could use it to make whitening more stable in the early stage.

	\section{Licenses of Datasets}
%	CIFAR-10 and CIFAR-100~\cite{2009_TR_Alex}
	
	ImageNet~\cite{2009_ImageNet} is subject to the ImageNet terms of access~\cite{ImageNet_contributors}.
	
    PASCAL VOC~\cite{2010_IJCV_VOC} uses images from Flickr, which is
    subject to the Flickr terms of use~\cite{Inc_Flickr}.
    
    The annotations of COCO~\cite{2014_ECCV_COCO} are under the Creative
    Commons Attribution 4.0 License. The images are subject
    to the Flickr terms of use~\cite{Inc_Flickr}.

\end{appendix}
\clearpage

%%%%%%%%%%%%%%%%%%%%%%%%%%%%%%%%%%%%%%%%%%%%%%%%%%%%%%%%%%%%
\end{document}